\tikzset{
  cut/.style={
    anchor=west
  },
  posi/.style={
    text=blue,anchor=west
  },
  neg/.style={
    text=red,anchor=west
  }
}
\makeatletter \newcommand{\gettikzxy}[3]{%
  \tikz@scan@one@point\pgfutil@firstofone#1\relax \edef#2{\the\pgf@x}%
  \edef#3{\the\pgf@y}%
} \makeatother
    \definecolor{ColorFPT}{RGB}{100, 190, 50}
	\definecolor{ColorXP}{RGB}{255, 200, 100}
	\definecolor{ColorNPH}{RGB}{255, 100, 100}
\definecolor{ColorFPT}{RGB}{100, 190, 50}
\colorlet{ColorXP}{orange!50}
\colorlet{ColorNPH}{red!50}
\newcommand{\poly}{\ensuremath{\operatorname{poly}}}
\newif\ifjour
\algnewcommand{\algorithmicforeach}{\textbf{for each}}
\newif\ifarxiv
\Crefname{theorem}{Theorem}{Theorems}
\crefname{theorem}{Thm.}{Thms.}
\Crefname{proposition}{Proposition}{Propositions}
\crefname{proposition}{Prop.}{Props.}
\Crefname{section}{Section}{Sections}
\crefname{section}{Sec.}{Secs.}
\newcommand{\appsymb}{$\bigstar$}
\newcommand{\appref}[1]{{\appsymb}}
\newcommand{\W}[1][abcde]{\text{\normalfont W[#1]}}
\newcommand{\appendixsection}[1]{%
	\ifarxiv{}\else{}
	\gappto{\appendixProofText}{\section{Additional Material for \Cref{#1}}\label{app:#1}}
	\fi{}
}
\newcommand{\toappendix}[1]{%
	\ifarxiv{}#1\else{}
	\gappto{\appendixProofText}
	{{
			#1
	}}
	\fi{}
}
\newcommand{\appendixproof}[2]{%
	\ifarxiv{}#2\else{}\gappto{\appendixProofText}
	{
		\subsection{Proof of \Cref{#1}}\label{proof:#1}
		#2
	}
	\fi{}
}
\theoremstyle{plain}
\newtheorem{theorem}{Theorem}[section]
\newtheorem{corollary}[theorem]{Corollary}
\newtheorem{observation}[theorem]{Observation}
\newtheorem{proposition}[theorem]{Proposition}
\theoremstyle{definition}
\newenvironment{customquote}
  {\list{}{\leftmargin=5pt\rightmargin=0pt}\item\relax}
  {\endlist}
\newcommand{\prob}[6]{%
  \needspace{3\baselineskip}
  \begin{customquote}
    \begin{labeling}{#6}%
    \item[#1]
      % \textit{#2} #3\\
      % \textit{#4} #5
    \item[\emph{#2}]#3
    \item[\emph{#4}]#5
    \end{labeling}%
  \end{customquote}%
}
\newcommand{\probdefopt}[3]{\prob{#1}{Input:}{#2}{Task:}{#3}{as}}
\newcommand{\hassenode}[5]{
  \begin{tabular}{@{}m{0pt}@{}|>{\centering\arraybackslash\columncolor{#4}}m{1.72cm}|>{\centering\arraybackslash}m{0.50cm}|>{\centering\arraybackslash\columncolor{#5}}m{1.72cm}|}
    \hline
    \rule{0pt}{0.5cm} & #1 & #2 & #3 \\
    \hline
  \end{tabular}
}
\newcommand{\combnode}[4]{
  \begin{tabular}{>{\centering\arraybackslash}m{2.1cm}}
    \vspace{0.1cm}#1\vspace{0.1cm}\\
    \hline
    \cellcolor{#4}\vspace{0.1cm}#2\vspace{0.1cm}\\
    \hline
    \vspace{0.1cm}#3\vspace{0.1cm}\\
  \end{tabular}
}
\newcommand{\combnodeu}[3]{
  \begin{tabular}{>{\centering\arraybackslash}m{2.1cm}}
    \hline
    \cellcolor{#3}\vspace{0.1cm}#1\vspace{0.1cm}\\
    \hline
    \vspace{0.1cm}#2\vspace{0.1cm}\\
  \end{tabular}
}
\newcommand{\combnodeb}[3]{
  \begin{tabular}{>{\centering\arraybackslash}m{2.1cm}}
    \vspace{0.1cm}#1\vspace{0.1cm}\\
    \hline
    \cellcolor{#3}\vspace{0.1cm}#2\vspace{0.1cm}\\
    \hline
  \end{tabular}
}
\newcommand{\OO}{\mathcal{O}}
\newcommand{\pLea}{\textsc{Decision Tree Learning}}
\newcommand{\pAdj}{\textsc{Threshold Adjustment}}
\newcommand{\pExc}{\textsc{Cut Exchange}}
\newcommand{\pFSDTlong}{\textsc{Fixed Structure Decision Tree}}
\newcommand{\pFSDT}{\textsc{FS-DT}}
\newcommand{\pFSFFDTlong}{\textsc{Fixed Structure Fixed Feature Decision Tree}}
\newcommand{\pFSFFDT}{\textsc{FSFF-DT}}
\newcommand{\pLS}{\textsc{DT-PLS}}
\newcommand{\pLSlong}{\textsc{Decision Tree Pruning and Local Search}}
\newcommand{\lpos}{\ensuremath{\textsf{blue}}}
\newcommand{\lneg}{\ensuremath{\textsf{red}}}
\newcommand{\default}{\textsf{default}}
\newcommand{\dimn}{\textsf{feat}}
\newcommand{\thr}{\textsf{thr}}
\newcommand{\cla}{\ensuremath{\textsf{cla}}}
\newcommand{\kadj}{\ensuremath{k_{\textsf{ad}}}}
\newcommand{\kexch}{\ensuremath{k_{\textsf{ex}}}}
\newcommand{\krepl}{\ensuremath{k_{\textsf{re}}}}
\newcommand{\krais}{\ensuremath{k_{\textsf{ra}}}}
\author{%
  Juha Harviainen\thanks{Equal contribution.} \\
  Department of Computer Science \\
  University of Helsinki \\
  Helsinki, Finland \\
  \texttt{juha.harviainen@helsinki.fi} \\
  \And
  Frank Sommer${}^\ast$ \\
  Institute of Computer Science \\
  Friedrich-Schiller Universität Jena \\
  Jena, Germany \\
  \texttt{frank.sommer@uni-jena.de} \\
  \AND
  Manuel Sorge${}^\ast$ \\
  Institute of Logic and Computation \\
  TU Wien \\
  Vienna, Austria \\
  \texttt{manuel.sorge@ac.tuwien.ac.at} \\
}
\title{Improving Decision Trees through the Lens of Parameterized Local Search}
\begin{document}

\maketitle

\begin{abstract}
  Algorithms for learning decision trees often include heuristic local-search operations such as (1) adjusting the threshold of a cut or (2) also exchanging the feature of that cut.
  We study minimizing the number of classification errors by performing a fixed number of a single type of these operations.
  Although we discover that the corresponding problems are NP-complete in general, we provide a comprehensive parameterized-complexity analysis with the aim of determining those properties of the problems that explain the hardness and those that make the problems tractable.
  For instance, we show that the problems remain hard for a small number $d$ of features or small domain size $D$ but the combination of both yields fixed-parameter tractability.
  That is, the problems are solvable in $(D + 1)^{2d} \cdot |\mathcal{I}|^{\OO(1)}$ time, where $|\mathcal{I}|$ is the size of the input.
  We also provide a proof-of-concept implementation of this algorithm and report on empirical results.
\end{abstract}

% \textbf{Title Ideas:}
% \begin{enumerate}
% \item Decision Tree Computation Through the Lens of Parameterized Local Search
% \item Decision Tree Learning Through the Lens of Parameterized Local Search
% \item Decision Tree Optimization Through the Lens of Parameterized Local Search
% \item Local Search for Decision Trees Through the Lens of Parameterized Algorithmics
% \item Parameterized Algorithmics of Local Search for Decision Trees
% \item Parameterized Complexity of Local Search for Decision Trees
% \item Are there any better decision trees nearby? -- Parameterized Decision Tree Local Search
% \item Decision Tree Postprocessing Through the Lens of Parameterized Local Search
% \item Improving Decision Trees with Parameterized Local Search
% \item Improving Decision Trees Through the Lens of Parameterized Algorithms
% \item Improving Decision Trees through the Lens of Parameterized Local Search
% \item The Parameterized Complexity Landscape for Decision Tree Local Search
% \end{enumerate}

%\textbf{using both operations as mutation:} \cite{JCK21a,KG05,JCK17,JCK21b,KG07,SY18,Kretowski08}

%\textbf{references using only threshold adjustment:} i didnt find anything

%\textbf{references using only cut exchange:} \cite{PK00,KG06}

%\cite{KP10} appears to combine cut exchange with pruning for mutation operations while having binary features (for the other references I did not check this)

%\textbf{other important references that do some form of local search for dts} \cite{SchidlerSzeider24}

\section{Introduction}

Decision trees classify and explain data by associating labels with subsets of the feature space, characterized by a set of inequalities \cite{Larose05,Murthy98,Quinlan86}. Not only are the models simple and efficient but also relevant for explainable AI because of their interpretability~\cite{HolzingerSMBS20,Rudin19}. %---see, for example, works of Rudin~\cite{Rudin19} or Holzinger et al.~\cite{HolzingerSMBS20}.
Many prominent algorithms for learning decision trees like CART~\cite{BreimanFOS84}, C4.5~\cite{Quinlan93}, C5.0, and J48~\cite{WittenFH11} utilize heuristics to decide when and where in the tree to add new internal nodes that \emph{cut} the data based on some chosen feature and a threshold value. To mitigate overfitting, one may afterwards remove cuts from the decision tree by performing different types of pruning operations such as replacing some subtree by a leaf node \cite{BreimanFOS84,Quinlan93,WittenFH11}. Typically, when and where these are applied is also chosen heuristically.

%\todo[inline]{Alternative intro version from here on.} 

\looseness=-1
After finding an initial solution a common and effective technique for optimization problems is to then perform \emph{local search}, that is, searching for a better solution within some local neighborhood under some distance metric.
Local search for decision trees is prevalent in the literature, in particular in simulated annealing approaches \cite{DS07,FPS00}, genetic algorithms as a mutator~\cite{JCK17,JCK21a,JCK21b,KP10,Kretowski08,KG05,KG07,SY18}, and SAT-based local search has recently shown strong performance for finding near-optimal decision trees~\cite{SchidlerSzeider24}.
However, local search is not applied in the standard decision-tree heuristics~\cite{BreimanFOS84,Quinlan93,WittenFH11}, raising the question whether there is a justification for this situation.
That is, what is the potential for local search to be effective for decision trees, in particular, heuristically computed ones?
To answer this question, we need algorithms that are capable of, given a decision tree and training data, to find the \emph{optimum} solution within a local neighborhood.
Unfortunately, we are not aware of any systematic analysis of the algorithmics of this local-search problem.
Contributing such an analysis is our main goal.

\looseness=-1
We focus on two natural types of local-search operations: Choose a cut and (1) \emph{adjust} the threshold of the cut or (2) also \emph{exchange} the feature associated with the cut.
Both have been studied extensively in heuristics~\cite{DS07,FPS00,JCK17,JCK21a,JCK21b,KP10,Kretowski08,KG05,KG07,SY18}.
We refer to searching the local neighborhood for an optimal modified tree 
%(with minimum-possible classification error)
as \pAdj\ and \pExc.
More specifically, we ask whether at most $k$~cuts can be modified (with adjustment or exchange) simultaneously such that the resulting tree has fewer errors than the input tree.
As we show, both problems are NP-complete; thus we need more detailed algorithmic analysis to be relevant to practice.
Thus, we identify the key parameters related to the local-search problems and characterize the influence of these parameters and most of their combinations on the complexity; see \Cref{fig:results}.
This continues a recent line of research on parameterized learning and improvement of decision trees~\cite{EibenOrdyniakPaesaniSzeider23,GZ24,HSSS25,Kobourov23,KKSS23,OPRS24,OrdyniakS21,SKSS24,KSSSS25}.

\looseness=-1
There are three main levels of influence that a parameter $p$ can have when a problem is NP-hard:
ideally (1) fixed-parameter tractability (FPT), that is, there is an algorithm with $f(p) \cdot |\mathcal{I}|^{\OO(1)}$ running time, or (2) W[1]-hardness and XP-tractability, that is, there is an algorithm with running time $f(p) \cdot |\mathcal{I}|^{f(p)}$ and it is likely not possible to remove the dependence of the exponent on $p$, and (3) paraNP-hardness, that is, even for constant values of $p$ the problem is NP-hard.
The parameters that we study are the number $n$ of training-data examples, number $d$ of features, the largest domain size~$D$, the size $s$ of the input tree, the number $k$ of local search operations, the number $\ell$ of nodes unaffected by local search, an upper bound $t$ on the number errors that we aim for, and the largest number $\delta_{\max}$ of features in which two examples of different classes differ;  see \Cref{fig:hasse_single}.
\footnote{See Ordyniak and Szeider~\cite[Table~1]{OrdyniakS21} and Staus et al.~\cite[Table~3]{SKSS24} for indication that $\delta_{\max}$ is small in relevant settings and see Harviainen et al.~\cite[Section 6]{HSSS25} for parameters $d$ and $D$.}

\looseness=-1
See \cref{fig:results} for overviews over our results.
Broadly, we observe that \pExc\ is at least as hard as \pAdj; in particular this is justified by reductions from the latter \pAdj\ to \pExc.
Both are also related to the problem of learning an optimal decision tree from scratch: Some of our results can be interpreted to show hardness of learning decision trees even when we restrict the structure of the desired trees.
Essentially, the associated reductions use scaffolding to construct a useless decision tree which then has to be made useful via changing its cuts with local-search operations where the number of local-search operations roughly corresponds to the size of the tree to be learned.
(In practice, however, the number of local search operations will be lower than the tree size to be learned.)
There are indeed parameter combinations where \pAdj\ is strictly harder than learning decision trees, and \pExc\ is strictly harder than \pAdj\ from a complexity point of view, see \cref{fig:results}.

\looseness=-1
On the positive side, several parameter combinations yield tractability:
For instance while both problems are unlikely to be \emph{fixed-parameter tractable} (FPT) in either the number~$d$ of features or the maximum number~$D$ of distinct values for a feature alone, their combination yields an FPT-algorithm.
This algorithm can also straightforwardly incorporate the parameterized decision-tree pruning algorithm of Harviainen et al.~\cite{HSSS25} on the same parameters to improve the decision tree with local search and pruning at the same time, where the pruning operations either (1) \emph{replace} a subtree by a new leaf or (2) \emph{raise} a subtree $T_v$ rooted at $v$ by substituting a subtree rooted at an ancestor of $v$ with $T_v$ (see \Cref{thm:adj+exc-fpt-d-D}).

To complement our theoretical findings, we study the potential benefits of local search, that is, is there room for improvement in the decision trees constructed by the heuristics?\footnote{The related source code for
replicating the experiments is on Zenodo~\cite{harviainen2025optimal}.}
While it is well known that optimally learning decision trees is NP-hard~\cite{HyafilRivest76}, the heuristics might still perform (almost) optimally on practical instances or at least be optimal in some local neighborhood of models. For instance, recent research~\cite{HSSS25} has---perhaps unexpectedly---suggested that the pruning heuristics employed by the commonly used libraries tend to be near-optimal on common benchmark datasets. 
%With our novel tractable algorithm, we can finally evaluate if the outputted decision trees for benchmark datasets are locally optimal, observing that TODO.
%\todo[inline]{J: Add summary of our empirical results}
We observe a similar phenomenon here suggesting near-optimality also locally, that is, while there is some room for improvement on the instances already by performing a single local search operation, the decrement in errors tends to be mild (see \Cref{tab:error_reduction}).

\begin{figure*}
  \centering
  \scalebox{0.7}{
  \begin{tikzpicture}
    [every node/.style={inner sep=2pt, anchor=center, rectangle, align=center}]
    
    \node[] (n) at (2, -0.5) {\hassenode{BF}{$n$}{BF}{ColorFPT}{ColorFPT}};
    
    \node[] (s) at (2, -2) {\hassenode{P\ref{prop-hardness-thr-adjust}}{$s$}{P\ref{prop-hardness-cut-exchange}}{ColorXP}{ColorXP}};
    
    \node[] (k) at (-4, -3.5) {\hassenode{T\ref{thm:adj-xp-k}}{$k$}{T\ref{thm:exc-xp-k}}{ColorXP}{ColorXP}};
    
    \node[] (l) at (2, -3.5) {\hassenode{C\ref{cor:adj-np-hard-const-ell-d-delta}}{$\ell$}{P\ref{prop-hardness-cut-exchange}}{ColorNPH}{ColorNPH}};
    
    \node[] (t) at (-4, -2) {\hassenode{T\ref{thm:adj-hard-par-k-const-d-t}}{$t$}{P\ref{prop-hardness-cut-exchange}}{ColorNPH}{ColorNPH}};
    
    \node[] (d) at (8, -2.75) {\hassenode{T\ref{thm:adj-fpt-d-D}}{$d$}{T\ref{thm:exc-fpt-d-D} \& T\ref{thm:exc-w-hard-d-t}}{ColorXP}{ColorXP}};
    
    \node[] (D) at (8, -1.25) {\hassenode{T\ref{thm:adj-hard-par-k-const-d-t}}{$D$}{P\ref{prop-hardness-cut-exchange}}{ColorNPH}{ColorNPH}};
    
    \node[] (delta) at (8, -4.25) {\hassenode{C\ref{cor:adj-np-hard-const-delta-d-t}}{$\delta_{\max}$}{P\ref{prop-hardness-cut-exchange}}{ColorNPH}{ColorNPH}};

    \node[draw,fill=ColorFPT, circle, inner sep=0.13cm] at (-5.5, -0.1) {};
    \node[draw,fill=ColorXP, circle, inner sep=0.13cm] at (-5.5, -0.7) {};
    \node[draw,fill=ColorNPH, circle, inner sep=0.13cm] at (-5.5, -1.3) {};
    \node[anchor=west] at (-5.2, -0.1) {\small FPT};
    \node[anchor=west] at (-5.2, -0.7) {\small XP and W[1]-hard};
    \node[anchor=west] at (-5.2, -1.3) {\small paraNP-hard};
    \draw[thick] (s) -- (k);
    \draw[thick] (s) -- (l);
    \draw[thick] (d) -- (delta);
    \draw[thick] (n) -- (t);
    \draw[thick] (n) -- (s);
    \draw[thick, dashed] (s) -- (d);
    \draw[thick] (n) -- (D);
  \end{tikzpicture}}
  \caption{A Hasse diagram of the parameters, where an edge between two parameters means that the lower one is upper bounded by some function of the upper one. The dashed edge applies only to the \pAdj{} problem. The color of the left box next to each parameter indicates the complexity for \pAdj{} and the color of the right box for \pExc{}, which happen to be the same for all considered single parameters. The texts of the boxes refer to the corresponding theorems, and BF stands for a brute-force algorithm.
  The figure shows that for example, both problems are W[1]-hard and in XP when parameterized by~$s$, and consequently they are at least W[1]-hard (potentially paraNP-hard) for all parameters smaller than~$s$ (for example~$\ell$) and have at least an XP-time algorithm (potentially FPT) for all parameters larger than~$s$ (for example~$n$).}\label{fig:hasse_single}
\end{figure*}

\looseness=-1
We proceed by fixing basic definitions and notation in~\Cref{sec:prelim}. In \Cref{sec:local-search}, we show that learning decision trees reduces to local search. 
\Cref{sec-algorithms} focuses on algorithms for both problems, and in \Cref{sec:hardness} we provide further hardness results for both problems.
Our main technical highlights are as follows:
\textbf{(1)} For the FPT-algorithm for tree size~$s$ and error bound~$t$ for \pAdj{}, in a first step we guess the changed cuts and the distribution of errors in the final tree and in a second step we rely on recursive binary-search to compute the optimal thresholds. 
Our other algorithms exploit structural properties via dynamic programming.
\textbf{(2)} Our most involved hardness result is the W-hardness for the number~$d$ (for both problems).
We reduce from \textsc{Multicolored Clique} and create two features per color class and an input decision tree~$T$ which has a regular structure.
However, we need to be very careful with the values of the examples in these features (especially for \pAdj{} since we cannot change the feature of any cut).
In \Cref{sec:experiments}, we summarize our experiments and their results.
Finally, we conclude with implications and potential research directions in \Cref{sec:conclusions}.  
%\fi

\section{Preliminaries}\label{sec:prelim}

In this section, we recall the basic definitions and terminology of decision trees and parameterized complexity.
Denote the set $\{1, 2, \dots, m\}$ by $[m]$ and the set $\{0, 1, \dots, m\}$ by $[0, m]$.

\paragraph{Decision trees.}
The set~$E$ of~$n$ labeled data points in the $d$-dimensional space $\mathds{R}^d$ is called \emph{examples}. 
 The set of possible class labels of~$E$ is denoted by $\Sigma$, which we assume to be $\{\lpos, \lneg\}$ throughout the paper. 
 The \emph{value} of the $i$th \emph{feature} of an example~$e$ is $e[i]$ and the label of the example is $\lambda(e)$. The pair $(E, \lambda)$ is the \emph{training data set}. For a feature~$i$ and a \emph{threshold} $x\in\mathds{R}$, we define two subsets of examples~$E_{\le}[i, x] \coloneqq \{e \in E \colon e[i] \le x\}$ and~$ E_{>}[i, x] \coloneqq E \setminus E_{\le}[i, x] = \{e \in E \colon e[i] > x\}$. 

A decision tree is a tuple~$(T, \dimn, \thr, \cla)$ with a rooted tree~$T$ on a set of nodes~$V = V(T)$, functions~$\dimn$ and $\thr$ that associate a feature~$\dimn(v) \in [d]$ and a threshold~$\thr(v) \in \mathds{R}$ to each internal node~$v \in V$, and a function~$\cla$ that associates a class label $\cla(v) \in \Sigma$ for each leaf node~$v \in V$. With minor abuse of terminology, we often call the rooted tree~$T$ a decision tree without explicitly mentioning the functions $\dimn$, $\thr$, and $\cla$. The internal nodes are also called \emph{cuts}. Each cut~$v \in V$ of the rooted tree~$T$ has two ordered children which we refer to as its \emph{left} and \emph{right} children.

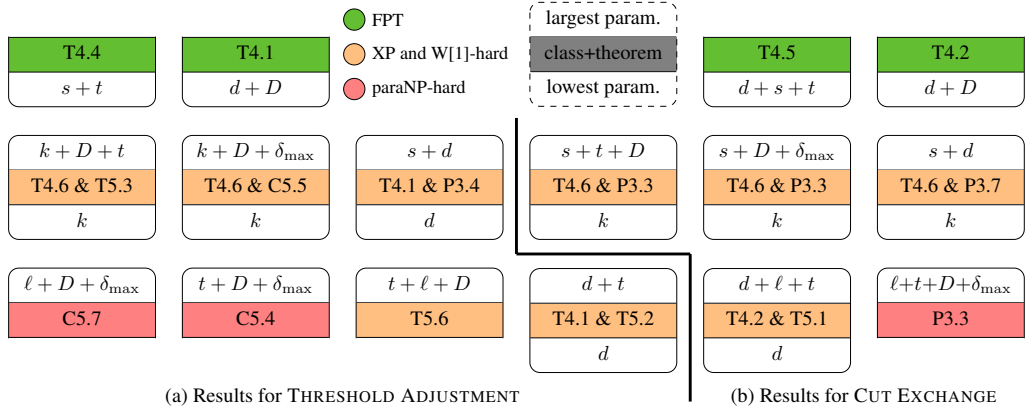
\begin{figure*}
    \centering
    \scalebox{0.77}{\begin{tikzpicture}
      [every node/.style={inner sep=0pt, anchor=center, rectangle}]
      
      \def\abstand{9.0} 
      
      \node[] (k) at (-3, -3) {\combnode{$k + D + t$}{T\ref{thm:adj-xp-k} \& T\ref{thm:adj-hard-par-k-const-d-t}}{$k$}{ColorXP}};
      \draw [rounded corners=.5em] (k.north west) rectangle (k.south east);

      \node[] (k2) at (0, -3) {\combnode{$k + D + \delta_{\max}$}{T\ref{thm:adj-xp-k} \& C\ref{cor:adj-w-hard-k-const-d-t}}{$k$}{ColorXP}};
      \draw [rounded corners=.5em] (k2.north west) rectangle (k2.south east);

      \node[] (s) at (3, -3) {\combnode{$s+d$}{T\ref{thm:adj-fpt-d-D} \& P\ref{prop-hardness-thr-adjust}}{$d$}{ColorXP}};
      \draw [rounded corners=.5em] (s.north west) rectangle (s.south east);
      
      \node[] (s2) at (6, -5.3) {\combnode{$d+t$}{T\ref{thm:adj-fpt-d-D} \& T\ref{thm:adj-w-hard-d-t}}{$d$}{ColorXP}};
      \draw [rounded corners=.5em] (s2.north west) rectangle (s2.south east);

      \node[] (st) at (-3, -1) {\combnodeu{T\ref{thm:adj-fpt-s-t}}{$s+t$}{ColorFPT}};
      \draw [rounded corners=.5em] (st.north east) -- (st.south east) -- (st.south west) -- (st.north west);

      \node[] (dD) at (0, -1) {\combnodeu{T\ref{thm:adj-fpt-d-D}}{$d+D$}{ColorFPT}};
      \draw [rounded corners=.5em] (dD.north east) -- (dD.south east) -- (dD.south west) -- (dD.north west);

      \node[] (l) at (-3, -5) {\combnodeb{$\ell + D + \delta_{\max}$}{C\ref{cor:adj-np-hard-const-ell-d-delta}}{ColorNPH}};
      \draw [rounded corners=.5em] (l.south west) -- (l.north west) -- (l.north east) -- (l.south east);

      \node[] (t) at (0, -5) {\combnodeb{$t + D + \delta_{\max}$}{C\ref{cor:adj-np-hard-const-delta-d-t}}{ColorNPH}};
      \draw [rounded corners=.5em] (t.south west) -- (t.north west) -- (t.north east) -- (t.south east);
      
      \node[] (l) at (3, -5) {\combnodeb{$t + \ell + D$}{T\ref{thm:adj-hard-par-t-const-ell-d}}{ColorXP}};
      \draw [rounded corners=.5em] (l.south west) -- (l.north west) -- (l.north east) -- (l.south east);

      \node[draw,fill=ColorFPT, circle, inner sep=0.13cm] at (1.7, -0.1) {};
      \node[draw,fill=ColorXP, circle, inner sep=0.13cm] at (1.7, -0.7) {};
      \node[draw,fill=ColorNPH, circle, inner sep=0.13cm] at (1.7, -1.3) {};
      \node[anchor=west] at (2., -0.1) {\small FPT};
      \node[anchor=west] at (2., -0.7) {\small XP and W[1]-hard};
      \node[anchor=west] at (2., -1.3) {\small paraNP-hard};

\draw[line width = 0.5mm] (4.5, -1.8) -- (4.5, -4.15) {};
\draw[line width = 0.5mm] (7.5, -4.15) -- (4.5, -4.15) {};
\draw[line width = 0.5mm] (7.5, -4.15) -- (7.5, -6.7) {}; 
\node[label={(a) Results for \pAdj}] at (1.5, -6.8){};     
\node[label={(b) Results for \pExc}] at (10.5, -6.8){};  
      
%%%%%%%%%%%%%%%%%%%%%%%%%%%%%%%%%%%%%%%%%%%%%%%%%%%%%%%%%%%%%%%%%%%%%%%%%%%%%%%%%%%
%%%%%% Cut Exchange %%%%%%%%%%%%%%%%%%%%%%%%%%%%%%%%%%%%%%%%%%%%%%%%%%%%%%%%%%%%%%%
%%%%%%%%%%%%%%%%%%%%%%%%%%%%%%%%%%%%%%%%%%%%%%%%%%%%%%%%%%%%%%%%%%%%%%%%%%%%%%%%%%%
      
      [every node/.style={inner sep=0pt, anchor=center, rectangle}]
      \node[] (s) at (-3 + \abstand, -3) {\combnode{$s+t+D$}{T\ref{thm:exc-xp-k} \& P\ref{prop-hardness-cut-exchange}}{$k$}{ColorXP}};
      \draw [rounded corners=.5em] (s.north west) rectangle (s.south east);

      \node[] (s2) at (0 + \abstand, -3) {\combnode{$s+D+\delta_{\max}$}{T\ref{thm:exc-xp-k} \& P\ref{prop-hardness-cut-exchange}}{$k$}{ColorXP}};
      \draw [rounded corners=.5em] (s2.north west) rectangle (s2.south east);

      \node[] (s3) at (3 + \abstand, -3) {\combnode{$s+d$}{T\ref{thm:exc-xp-k} \& P\ref{cor:exc-w-hard-s-d}}{$k$}{ColorXP}};
      \draw [rounded corners=.5em] (s3.north west) rectangle (s3.south east);

      \node[] (dD) at (3 + \abstand, -1) {\combnodeu{T\ref{thm:exc-fpt-d-D}}{$d+D$}{ColorFPT}};
      \draw [rounded corners=.5em] (dD.north east) -- (dD.south east) -- (dD.south west) -- (dD.north west);

      \node[] (sdt) at (0 + \abstand, -1) {\combnodeu{T\ref{thm:exc-fpt-d-s-t}}{$d+s+t$}{ColorFPT}};
      \draw [rounded corners=.5em] (sdt.north east) -- (sdt.south east) -- (sdt.south west) -- (sdt.north west);

      \node[] (Dlt) at (3 + \abstand, -5) {\combnodeb{$\ell + t + D + \delta_{\max}$}{P\ref{prop-hardness-cut-exchange}}{ColorNPH}};
      \draw [rounded corners=.5em] (Dlt.south west) -- (Dlt.north west) -- (Dlt.north east) -- (Dlt.south east);
      
      \node[] (dltdelta) at (0 + \abstand, -5.3) {\combnode{$d+\ell+t$}{T\ref{thm:exc-fpt-d-D} \& T\ref{thm:exc-w-hard-d-t}}{$d$}{ColorXP}};
      \draw [rounded corners=.5em] (dltdelta.north west) rectangle (dltdelta.south east);

      \node[] (explanation) at (-3 + \abstand, -0.7) {\combnode{largest param.}{class+theorem}{lowest param.}{gray}};
      \draw [rounded corners=.5em, dashed] (explanation.north west) rectangle (explanation.south east);
    \end{tikzpicture}
    }

  \caption{Summary of our results for combinations of parameters. In each box, the center cell indicates the complexity class (and references the corresponding theorem) which holds for all parameterizations bounded from below by the parameter of the bottom cell and from above by the parameter of the top cell.
  As a concrete example, the first box of the middle row states that \pAdj{} is W$[1]$-hard and in XP for, for example,~$k$, $k+D$, $k+t$, and~$k+D+t$. Note that \pAdj{} parameterized by~$t+\ell+D$ is W[1]-hard but XP-tractability is open.}
  \label{fig:results}
\end{figure*}

Each node~$v \in V$ of the decision tree~$T$ is then associated with a subset $E[T, v] \subseteq E$ of the examples, defined recursively as follows. For the root~$r$ of $T$, we let $E[T, r] = E$. For a cut~$v \in V$ whose left child is $u$ and right child is $w$, we let~$ E[T, u] = E[T, v] \cap E_{\le}[\dimn(u), \thr(u)] $ and~$ E[T, w] = E[T, v] \cap E_{>}[\dimn(w), \thr(w)]$. 
For conciseness, let $E[v] \coloneqq E[T, v]$ when $T$ is clear from the context.
Note that for any example~$e$, the set of nodes~$v$ for which $e \in E[v]$ is a path from the root of~$T$ to one of its leaves~$w$. We call $w$ the \emph{leaf of}~$e$ and say that~$T$ assigns~$e$ to the class~$\cla(w)$. If~$\lambda(e) = \cla(w)$, then~$e$ is \emph{correctly classified} (by~$T$), and otherwise it is \emph{misclassified} and called an \emph{error}.

\paragraph{Local search.}

In the present work, we consider two natural local operations that modify the decision tree~$(T, \dimn, \thr, \cla)$. The first operation is \emph{threshold adjustment}, where we choose one cut~$v \in V$ of the decision tree and a new threshold~$x \in \mathds{R}$, and obtain a new decision tree~$(T', \dimn', \thr', \cla')$ whose only difference to~$(T, \dimn, \thr, \cla)$ is that $\thr'(v) = x$. Similarly in our other operation, \emph{cut exchange}, we pick one cut~$v \in V$, a new feature~$i \in [d]$, and a new threshold~$x \in \mathds{R}$ to obtain a decision tree with~$\dimn'(v) = i$ and~$\thr'(v) = x$. 
We also allow for arbitrarily changing the labels of the leaves of the decision tree to accommodate the modified cuts better.

Given these two local search operations, the question is then how much performing them some number of times helps us reduce the number of misclassifications, potentially after relabeling any number of leaves. We thus consider the following problems:
\probdefopt{\pAdj{} / \pExc{}}{A training data set~$(E,\lambda)$, a decision tree~$T$ for $(E,\lambda)$, and~$k,t\in\mathds{N}$.}{Perform~$k$ threshold adjustments/cut exchanges to obtain a decision tree with at most~$t$ errors.}%\todo{Too similar to our previous paper?}
%\probdef{\pExc}{A training data set~$(E,\lambda)$, a decision tree~$T$ for $(E,\lambda)$, and~$k,t\in\mathbb{N}$.}{Can we perform~$k$ cut exchanges to obtain a decision tree with at most~$t$ errors?}

\paragraph{Reasonability.} To argue about the computational hardness of the problems in practice, we apply the notion of \emph{reasonability} of Harviainen et al.~\cite{HSSS25}. Roughly speaking, a reasonable decision tree should be learnable by a heuristic algorithm for decision tree learning. More precisely, they propose two rules properties that a reasonable decision tree should satisfy: (1) no leaf~$v$ should be empty, that is, $E[v] \neq \emptyset$, and (2) the class label of each leaf~$v$ should match the majority of examples~$e \in E[v]$. 

\paragraph{Parameterized complexity.} 

\looseness=-1 We study the complexities of these problems from the viewpoint of parameterized complexity \cite{CyFoKoLoMaPiPiSa2015,Downey95a,Downey95}, that is, we not only consider total encoding length~$|\mathcal{I}|$ of the input instance~$\mathcal{I}$ but also its properties. They are described by \emph{parameters}~$n$, $d$, $k$, and $t$ defined above, the \emph{size}~$s$ of the decision tree, the number of \emph{unmodified cuts} $\ell \coloneqq s - k$, the \emph{domain size}~$D \coloneqq \max_{i \in [d]} \big|\{e[i] \colon e \in E\}\big|$ and~$\delta_{\max}$, which is the maximum number of features where two examples of different classes have different values.
These parameters have also been observed to be relevant in the literature for learning \cite{GZ24,Kobourov23,OrdyniakS21} and pruning \cite{HSSS25} decision trees, and their relationships are illustrated in~\Cref{fig:hasse_single}. Note that $s \le n$ by reasonability and that $d \le n$ for threshold adjustment, since we can remove all features of the examples that do not appear in the initial decision tree. 
A problem is \emph{fixed-parameter tractable} (FPT) with respect to a parameter~$p$ if any instance~$\mathcal{I}$ of the problem is solvable in time~$\OO(f(p) \cdot \poly |\mathcal{I}|)$ for some function~$f$. For example, FPT in $n$ for \pAdj{} and \pExc{} follows by observing that there are at most $(s+1)^n \le n^n$ ways the examples can be partitioned to the leaves, and we can test in polynomial time for each partition whether it is obtainable with $k\le s\le n$ operations. A problem is \emph{slicewise polynomial} (XP) if it is solvable in time~$\OO(f(p)\cdot|\mathcal{I}|^{g(p)})$ for some functions~$f,g$.
Under the common assumption that FPT$ \neq $W[$i$], $i \geq 1$, one can show that a problem is not in FPT for some parameter~$p$
%by proving it to be W$[i]$-hard with respect to~$p$
by reducing some known W$[i]$-hard problem to it in time~$\OO(f(p) \cdot \poly |\mathcal{I}|)$. 
%See, for example, the textbook of Cygan et al.~\cite{CyFoKoLoMaPiPiSa2015} for a more comprehensive overview of parameterized complexity.

Tighter lower bounds can be proven conditional to the exponential time hypothesis (ETH)~\cite{impagliazzo_complexity_2001,impagliazzo_which_2001}, which states that \textsc{3-SAT} on $n$-variable formulas cannot be solved in $2^{o(n)}$~time. Essentially, one then shows that a fast algorithm to a problem would violate ETH through a reduction from \textsc{3-SAT} to the problem. 
The strong exponential time hypothesis (SETH)~\cite{impagliazzo_complexity_2001,impagliazzo_which_2001} similarly states that the satisfiability problem for $n$-variable formulas in conjunctive normal form cannot be solved in $\OO((2 - \epsilon)^n)$~time.

\section{From Computing Decision Trees to Local Search for Decision Trees}\label{sec:local-search} %
\appendixsection{sec:local-search}

In this section we provide parameterized reductions from restricted versions of the computation of optimal decision trees to the associated local search problems for decision trees with respect to the two operations threshold adjustment and cut exchange, allowing us to transfer hardness results.

\paragraph{Computation of minimal-size decision trees.}
Initially, we reconsider the problem of computing a minimal-size decision tree \pLea{} and strengthen an existing hardness result which will also hold for one of our new restricted versions and subsequently also for \pExc.
In \pLea{} the input consists of a training data set~$(E,\lambda)$ and integers~$s,t\in \mathds{N}$, and the task is to compute a decision tree~$T$ with at most $s$~inner nodes making at most $t$~errors.
%
%%%%%%%%%%%%%%%%%%%%%%%%%%%%%%%%%%%%%%%%%%%%%%%%%%%%%%%%%%%%%%%%%%%%%%%%%%%%%%%%%%%%%%%%%%%%%%%%%%%%%%%%%%%%%%%%%%%%%%%%%%%%%%%%%%%%%%%%%%%%%%%%%%%%%%%%%%%%%%%%%%%%%%
%\probdefopt{\pLea\ }{A training data set~$(E,\lambda)$ and~$s,t\in \mathds{N}$.}{Compute a decision tree~$T$ with at most $s$~inner nodes making at most $t$~errors.}
%
%
%
%
Gahlawat and Zehavi~\cite[Thm.~1]{GZ24} showed that \pLea{} is W[1]-hard for~$s$ and cannot be solved in $|\mathcal{I}|^{o(s)}$~time unless the ETH fails, even if~$\delta_{\max}=3$.
Here, we strengthen their result by ensuring that parameter~$D$ is constant, while arguably being simpler.

\begin{theorem}[\appref{thm-learning-hard-s+D+delta}]
\label{thm-learning-hard-s+D+delta}
\pLea{} is W[1]-hard for~$s$ and cannot be solved in $|\mathcal{I}|^{o(s)}$~time unless the ETH fails, even if~$\delta_{\max}=2$ and~$D=2$.
\end{theorem}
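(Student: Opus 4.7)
The plan is to reduce from \textsc{Multicolored Clique}~(MCC), which is W[1]-hard in the clique size~$k$ and not solvable in $n^{o(k)}$ time under the ETH. Given an MCC instance $(G, k)$ with $G = (V_1 \sqcup \cdots \sqcup V_k, E)$ and $|V_i| = N$, I would introduce a binary feature $f_{i,v}$ for each color class $i \in [k]$ and vertex $v \in V_i$; this gives $D = 2$. For the training data, I would add one positive example in which all features equal~$0$ together with, for each ``bad'' pair of features $(f_{i,u}, f_{j,v})$---either $i = j$ and $u \neq v$ (intra-class) or $i \neq j$ with $\{u,v\} \notin E$ (cross-class non-edge)---a negative example with those two features set to~$1$ and every other feature set to~$0$. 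Every positive-negative pair then differs in exactly two positions, so $\delta_{\max} = 2$.

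Setting $s = k(N-1)$ and $t = 0$, I would prove that a tree of size~$s$ with zero errors exists if and only if $G$ has a multicolored $k$-clique. The key observation is that the cut-feature set~$S$ of any zero-error tree must hit every bad pair, so its complement $F \setminus S$ has size $kN - s = k$, contains at most one feature per class (by the intra-class negatives), and has every cross-class pair corresponding to an edge (by the non-edge negatives)---so $F \setminus S$ encodes a multicolored $k$-clique. Conversely, given such a clique $\{v_1, \ldots, v_k\}$, the set $S = \{f_{i,v} : v \neq v_i\}$ can be arranged as a caterpillar-shaped tree whose path sends the all-zero positive example to a positive leaf and each negative example to a negative leaf via the first cut where it carries a~$1$.

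The main obstacle will be matching the tight $|I|^{o(s)}$ ETH lower bound. Since $s = k(N-1)$ depends on the unbounded quantity~$N$, the construction above is not by itself an FPT reduction, and the polynomial blow-up between~$k$ and~$s$ does not immediately carry the $n^{\Omega(k)}$ lower bound for MCC over to $|I|^{o(s)}$ for \pLea{}. Overcoming this hurdle likely requires either tightening the reduction so that $s = \Theta(k)$---for instance by introducing a small number of ``scaffolding'' features and examples whose cuts are forced in every zero-error tree, so that only $\Theta(k)$ cuts remain to encode the clique---or reducing from a variant W[1]-hard problem (such as a bounded-degree or compact version of MCC) whose parameter already scales linearly with the target tree size. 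All remaining steps (verifying $D = 2$ and $\delta_{\max} = 2$, polynomial time of the construction, and the two directions of the equivalence) are routine once this parameter-tightening is in place.
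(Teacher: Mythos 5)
There is a genuine gap, and you have correctly identified it yourself without closing it: your reduction sets $s = k(N-1)$, which is not bounded by any function of the \textsc{Multicolored Clique} parameter~$k$ alone, so it is not a parameterized reduction with respect to~$s$ and establishes neither W[1]-hardness for~$s$ nor the $|\mathcal{I}|^{o(s)}$ lower bound. The root cause is your choice $t=0$: with a zero error budget the tree must separate \emph{every} negative example from the positive one, which forces the feature set of the tree to be a full hitting set of all bad pairs, and such a hitting set necessarily has size $kN-k$. No amount of scaffolding rescues this while keeping $t=0$, because the complement-of-a-clique encoding intrinsically requires almost all features to appear as cuts. Your two proposed escape routes are left entirely unexecuted, and the first one (forcing $\Theta(kN)$ cuts ``for free'' so that only $\Theta(k)$ remain) does not make sense for \pLea{}, where the tree is built from scratch and nothing is forced except through the error count.

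The paper's proof avoids this by reducing from \textsc{Partial Vertex Cover} (W[1]-hard in the solution size~$\kappa$) and, crucially, by using a \emph{nonzero} error budget: one binary feature per vertex, one negative example per edge (with 1s at its two endpoints), $n^2$ identical all-zero positive examples, $s \coloneqq \kappa$ and $t \coloneqq m - \tau$. A path of $\kappa$ cuts selects $\kappa$ vertices; edges covered by the selection are classified correctly and the up to $m-\tau$ uncovered edges are absorbed by the error budget. Since $s=\kappa$ exactly, both the W[1]-hardness and the $|\mathcal{I}|^{o(s)}$ ETH bound transfer immediately, and $D=\delta_{\max}=2$ hold by construction. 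If you want to salvage your approach, the lesson is that the error budget~$t$ is the mechanism that lets a small tree coexist with many unseparated negative examples; insisting on $t=0$ is what inflates~$s$.
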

\appendixproof{thm-learning-hard-s+D+delta}{
\begin{proof}
We reduce from \textsc{Partial Vertex Cover} where we are given a graph~$G$ and two integers~$\kappa$ and~$\tau$ and the goal is to select at most $k$~vertices~$S$ of~$G$ such that at least $\tau$~edges are covered by~$S$.
\textsc{Partial Vertex Cover} is W[1]-hard with respect to~$\kappa$~\cite{Guo07}, and unless the ETH fails, \textsc{Partial Vertex Cover} cannot be solved in $f(\kappa)\cdot n^{o(\kappa)}$~time~\cite{CyFoKoLoMaPiPiSa2015}. 

We create one binary feature~$d_v$ for each vertex~$v\in V(G)$.
For each edge~$e=uw\in E(G)$ we create one $\lneg$~example~$e_{uw}$ which has value~1 in the two features~$d_u$ and~$d_w$ which are corresponding to its endpoints~$u$ and~$w$, and value~0 in each other feature.
Moreover, we add $\lpos$~examples~$b_1, \ldots, b_{n^2}$ which has value~0 in each feature.
Finally, we set~$s\coloneqq\kappa$ and~$t\coloneqq m-\tau$, where~$m\coloneqq |E(G)|$.
Observe that~$\delta_{\max}=2=D$.
Note that all $\lpos$~examples have the same coordinates in each feature. 
In the end we argue how we can modify the instance such that each pair of ($\lpos$) examples differs in at least one feature.
%%%%%%%%%%%%%%%%%%%%%%%%%%%%%%%%%%%%%%%%%%%%%%%%%%%%%%%%%%%%%%%%%%%%%%%%%%%%%%%%%%%%%%%%%%%%%%%%%%%%%%%%%%%%%%%%%%%%%%%%%%%%%%%%%%%%%\todo{F: How do we want to deal with this 'issue' in general?\newline\newline J: I'd allow identical examples since real-world data could have those.}

We now show that~$G$ has a vertex set~$S$ of size at most~$\kappa$ covering $\tau$~edges if and only if there exists a decision tree~$t$ of the \pLea{} instance of size~$s$ with at most $t$~errors.

$(\Rightarrow)$ Let~$S$ be a solution of~$(G,k,\tau)$.
Let~$T$ be the decision tree whose inner nodes are a path and where there is a one-to-one correspondence between vertices in~$S$ and the cuts of the form~$d_x> 0$  which are in~$T$.
Each left subtree of these cuts leads to a $\lneg$~leaf and the unique right leaf is~$\lpos$.
Now, observe that each $\lneg$~edge example~$e_{uw}$ corresponding to an edge~$uw$ which has at least one endpoint in~$S$ is put in one of the $\lneg$~leaves.
Moreover, all $\lpos$~examples end up in the unique $\lpos$~leaf.
Since at least $\tau$~edges are incident with at least one vertex of~$S$, we conclude that $T$ has the desired properties.

$(\Leftarrow)$
Let~$T$ be a minimal decision tree with at most $s=\kappa$~inner nodes making at most $t=m-\tau$~errors.
Since all $\lpos$~examples have the same coordinates in all features, it is safe to assume that the inner nodes of~$T$ are a path such that exactly one leaf of the last cut in~$T$ is a $\lpos$~leaf and all other leaves are~$\lneg$.
Moreover, since each feature is binary, we conclude that all cuts of~$T$ use different features.
Now let~$S$ be the set of vertices corresponding to features used in~$T$.
Since~$T$ makes at most $t=m-\tau$~errors, and since all $n^2>m$~$\lpos$~examples are indistinguishable, at most $t=m-\tau$~$\lneg$~examples are misclassified by~$T$.
Consequently, $\tau$~$\lneg$~edge examples are correctly classified by~$T$.
By construction each edge corresponding to one of these edge examples has at least one endpoint in~$S$ and thus~$S$ covers at least $\tau$~edges of~$G$.
\end{proof}
}

\paragraph{Computation of structure-restricted decision trees.}
To obtain hardness results for \pExc{}, we reduce from the problem of computing a decision tree with a \emph{fixed structure}, that is, the underlying tree of the decision tree~$T$ is fixed.
In other words, we are given a tree~$Q$ and need to assign cuts to each inner node of~$Q$.
Moreover, for \pAdj{}, we reduce from the problem of computing a decision tree with a \emph{fixed structure} and a \emph{fixed feature assignment}, that is, the function~$\dimn(\cdot)$ is given, mapping each inner node of the fixed tree~$Q$ to a feature of~$E$.
%Thus, we consider the following two learning problems.
%
%
%\probdefopt{\pFSDTlong\ (\pFSDT)}{A training data set~$(E,\lambda)$, a tree~$Q$, and~$t\in \mathds{N}$.}{Assign features and thresholds to all inner nodes of~$Q$, and classes to the leaves such that the resulting decision tree~$T$ makes at most $t$~errors.}
%
%\probdefopt{\pFSFFDTlong\ (\pFSFFDT)}{A training data set~$(E,\lambda)$, a tree~$Q$, a function~$\dimn(\cdot)$ with fixed feature assignment, and~$t\in \mathds{N}$.}{Assign thresholds to all inner nodes of~$Q$, and classes to the leaves such that the resulting decision tree~$T$ makes at most $t$~errors.}
%
%
%
%
%
%

We consider the following two learning problems:
(1) In \pFSDTlong\ (\pFSDT) the input consists of a training data set~$(E,\lambda)$, a tree~$Q$, and~$t\in \mathds{N}$, and the task is to assign features and thresholds to all inner nodes of~$Q$, and classes to the leaves such that the resulting decision tree~$T$ makes at most $t$~errors.
(2) In \pFSFFDTlong\ (\pFSFFDT) the input consists of a training data set~$(E,\lambda)$, a tree~$Q$, a function~$\dimn(\cdot)$ with fixed feature assignment, and~$t\in \mathds{N}$, and the task is to assign thresholds to all inner nodes of~$Q$, and classes to the leaves such that the resulting decision tree~$T$ makes at most $t$~errors.

Observe that \pFSDT{}/\pFSFFDT{} corresponds to the special case of \pExc/\pAdj{} where~$k=s$ if we assign some dummy cuts/thresholds to all inner nodes and some dummy classes to all leaves of the tree~$Q$.
We formalize that as the following observation and use it to obtain hardness results for both \pExc{} and \pAdj{} via hardness results for \pFSDT{} and \pFSFFDT, respectively.
%Hence, we obtain the following.

\begin{observation}
\label{obs-learning-to-local-search}
There is a (parameterized) reduction from \pFSDT{} to \pExc{} and from \pFSFFDT{} to  \pAdj{} where~$k=s$.
\end{observation}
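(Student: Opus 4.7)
The plan is to formalize the authors' own hint by padding a fixed-structure learning instance with arbitrary ``dummy'' data to turn it into a valid decision tree whose local-search budget is just large enough to realize any desired cut assignment. Concretely, given an instance $(E,\lambda,Q,t)$ of \pFSDT{}, I build a \pExc{} instance as follows: let $s$ be the number of inner nodes of $Q$, pick an arbitrary feature $\dimn(v)\in[d]$ and arbitrary threshold $\thr(v)\in\mathds{R}$ for each inner node $v$ of $Q$, pick an arbitrary class $\cla(w)\in\{\lpos,\lneg\}$ for each leaf $w$, keep the training data $(E,\lambda)$ unchanged, keep the error bound $t$, and set $k:=s$. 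For \pFSFFDT{} I follow the same recipe to build a \pAdj{} instance, except that the prescribed feature assignment $\dimn(\cdot)$ is used verbatim at the inner nodes and only the thresholds are dummies. Both constructions run in polynomial time and copy the training data, so every parameter of the form $s,k,d,D,t,\delta_{\max}$ is preserved (in particular $k=s$ on the output side), which makes them parameterized reductions for any combination of these parameters.

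For the forward direction I argue as follows. Any feasible \pFSDT{} (resp.\ \pFSFFDT{}) tree $T^\star$ has the same underlying rooted tree $Q$ as the constructed $T$ and, in the fixed-feature case, already agrees with $T$ on $\dimn(\cdot)$. Thus $T^\star$ can differ from $T$ only in the thresholds (and, for \pExc{}, the features) assigned to the inner nodes and in the leaf labels. Performing one cut exchange (resp.\ threshold adjustment) at each inner node of $Q$ uses exactly $s=k$ operations and converts each cut of $T$ into the corresponding cut of $T^\star$; the leaf relabeling is permitted for free by the definitions of \pExc{} and \pAdj{}. The resulting tree coincides with $T^\star$ and therefore makes at most $t$ errors.

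For the backward direction the key observation is that neither local-search operation changes the underlying rooted tree of $T$, and a threshold adjustment additionally leaves $\dimn(\cdot)$ untouched. Consequently, after any sequence of at most $s$ such operations the output tree still has underlying tree $Q$ and, in the \pAdj{} case, still realizes the prescribed feature map~$\dimn(\cdot)$. Reading off its thresholds (and, for \pExc{}, its features) together with the final leaf labels therefore yields a feasible solution of \pFSDT{} or \pFSFFDT{}, respectively, with at most $t$ errors.

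The only real thing to check is bookkeeping: that no structural assumption implicit in the \pExc{}/\pAdj{} input (for example the reasonability notion discussed earlier) is broken by using a dummy initial tree, and that every parameter relevant for subsequent hardness transfers is genuinely preserved. Since the problem definitions of \pExc{} and \pAdj{} impose no reasonability requirement on the \emph{input} tree, and the listed parameters are preserved verbatim by construction, these checks are immediate; I do not anticipate any substantial obstacle.
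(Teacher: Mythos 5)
Your proposal is correct and matches the paper's own (very brief) justification: the paper likewise obtains the reduction by assigning dummy cuts/thresholds to the inner nodes and dummy classes to the leaves of $Q$ and setting $k=s$, relying on the fact that neither operation changes the underlying tree (nor, for threshold adjustment, the feature map). Your two-directional argument and parameter bookkeeping simply spell out what the paper leaves implicit.
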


%\todo[inline]{F: define \pHS{} in prelims since we use it a few times?}

%Next, our goal is to use \Cref{obs-learning-to-local-search} to obtain hardness results for both \pExc{} and \pAdj{} via hardness results for \pFSDT{} and \pFSFFDT.

\paragraph{\pFSDT{} and \pExc.}
Note that in the reduction behind \Cref{thm-learning-hard-s+D+delta} the structure of any decision tree~$T$ which is a solution is essentially fixed: 
The inner nodes of~$T$ always form a path, exactly one leaf of the last cut in~$T$ is~$\lpos$, and all other leaves are~$\lneg$.
Consequently, the reduction behind \Cref{thm-learning-hard-s+D+delta} also works for \pFSDT.
%Ordyniak and Szeider~\cite[Thm.~2]{OrdyniakS21} showed by a reduction from the NP-hard \pHS{} problem that for a given classification instance~$(E,\lambda)$ computing a decision tree~$T$ with at most $s$-inner nodes is NP-hard, even if~$t=0$.
%Roughly speaking, in their reduction they add one $\lpos$~example for each set and one $\lneg$~dummy example.
%Hence, every minimal decision tree for the corresponding instance is a path where all except one leaf are~$\lpos$.
%Consequently, the reduction of Ordyniak and Szeider~\cite[Thm.~2]{OrdyniakS21} also shows NP-hardness for \pFSDT{}.
Since other hardness reductions for \pLea{} enforce similar structures of the resulting decision tree, we obtain the following.

%\todo[inline]{F: I think once (during the pruning project) we add an idea to adapt the PVC reduction for learning to yield W[1]-hardness for~$s$ even if $\delta$ and $D$ are constant. Maybe we should give this result here, to strengthen 3.?}

%\todo[inline]{F: not now: in another (unpublished) paper we provide W[1]-hardness for~$s+d$; don't forget to mention this here at some point!

%In some sense it would be nice to already mention this result here to use the ETH-lower bound $n^{o(s+d)}$ to argue that our algorithms are tight...}

\begin{proposition}[\appref{prop-hardness-cut-exchange}]
\label{prop-hardness-cut-exchange}
Even if the subgraph induced by the inner nodes is a path, \pExc{}

1. is W[1]-hard for~$s$ and cannot be solved in $|\mathcal{I}|^{o(s)}$~time unless the ETH fails, even if $\ell=0$, $D=2$, and~$\delta_{\max}=3$.

2. is NP-hard even if~$D=2$, $\delta_{\max}=2$, $\ell=0$, and~$t=0$,

3. is W[2]-hard for~$s$, cannot be solved in $|\mathcal{I}|^{o(s)}$~time unless the ETH fails, and cannot be solved in~$\OO(n^{s-\varepsilon})$ for any~$\varepsilon>0$ unless the SETH fails, even if~$D=2$, $\ell=0$, and~$t=0$,
%\end{enumerate}
\end{proposition}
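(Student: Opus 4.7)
The plan is to invoke Observation~\ref{obs-learning-to-local-search}, which reduces \pFSDT{} to \pExc{} with $k=s$ and hence $\ell=0$, without blowing up $D$, $\delta_{\max}$, $t$, or $s$ beyond a small additive constant. Thus it suffices to establish each of the three statements for \pFSDT{} restricted to instances whose tree $Q$ has its inner nodes induced as a path.

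For Part~1, I would reuse the construction from the proof of Theorem~\ref{thm-learning-hard-s+D+delta}: its \textsc{Partial Vertex Cover}-based reduction already yields \pLea{} instances in which every optimal decision tree has its inner nodes forming a path with the unique $\lpos$-leaf at the bottom. Fixing $Q$ to be precisely this path turns the reduction into one from \textsc{Partial Vertex Cover} to path-structured \pFSDT{}, thereby inheriting both the W[1]-hardness in $s=\kappa$ and the ETH-based $|\mathcal{I}|^{o(s)}$ lower bound. The bound $D=2$ is preserved, and the slight relaxation of $\delta_{\max}$ from $2$ to $3$ accommodates the auxiliary feature that Observation~\ref{obs-learning-to-local-search} may need to furnish an initial decision tree.

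For Parts~2 and~3, the natural source is \textsc{Hitting Set} parameterized by solution size $\kappa$, which is W[2]-hard and admits matching ETH and SETH lower bounds (the latter via the variant derived from $k$-\textsc{Dominating Set}). Given a universe $U=[d]$ and sets $S_1,\dots,S_m$, I would introduce one binary feature per element of $U$; one $\lneg$ example $e_j$ per set $S_j$ with value $1$ exactly on coordinates in $S_j$ and $0$ elsewhere; and a single $\lpos$ example with all-zero coordinates. The tree $Q$ consists of a path on $\kappa$ inner nodes $v_1,\dots,v_\kappa$ such that $v_{i+1}$ is the left child of $v_i$ for $i<\kappa$, the right child of each $v_i$ is a leaf, and $v_\kappa$ has two leaf children. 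Using threshold~$0$ everywhere, the $\lpos$ example always reaches the bottom-left leaf (pinning its label to $\lpos$), while a $\lneg$ example $e_j$ reaches that leaf iff the chosen features $f_1,\dots,f_\kappa$ do not hit $S_j$. Hence a $t=0$ solution exists iff the chosen features form a hitting set of size $\kappa=s$, which gives Part~3. For Part~2, restricting to \textsc{Vertex Cover} (all $|S_j|=2$) additionally yields $\delta_{\max}=2$ and NP-hardness via the classical hardness of \textsc{Vertex Cover}.

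The main obstacle is ruling out that some unintended assignment of features, thresholds, or leaf labels to $Q$ accidentally achieves $t=0$. This is manageable here because binary features admit essentially one informative threshold, the all-zero $\lpos$-example pins the bottom-left leaf to label $\lpos$, and placing the same feature at two inner nodes can only coarsen the reachable partition. Obtaining the SETH-tight $\Omega(n^{s-\varepsilon})$ lower bound for Part~3 requires starting from a SETH-hard variant of \textsc{Hitting Set} in which the universe size controls $n$ and the solution size exactly matches $s$, which is standard via $k$-\textsc{Dominating Set}.
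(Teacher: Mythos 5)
Your proposal is correct and follows essentially the same route as the paper: both reduce via Observation~\ref{obs-learning-to-local-search} from the fixed-structure learning problem, use the \textsc{Partial Vertex Cover} construction of Theorem~\ref{thm-learning-hard-s+D+delta} for Part~1, and use \textsc{Vertex Cover}/\textsc{Hitting Set} reductions with path-shaped trees for Parts~2 and~3. The only difference is that the paper obtains Parts~2 and~3 by citing Ordyniak and Szeider's Theorems~2 and~3 (observing that their reductions already force the path structure), whereas you reconstruct those reductions explicitly; your reconstruction and its correctness argument (every $\lneg$ set-example must be split off from the all-zero $\lpos$ example by a cut on a feature hitting its set) are sound.
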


\appendixproof{prop-hardness-cut-exchange}{
\begin{proof}
First, observe that all three results hold for \pFSDT:
1. holds due to \Cref{thm-learning-hard-s+D+delta}.
Similarly, hardness results shown for computing minimal-size decision trees were the induced subgraph of the inner nodes form a path and exactly one leaf is~$\lpos$ and all other leafs are~$\lneg$ (or vice versa) by reductions from \textsc{Vertex Cover} and \textsc{Hitting Set} instead of \textsc{Partial Vertex Cover} are known.
More precisely, 2. was shown by Ordyniak and Szeider~\cite[Thm.~2]{OrdyniakS21}, and 3. was proven by Ordyniak and Szeider~\cite[Thm.~3]{OrdyniakS21}.
Second, together with \Cref{obs-learning-to-local-search} we obtain analog hardness results for \pExc.
\end{proof}
}

\paragraph{\pFSFFDT{} and \pAdj.}
Next, for \pAdj{} our goal is to show the following. (Running time lower bounds herein hold when assuming the ETH.)

\begin{proposition}
[\appref{prop-hardness-thr-adjust}]
\label{prop-hardness-thr-adjust}
  Even if the subgraph induced by the inner nodes is a path, $\ell = 0$, and $\delta_{\max}$ is a constant, \pAdj\ is W[1]-hard wrt. $s + d$ and cannot be solved in $|\mathcal{I}|^{o(s+d)}$~time.
\end{proposition}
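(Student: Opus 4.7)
The plan is to apply Observation~\ref{obs-learning-to-local-search}: reducing \pFSFFDT{} to \pAdj{} with $k = s$ (and hence $\ell = 0$) leaves the task of proving the matching hardness for \pFSFFDT{} on path-structured trees with constant $\delta_{\max}$, parameterized by $s+d$. I would reduce from \textsc{Multicolored Clique} (MC), which is W[1]-hard in the number $k$ of color classes and admits no $|\mathcal{I}|^{o(k)}$-time algorithm under ETH. The construction is designed to yield $d = 2k$ and $s = 2k$, so that $s+d = \OO(k)$, which is what transports the ETH-based lower bound.

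Given an MC instance with color classes $V_i = [n]$, I would introduce two features $f_i^{\downarrow}, f_i^{\uparrow}$ per color class and a path of inner nodes $c_1^{\downarrow}, c_1^{\uparrow}, \ldots, c_k^{\downarrow}, c_k^{\uparrow}$ with the obvious fixed feature assignment. Orientations of the path edges are chosen so that a single $\lpos$ leaf lies at the far end of the path; an example reaches it iff $\thr(c_i^{\downarrow}) < x[f_i^{\downarrow}]$ and $x[f_i^{\uparrow}] \leq \thr(c_i^{\uparrow})$ for every $i$. Writing $S_i := V_i \cap (\thr(c_i^{\downarrow}), \thr(c_i^{\uparrow})]$, the two thresholds at color $i$ essentially pick a (set of) vertices, and I will use this to encode vertex selection.

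Three types of examples are added, each taking ``pass-through'' values ($n$ in $f_\ell^{\downarrow}$, $1$ in $f_\ell^{\uparrow}$) in features of colors they do not explicitly reference: (i) a $\lneg$ non-edge example per $(u,v) \notin E(G)$ with $u \in V_i, v \in V_j$, $i \neq j$, carrying $u$ in both features of color $i$ and $v$ in both features of color $j$; (ii) a $\lpos$ vertex dummy $y_{i,v}$ per $(i,v)$, carrying $v$ in both features of color $i$; and (iii) a $\lneg$ spread penalizer $w_{i,v,v'}$ for each $v < v'$ in $V_i$, carrying $v$ in $f_i^{\downarrow}$ and $v'$ in $f_i^{\uparrow}$. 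Setting $t = k(n-1)$, the forward direction is immediate: ``clique thresholds'' $\thr(c_i^{\downarrow}) = v_i^* - \tfrac 12$, $\thr(c_i^{\uparrow}) = v_i^* + \tfrac 12$ yield $|S_i| = 1$ and misclassify exactly the $k(n-1)$ unselected dummies.

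The main obstacle will be the backward direction: showing that whenever $G$ has no size-$k$ clique, every threshold choice exceeds $t$ errors. For permissive thresholds (i.e.\ $\thr(c_\ell^{\downarrow}) < n$ and $\thr(c_\ell^{\uparrow}) \geq 1$ for every $\ell$), the total error count simplifies to $kn - \sum_i |S_i| + \sum_i \binom{|S_i|}{2} + NE$, where $NE$ is the number of non-edge examples reaching the $\lpos$ leaf. Using $\binom{s}{2} \geq s-1$ absorbs the dummy deficit from $|S_i| > 1$, and the no-clique hypothesis forces $NE \geq 1$ in the borderline subcase where all $|S_i| \leq 2$ (otherwise the cross-color structure on $\bigcup_i S_i$ would contain a $k$-clique); any non-permissive threshold already contributes $\geq kn > t$ dummy errors. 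Finally, $s + d = 4k$ transports the $|\mathcal{I}|^{o(k)}$ bound for MC to the $|\mathcal{I}|^{o(s+d)}$ bound for \pFSFFDT{} (and hence for \pAdj{} via Observation~\ref{obs-learning-to-local-search}), while each pair of oppositely classed examples differs only in features of the (at most three) color classes they both reference, giving $\delta_{\max} \leq 6$.
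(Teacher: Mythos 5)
Your high-level route is the same as the paper's: reduce \textsc{Multicolored Clique} to \pFSFFDT{} on a path of $2k$ inner nodes with two features per color class, then invoke \Cref{obs-learning-to-local-search}. The difference is that the paper obtains the \pFSFFDT{} hardness by adapting the existing \textsc{DTRais$_=$} reduction of Harviainen et al., whereas you give a self-contained construction with your own gadgets (vertex dummies, spread penalizers, non-edge examples) and budget $t = k(n-1)$.

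There is, however, a genuine gap in the backward direction: you never account for the fact that the leaf labels are part of the solution. In \pFSFFDT{} the solver assigns classes to the leaves, and \pAdj{} explicitly permits arbitrary relabeling of leaves; the paper's underlying reduction pins the labels down with dedicated forcing examples and says so explicitly. Your error formula $kn - \sum_i |S_i| + \sum_i \binom{|S_i|}{2} + NE$ is valid only for the one canonical labeling (far-end leaf \lpos{}, all others \lneg{}). With relabeling, the relevant quantity is $\sum_L \min(\#\lpos(L), \#\lneg(L))$, equivalently you must show that no set $P$ of \lpos{}-labeled leaves achieves $\sum_{L\in P}(\#\lpos(L)-\#\lneg(L)) \ge k$. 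This does not follow from your case analysis: for instance, an intermediate leaf $L_i^{\uparrow}$ with $S_i=\emptyset$ and exactly one or two dummies above both thresholds can have a strictly positive surplus of \lpos{} examples, so several such leaves together with $L^*$ could in principle reach the budget; ruling this out requires extracting a clique from a mixture of the selected sets $S_j$ and the ``top sets'' of the empty colors, which is a nontrivial extra argument (and your claim that non-permissive thresholds cause $\ge kn$ \emph{errors} is likewise only a statement about misclassified dummies under the canonical labeling). I believe your construction can be salvaged by carrying out this potential-function analysis over all leaves (or, more simply, by adding forcing examples as in the paper), but as written the backward direction is incomplete. A minor further imprecision: in your borderline subcase you should require all $|S_i|\in\{1,2\}$ rather than $|S_i|\le 2$, since an empty $S_i$ already pushes the count over $t$ but does not let you extract a clique from $\bigcup_i S_i$.
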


To prove \Cref{prop-hardness-thr-adjust}, we first take a closer inspection of a similar hardness result presented by Harviainen et al.~\cite[Thm.~5.6]{HSSS25} for decision tree pruning with the so-called raising operation and show that their reduction can also be used to show a similar hardness result for \pFSFFDT.
Second, we use \Cref{obs-learning-to-local-search} to obtain the result for \pAdj.

\begin{proposition}
[\appref{prop-hardness-FSFFDT}]
\label{prop-hardness-FSFFDT}
  Even if the subgraph induced by the inner nodes is a path, $\ell = 0$ and $\delta_{\max}$ is a constant, \pFSFFDT\ is W[1]-hard wrt. $s + d$ and cannot be solved in $n^{o(s+d)}$~time.
\end{proposition}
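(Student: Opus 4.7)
The plan is to revisit the reduction of Harviainen et al.~\cite[Thm.~5.6]{HSSS25} that establishes W[1]-hardness of decision tree pruning under the \emph{raising} operation parameterized by $d$ plus the number of operations, and to show that, after minor adjustments, the very same construction yields an instance of \pFSFFDT\ in which both the path structure and the feature assignment of the inner nodes are canonical and can therefore be fixed in advance.

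First, I would inspect what their reduction produces: starting from an instance of a W[1]-hard problem (such as \textsc{Multicolored Clique} parameterized by the number of colors), they build a training data set together with a decision tree whose inner nodes form a path of length roughly equal to the parameter. The cut at each level of this path tests a specific gadget feature, and the choice of threshold on that feature encodes which vertex of a particular color class is picked. Errors in the resulting tree correspond to uncovered edges (or missed adjacencies) of the clique. The crucial observation for our purposes is that in their construction the feature used at each depth of the path is \emph{structurally determined} by the gadget: if one were free to choose the feature, any alternative choice either produces strictly more errors or is equivalent, so the feature assignment may be hardcoded into the instance without changing the set of optimal solutions. Hence we may fix $\dimn(\cdot)$ on the given path $Q$ and hand the tuple $(E,\lambda,Q,\dimn,t)$ to \pFSFFDT.

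Second, I would verify the parameter bookkeeping. The depth of the path $Q$ is $\OO(s)$ in the parameter of the source problem, and the number $d$ of features introduced is likewise a function of the parameter only, so the reduction is a parameterized reduction with respect to $s+d$. Since $\ell=0$ (all cuts are open for the solver to set), we automatically obtain $\ell=0$ in the target instance. The $\delta_{\max}$ bound carries over because the Harviainen et al.\ construction introduces only a constant number of coordinates in which two differently-labeled examples can differ, and our fixing of features does not touch the coordinates of the examples. The running-time lower bound $|\mathcal{I}|^{o(s+d)}$ then follows from the standard ETH-based lower bound behind the W[1]-hardness of the source problem.

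The main obstacle, and the step I would spend most care on, is verifying that fixing the feature assignment indeed does not lose solutions: one must rule out ``shortcut'' solutions that would use different features on some of the path's inner nodes and still attain at most $t$ errors. This amounts to showing that any deviation from the intended feature at some level either makes a specific selection gadget inoperable (forcing additional errors beyond $t$) or is symmetric to the intended feature (so WLOG the assignment is as specified). With this lemma in hand, the reduction is complete and \Cref{prop-hardness-thr-adjust} follows by composing it with \Cref{obs-learning-to-local-search}, since \pFSFFDT\ is precisely the special case $k=s$ of \pAdj\ after assigning dummy thresholds to the inner nodes of $Q$.
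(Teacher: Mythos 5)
Your proposal follows essentially the same route as the paper: reuse the data set from the Harviainen et al.\ raising-hardness reduction, hand over the short path with exactly one inner node per feature as the fixed structure with fixed feature assignment, and compose with \Cref{obs-learning-to-local-search}. One small remark: the ``main obstacle'' you identify---ruling out solutions that deviate from the intended feature assignment---is vacuous for \pFSFFDT{}, since the feature assignment is part of the input and cannot be altered by the solver; that concern would only be relevant when reducing to \pFSDT{}.
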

% \todo{explain what a raising operation is (since it is used in the proof)

% Probably, do this in the intro since the name is used also for~$d+t$
% }
\appendixproof{prop-hardness-FSFFDT}{
\begin{proof}
By a reduction from \textsc{Multicolored Clique}, Harviainen et al.~\cite[Thm.~5.6]{HSSS25} showed that \textsc{DTRais$_{=}$} is W[1]-hard for~$d+\ell$ even if~$\delta_{\max}=6$.
In \textsc{DTRais$_{=}$} the input consist of a classification instance, a decision tree~$T$ with $s'$~inner nodes and two integers~$t$ and~$\ell$.
The goal is to prune exactly $s'-\ell$~inner nodes of~$T$ with the so-called raising operation such that the resulting decision tree~$T'$ consisting of $\ell$~inner nodes makes at most $t$~errors.
Moreover, in \textsc{Multicolored Clique} the input consists of a $\kappa$-partite graph~$G$ where each partite set consists of $q$~vertices, and the goal is to find a clique of size~$\kappa$ having exactly one vertex in each partite set.
In their construction they create 2~features per partite set of~$G$, that is, $2\cdot\kappa$~features in total.
Moreover, the input decision tree~$T$ has a very restrictive structure:
(1) The subgraph induced by the inner nodes is a path~$(p_1, p_2, \ldots, p_z)$, where~$z=q\cdot 2\cdot \kappa$.
(2) $T$ has a unique $\lneg$~leaf; in other words, all leafs except one are~$\lpos$.
(3) Inner nodes~$p_{1+{i\cdot q}},\ldots, p_{q+{i\cdot q}}$ correspond to possible values of feature~$i$ for each~$i\in[0,2\cdot\kappa-1]$.
Finally, they set~$\ell=2\cdot\kappa$.
In the correctness part of their proof, Harviainen et al.~\cite[Thm.~5.6]{HSSS25} argue that exactly one cut in each feature has to remain to fulfill to error bound~$t$.

Hence, the reduction of Harviainen et al.~\cite[Thm.~5.6]{HSSS25} can easily be adapted for \pFSFFDT{} by giving as input a decision tree~$T$ with exactly $\ell=2\cdot\kappa$~inner nodes which are arranged in a path and each of these inner nodes is equipped with a unique feature of the $2\cdot\kappa$~total features.
The correctness can be shown similar as for \textsc{DTRais$_{=}$}; for \pFSFFDT{} it is simpler since we do not need to argue that exactly one cut per feature remains.
Also, the leaf-assignment cannot be changed due to the so-called forcing examples.
\end{proof}
}

\appendixproof{prop-hardness-thr-adjust}{
\begin{proof}[Proof of \Cref{prop-hardness-thr-adjust}]
  The proof is a immediate consequence of \Cref{prop-hardness-FSFFDT} and \Cref{obs-learning-to-local-search}.
\end{proof}
}

\paragraph{From \pAdj{} to \pExc.}
Finally, we provide a reduction from \pAdj{} to \pExc{} which allows us to transfer our hardness result for parameter~$s+d$ to \pExc{}. % with the help of \Cref{obs-learning-to-local-search}.
We achieve this in two steps:
(1) We modify the \pAdj{} instance~$\mathcal{I}$ such that each feature is used at most once by adding new identical \emph{original} features.
(2) We replace each leaf~$v$ of~$\mathcal{I}$ by a new gadget~$G_v$ consisting of a long path of cuts in new features and we add new dummy examples~$E_v$ for each leaf~$v$ which can only be correctly classified with our budget of allowed cut exchange operations if they end up in gadget~$G_v$ corresponding to~$v$.
Thus, cut exchange operations performed at original cuts can only change their thresholds.%cannot change the feature and are thus essentially only a threshold adjustment.

\begin{theorem}[\appref{thm:adjustment-to-exchange}]
\label{thm:adjustment-to-exchange}
  \pAdj{} reduces to \pExc{}.
\end{theorem}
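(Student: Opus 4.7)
The plan is to construct in polynomial time from a $\pAdj{}$ instance $\mathcal{I} = (E,\lambda,T,k,t)$ an equivalent $\pExc{}$ instance $\mathcal{I}' = (E',\lambda',T',k,t')$. The construction has two phases, matching the informal description in the statement's lead-in.

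\textbf{Phase 1 (Uniquify features).} I would first modify $T$ so that every cut uses a distinct feature. For each cut $v$ in $T$ with $\dimn(v) = i$, introduce a fresh feature $f_v$ and extend every example $e \in E$ by setting $e[f_v] \coloneqq e[i]$; then set $\dimn(v) \coloneqq f_v$ in the modified tree. Since $f_v$ carries exactly the same values on $E$ as the original $i$, the routing of every $e \in E$ is unchanged, and this intermediate $\pAdj{}$ instance is equivalent to $\mathcal{I}$.

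\textbf{Phase 2 (Attach penalty gadgets).} Below each leaf $v$ of the intermediate tree, attach a gadget $G_v$: a path of cuts using a fresh family of features $g_{v,1},\ldots,g_{v,m}$ not appearing anywhere else. The original examples are extended to have uniform (say, all zero) values on every $g_{u,j}$; the thresholds in $G_v$ are set so that all zero-valued examples fall into a \emph{default} leaf of $G_v$ labelled $\cla(v)$, which preserves the classification of every original example. I then add, for each leaf $v$, a large batch $E_v$ of forcing examples with label $\cla(v)$ whose values are designed so that (i) in the features $f_u$ used by the original cuts they take exactly the values that route them to leaf $v$ in the intermediate tree, (ii) in the home-gadget features $g_{v,\cdot}$ they route to a leaf of $G_v$ whose initial label is $\cla(v)$, so the forcing examples are correctly classified in the starting tree, and (iii) in the foreign-gadget features $g_{u,\cdot}$ for $u \ne v$ they take the trivial values, so if an $E_v$-example is rerouted into $G_u$ it lands at the default leaf with label $\cla(u)$. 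Choosing $|E_v|$ strictly larger than $k + t$ guarantees that any rerouting of a whole block $E_v$ into a foreign gadget creates more errors than can be repaired with the remaining exchanges. Finally, set $k' \coloneqq k$ and $t' \coloneqq t$.

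\textbf{Correctness.} The forward direction is immediate: a $\pAdj{}$ solution for $\mathcal{I}$ is a valid $\pExc{}$ solution for $\mathcal{I}'$, since a threshold adjustment is a special case of a cut exchange and by construction the gadgets do not change the classification of original examples or of forcing examples. For the backward direction, I would argue that any $\pExc{}$ solution using at most $k$ exchanges may be assumed to modify only thresholds, not features, at the original cuts. Indeed, changing the feature of some original cut $v$ necessarily reroutes every forcing example in at least one block $E_u$ into a foreign gadget, where they are misclassified; the resulting $|E_u| > k + t$ new errors cannot be absorbed within the remaining budget of exchanges. Thus all exchanges applied to original cuts are feature-preserving, and projecting them back to the original tree $T$ yields a $\pAdj{}$ solution with at most $k$ threshold adjustments and at most $t$ errors.

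The main obstacle I expect is the precise design of the gadgets and forcing examples. We must simultaneously enforce three properties: (i) the gadget cuts cause no new misclassification on original examples, (ii) each $E_v$ is classified correctly by the initial $T'$ when it reaches $G_v$, and (iii) any feature-changing exchange on an original cut or inside a gadget genuinely reroutes an entire block $E_v$ into a gadget where it must be misclassified, rather than into a leaf that happens to carry the right label by accident. Ensuring (iii) requires setting the labels of all reachable gadget leaves coherently across all leaves $v$, which I would handle by keeping all default gadget leaves aligned with $\cla(v)$ and making the forcing examples of $E_v$ carry enough uniform coordinates that their trajectory in any foreign gadget is completely determined.
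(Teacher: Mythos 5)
Your overall architecture matches the paper's: first duplicate features so each cut uses a unique one, then hang a gadget with fresh features below every leaf and add per-leaf forcing examples so that changing the feature of an original cut is catastrophic. However, the step you flag as "the main obstacle" is in fact the entire technical content of the proof, and your construction does not achieve it. Two things are missing. First, your forcing examples' values in original features \emph{not} on the path to their home leaf are underdetermined ("the values that route them to leaf $v$" constrains only on-path features). If an original cut $z$ above leaf $u$ has its feature exchanged for some off-path feature $d^*$, the block $E_u$ is rerouted only if its $d^*$-values fall on the wrong side of the new threshold; with a single uniform choice of off-path values the adversary can always pick $d^*$ and a threshold that keeps $E_u$ in place. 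The paper handles this by creating each forcing example in two variants, one with value $-\infty$ and one with value $+\infty$ in every off-path original feature, so that for \emph{any} new feature and threshold at $z$, one of the two variants is necessarily diverted into the sibling subtree.

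Second, and more fundamentally, your blocks $E_v$ are monochromatic, so "landing in a foreign gadget" does not by itself produce errors: with two classes the destination leaf's label may coincide with $\cla(v)$, and even if it does not, the problem allows arbitrary relabelling of leaves, so a monochromatic block can always be rescued by relabelling its destination leaf (at worst at the cost of other examples, which you do not control). Errors can only be \emph{forced} by placing two examples of opposite labels at the same leaf with no way to separate them. The paper's gadget does exactly this: for leaf $j$ it creates $\lneg$ examples $e^+_{j,1},\dots,e^+_{j,k}$ and a $\lpos$ example $e^+_{j,0}$ (each in $t+1$ copies) that agree on all features except the $k$ private gadget features $d_{j,1},\dots,d_{j,k}$ and $d_{j,\alpha},d_{j,\beta}$; when a feature change diverts them into a subtree containing none of these features, the remaining budget of $k-1$ exchanges cannot introduce all $k$ separating features, so some oppositely labelled pair collides and $t+1$ errors are unavoidable regardless of relabelling. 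Without this pairing-plus-budget-counting argument your backward direction does not go through. (A further, fixable, divergence: the paper also inflates the exchange budget to $k'+s'+1$ and designs the gadgets so that one exchange per gadget is \emph{mandatory}, which simultaneously pins down the leaf labelling and simulates the relabelling of the original leaves; your choice $k'=k$ forces you to pin labels purely via example placement, which conflicts with the requirement that the original leaves remain relabelable.)
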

\appendixproof{thm:adjustment-to-exchange}{
\begin{proof}
  The main idea is to add sets of new examples for each leaf of the decision tree from \pAdj{} such that at least $t + 1$ of those examples gets misclassified if the feature of some original cut is changed.
  We achieve this, by adding a gadget to each leaf of the \pAdj{} instance.
  Moreover, our budget is chosen such that we can and have to perform exactly one cut exchange in each of these gadgets.
  
  \textbf{Construction.} We first show that we can assume without loss of generality that no feature appears twice in the initial decision tree.

  For all features~$d_i$, replace the first occurrence of the feature~$d_i$ by a new feature~$d_i^1$, the second by~$d_i^2$, and so on. Modify the examples~$e \in E$ accordingly, that is, let~$e[d_i^j] = e[d_i]$ for the new features~$d_i^j$. Now, the original instance to \pAdj{} has a solution if and only if our modified instance to \pAdj{} has a solution.

  We then start constructing our \pExc{} instance by using this \pAdj{} instance as our starting point. 
  For the budget~$k$ of allowed cut exchange operations, we set~$k\coloneqq k'+s'+1$, where $k'$ is the budget of allowed threshold adjustment operations and~$s'$ is the size of the \pAdj{} instance.
Also, all examples of the initial \pAdj{} instance are referred to as \emph{original}.

Moreover, we index the leaves arbitrarily. 
For each leaf $v$ with index $j \in [s' + 1]$, we create $2(t+1)(k+1) + 2(n+t+2)+1$ new \emph{leaf examples} whose leaf is initially $v$ and remains $v$ if only threshold adjustments are performed.
Recall that~$n$ is the number of examples in the \pAdj{} instance. 
Consider the root-to-leaf path~$v_1, v_2, \dots, v_m$ for $v$, where $v_m = v$. Then, add $k$ \lneg{} leaf examples $e_{j,1}^+, e_{j,2}^+, \dots, e_{j,k}^+$, $k$ \lneg{} leaf examples $e_{j,1}^-, e_{j,2}^-, \dots, e_{j,k}^-$, and  two \lpos{} leaf examples~$e_{j,0}^+, e_{j,0}^-$.
For each of these leaf examples, we create $t+1$~copies.
Moreover, we create two \lneg{} leaf examples~$e_{j,k+1}^+, e_{j,k+1}^-$. 
Next, we create $n+t+2$~copies of both of these examples.
Finally, we add another \lpos{} leaf example~$e_j$.

Now, we define the values in each feature. 
All features which are constructed so far are referred to as the \emph{original features}.
First, we set the coordinates of all leaf examples in the original features as follows:
If~$v_{i+1}$ is the left child of~$v_i$, we set~$e_{j}[\dimn(v_i)] = -\infty$ and for all $l \in [0, k + 1]$ we let~$e_{j,l}^+[\dimn(v_i)] = e_{j,l}^-[\dimn(v_i)] = -\infty$, and otherwise, we set~$e_{j}[\dimn(v_i)] = \infty$ and we let~$e_{j,l}^+[\dimn(v_i)] = e_{j,l}^-[\dimn(v_i)] = \infty$.
Moreover, for each other original feature~$d'$, that is,~$d'\ne\dimn(v_i)$ for all~$v_i$ on the path to leaf~$v$, we set~$e_{j}[d'] = -\infty$ and for all $l \in [0, k + 1]$ we let~$e_{j,l}^-[d'] = -\infty$, and~$e_{j,l}^+[d'] =  \infty$.

We continue by creating $k+2$ new \emph{gadget features} $d_{j,1}, d_{j,2}, \dots, d_{j,k}$, and~$d_{j,\alpha}, d_{j,\beta}$ for each leaf $v$ with index $j$.
We set~$e_{j,l}^+[d_{j,l}] = e_{j,l}^-[d_{j,l}] = 0$ for~$l \in [k]$.
Moreover, we set~$e_{j,0}^+[d_{j,\alpha}] = e_{j,0}^-[d_{j,\alpha}] = e_{j,0}^+[d_{j,\beta}] = e_{j,0}^-[d_{j,\beta}] = 0$ and we set~$e_{j}[d_{j,\alpha}] = e_{j}[d_{j,\beta}] = 0$.
For each other combination of a leaf example~$e$ and a gadget feature~$d'$, we set~$e[d']=\infty$.
It remains to define the coordinates of the original examples in the gadget features.
For each original example~$e$ we do the following:
We set~$e[d_{j,\alpha}]=0$ for each~$j\in[s'+1]$.
For each remaining gadget feature~$d'$ we set~$e[d']=\infty$.

  As the last step of the reduction, we replace the $j$th leaf $v$ of the initial decision tree by a gadget that is a chain of $k+1$ inner nodes as follows:
  The chain has $k+1$ cuts in features $d_{j,1}, d_{j,2}, \dots, d_{j,k}, d_{j,\alpha}$ with threshold value $1$  for the cuts in features~$d_{j,l}$ for~$l\in[k]$ and \lneg{} leaves as the left children, except for the cut in feature $d_{j,\alpha}$ which uses threshold~0 and that has a \lpos{} leaf as the left child and a \lneg{} leaf as the right child. This is illustrated in \Cref{fig:adjustment-to-exchange}. Call the added cuts \emph{gadget cuts} and the other cuts \emph{original cuts}.
  
Observe that the so-constructed tree is reasonable:
(1) The left child of cut~$d_{j,l}<1$, which is a \lneg{} leaf contains the \lneg{} leaf examples~$e_{j,l}^+, e_{j,l}^-$.
(2) The left child of cut~$d_{j,\alpha}<0$, which is a \lpos{} leaf contains the \lpos{} leaf examples~$e_{j}$.
(3) The right child of cut~$d_{j,\alpha}<0$, which is a \lneg{} leaf contains the \lneg{} leaf examples~$e_{j,k+1}^+, e_{j,k+1}^-$, the \lpos{} leaf examples~$e_{j,0}^+, e_{j,0}^-$, and the original examples ending up at the $j$th leaf.
Since there are $n$~original examples, for examples~$e_{j,0}^+, e_{j,0}^-$ we have $t+1$~copies each, and for examples~$e_{j,k+1}^+, e_{j,k+1}^-$ we have $n+t+2$~copies each, we conclude that this leaf contains more \lneg{} examples than \lpos{} examples.
Hence, this tree is reasonable.

  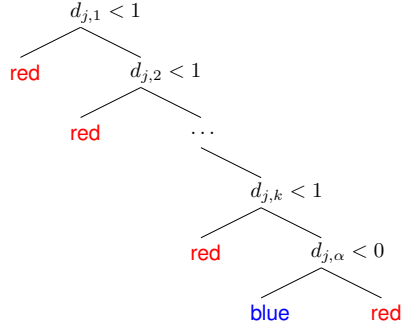
\begin{figure}
    \centering
    \scalebox{0.8}{
    \begin{tikzpicture}
      \node[cut] (r1) at (0, 0) {$d_{j,1} < 1$};
      \node[cut] (r2) at (1, -1) {$d_{j,2} < 1$};
      \node[cut] (rd) at (2, -2) {$\dots$};
      \node[cut] (rk) at (3, -3) {$d_{j,k} < 1$};
      \node[cut] (l1) at (4, -4) {$d_{j,\alpha} < 0$};
      \node[neg] (n1) at (-1, -1) {\lneg};
      \node[neg] (n2) at (0, -2) {\lneg};
      \node[neg] (n3) at (2, -4) {\lneg};
      \node[neg] (n4) at (5, -5) {\lneg};
      \node[posi] (nv) at (3, -5) {\lpos};
      \draw[-] ([xshift=0.3cm,yshift=-0.25cm]r1.west) -- ([xshift=0.3cm,yshift=0.25cm]r2.west);
      \draw[-] ([xshift=0.3cm,yshift=-0.25cm]r2.west) -- ([xshift=0.3cm,yshift=0.25cm]rd.west);
      \draw[-] ([xshift=0.3cm,yshift=-0.25cm]rd.west) -- ([xshift=0.3cm,yshift=0.25cm]rk.west);
      \draw[-] ([xshift=0.3cm,yshift=-0.25cm]rk.west) -- ([xshift=0.3cm,yshift=0.25cm]l1.west);
      \draw[-] ([xshift=0.3cm,yshift=-0.25cm]r1.west) -- ([xshift=0.3cm,yshift=0.25cm]n1.west);
      \draw[-] ([xshift=0.3cm,yshift=-0.25cm]r2.west) -- ([xshift=0.3cm,yshift=0.25cm]n2.west);
      \draw[-] ([xshift=0.3cm,yshift=-0.25cm]rk.west) -- ([xshift=0.3cm,yshift=0.25cm]n3.west);
      \draw[-] ([xshift=0.3cm,yshift=-0.25cm]l1.west) -- ([xshift=0.3cm,yshift=0.25cm]n4.west);
      \draw[-] ([xshift=0.3cm,yshift=-0.25cm]l1.west) -- ([xshift=0.3cm,yshift=0.25cm]nv.west);
    \end{tikzpicture}}
    \caption{The gadget used for misclassifying leaf examples that do not correspond to the~$j$th leaf~$v$ in \Cref{thm:adjustment-to-exchange}.
    Note that the class of~$v$ is not relevant for this gadget.}\label{fig:adjustment-to-exchange}
  \end{figure}

  \textbf{Correctness.} We claim that this instance of \pExc{} has a solution with~$k=k'+s'+1$ cut exchanges if and only if the original \pAdj{} instance has a solution with $k'$~threshold adjustments.
  
  $(\Rightarrow)$ Suppose that a solution to the \pAdj{} instance~$I$ exists. 
In the first step, we change the corresponding original cuts in the \pExc{} instance by assigning the thresholds of the solution to~$I$.
Second, for each leaf~$v$ of the solution to~$I$, we do the following:
If~$v$ is a \lpos{} leaf, then we change the cut~$d_{j,\alpha}<0$ to~$d_{j,\alpha}<1$, and otherwise, if $v$ is a \lneg{} leaf, we change the cut~$d_{j,\alpha}<0$ to~$d_{j,\beta}<1$.
Clearly, we perform at most $k=k'+s'+1$~cut exchange operations.
Observe that after these cut exchanges all leaf examples are correctly classified:
(1) The \lneg{} leaf examples~$e_{j,l}^+, e_{j,l}^-$ end up in the left child of cut~$d_{j,l}<1$ which is a \lneg{} leaf.
(2) The \lpos{} leaf examples~$e_{j,0}^+, e_{j,0}^-, e_j$ end up in the left child of cut~$d_{j,\alpha}<1$ or~$d_{j,\beta}<1$, respectively, which is a \lpos{} leaf.
(3) The \lneg{} leaf examples~$e_{j,k+1}^+, e_{j,k+1}^-$ end up in the right child of cut~$d_{j,\alpha}<1$ or~$d_{j,\beta}<1$, respectively, which is a \lneg{} leaf.

Hence, it remains to consider the original examples:
Consider the $j$th leaf in the solution to~$I$.
If~$v$ is \lpos, then we replaced cut~$d_{j,\alpha}<0$ with cut~$d_{j,\alpha}<1$.
Consequently,  all examples ending in~$v$ in the solution to~$I$ end up in the left child of~$d_{j,\alpha}<1$ which is a \lpos{} leaf.
Otherwise, if~$v$ is \lneg, then we exchanged cut~$d_{j,\alpha}<0$ with cut~$d_{j,\beta}<1$.
Consequently,  all examples ending in~$v$ in the solution to~$I$ end up in the right child of~$d_{j,\beta}<1$ which is a \lneg{} leaf.
Thus, the same set of original examples is correctly classified and thus we have at most $t$~errors in total.

  $(\Leftarrow)$ Suppose now instead that a solution to the \pExc{} instance exists. First, note that to separate the \lneg{} leaf examples $e_{j,1}^+, e_{j,2}^+, \dots, e_{j,k}^+$ and $e_{j,1}^-, e_{j,2}^-, \dots, e_{j,k}^-$ from the \lpos{} leaf examples $e_{j,0}^+$ and $e_{j,0}^-$, the tree needs to contains the cuts~$d_{j,l}<1$ for each~$l\in[k]$ since feature~$d_{j,l}$ is the unique feature in which~$e_{j,l}^+, e_{j,l}^-$ differ from~$e_{j,0}^+, e_{j,0}^-$.
  Since there are $t+1$~copies of each leaf examples, lack of such a cut causes at least $t+1$ errors. 
  Moreover, in order to separate the \lpos{} leaf examples~$e_{j,0}^+, e_{j,0}^-$ from the \lneg{} leaf examples~$e_{j,k+1}^+, e_{j,k+1}^-$ the tree needs to contain either the cut~$d_{j,\alpha}<1$ or~$d_{j,\beta}<1$ since these examples only differ in features~$d_{j,\alpha}, d_{j,\beta}$.
  Again, since for each of these examples we have at least $t+1$~copies, lack of such a cut causes at least $t+1$~errors. 
  Moreover, observe that the cut~$d_{j,\alpha}<1$ or~$d_{j,\beta}<1$ has to appear after any cut in feature~$d_{j,l}$ since otherwise the \lpos{} leaf examples~$e_{j,0}^+, e_{j,0}^-$ cannot be separated from the \lneg{} leaf examples~$e_{j,l}^+, e_{j,l}^-$, causing at least $t+1$~errors.
  Furthermore, note that by the fact in which all of these leaf examples end up, the leaf labeling of the gadget is fixed, that is, is identical to the initial labeling shown in \Cref{fig:adjustment-to-exchange}.

  Note that since the initial tree neither contains the cut~$d_{j,\alpha}<1$ nor the cut~$d_{j,\beta}<1$, this consumes $s'+1$~cut exchange operations of our budget of~$k=k'+s'+1$.
  Consequently, the remaining budget for the number of cut exchange operations is~$k'$.
  
Assume now that we changed the feature of some inner node~$z$ (except the case that~$z$ has a cut~$d_{j,\alpha}<0$) to~$d^*$ in our solution to \pExc{}, and without loss of generality, assume that the features on the inner nodes between the root and~$z$ remain the same (but the thresholds may change). 
%By our   
  There are two cases: either (1) $z$ is an original cut or (2) $z$ is a gadget cut.

  (1):
    Let $w$ be a child of~$z$ whose subtree does not have the feature~$d^*$ and let~$j$ be the index of some leaf there. 
  Such a child exists, since each cut has a unique feature in the initial decision tree. 
  Let $u$ be the other child of~$z$. 
  Consider the case where $w$ is the left subtree of $z$; the proof is analogous for the right subtree. 
  In the initial decision tree~$T$, we have $e_{j,l}^+ \in E[T, w]$ for all $l \in [k + 1]$. 
However, $e_{j,l}^+[d^*] = \infty$, and thus $e_{j,l}^+ \in E[T', u]$ in the modified decision tree~$T'$. 
Now, we have budget to perform at most $k - 1$ additional cut exchange operations, and therefore there is an example $e_{j,l'}^+$ that ends up at the same leaf as $e_{j,0}^+$, since none of the features $d_{j,1}, d_{j,2}, \dots, d_{j,k}$ appear in the subtree rooted at $u$. 
This results in at least $t+1$ misclassifications because there are $t+1$ copies of both examples.
Hence, no feature in an original cut can be changed without yielding at least $t+1$~errors.

  (2):
  Assume that~$z$ has a gadget cut different from~$d_{j,\alpha}<0$.
  Note that the features on that path from root to $z$ were not changed, and thus $E[T, z]$ and $E[T', z]$ contain the same set of leaf examples for the original decision tree $T$ and the modified decision tree $T'$. 
  If $d^*$ is an original feature, then there cannot be a path containing cuts in features  $d_{j,1}, d_{j,2}, \dots, d_{j,k}$ and in feature~$d_{j,\alpha}$ or~$d_{j,\beta}$, and thus we get at least $t + 1$~errors. 
  The same argument applies if we make the feature of any cut in the subtree rooted at $z$ be an original feature. 
  Therefore, the cut exchange operations performed in a gadget must essentially just permute the features~$d_{j,l}$, which can only affect the classification of the leaf examples belonging to that gadget as other examples have value $\infty$ in those features. 
  As the leaf examples~$e_{j,l}^+, e_{j,l}^-$ are initially all correctly classified, there is no possible gain from doing that, and another solution can be obtained by not performing those operations.

  Since one can avoid changing any features in original cuts and we apply at most $k-(s'+1)=k'$~cut exchange operations in the original cuts, the solution can then be mapped back to \pAdj{} by performing the same threshold adjustments in the original cuts.
  Moreover, by setting the leaf labels to the dominate class, we then obtain a solution for the \pAdj{} instance.
\end{proof}
}

\begin{proposition}[\appref{cor:exc-w-hard-s-d}]
\label{cor:exc-w-hard-s-d}
  \pExc\ is W[1]-hard with respect to $s + d$. % Moreover, under ETH, \pExc\ cannot be solved in time $\OO(|\mathcal{I}|^{o(s+d)})$.
\end{proposition}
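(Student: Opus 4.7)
The plan is to combine the two results that have already been established in the excerpt: Proposition~\ref{prop-hardness-thr-adjust} gives W[1]-hardness of \pAdj{} with respect to $s + d$ (even under the restrictions that $\ell = 0$, the inner nodes induce a path, and $\delta_{\max}$ is constant), while Theorem~\ref{thm:adjustment-to-exchange} provides a polynomial-time many-one reduction from \pAdj{} to \pExc{}. I will show that chaining these two yields the desired statement, which amounts to verifying that the reduction of Theorem~\ref{thm:adjustment-to-exchange} is a \emph{parameterized} reduction for the parameter $s + d$, that is, the output parameter is bounded by a computable function of the input parameter.

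Concretely, let $(E', \lambda', T', k', t')$ be a \pAdj{} instance from Proposition~\ref{prop-hardness-thr-adjust} with tree size $s'$, number of features $d'$, and $\ell = 0$ (so $k' = s'$). I would trace through the three stages of the construction in Theorem~\ref{thm:adjustment-to-exchange} and bound the output parameters stage by stage. First, duplicating features so that no feature occurs twice in $T'$ adds at most $s' - 1$ new features, yielding at most $d' + s'$ features and the same tree size. Second, the new \pExc{} budget is set to $k = k' + s' + 1 = 2s' + 1$. Third, every one of the $s' + 1$ leaves is replaced by a gadget that adds $k + 1 = 2s' + 2$ new inner nodes and introduces $k + 2 = 2s' + 3$ fresh gadget features per leaf.

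Summing up, the resulting \pExc{} instance has tree size
\[
s \;=\; s' + (s' + 1)(2s' + 2) \;=\; \OO((s')^2),
\]
and number of features
\[
d \;\le\; (d' + s') + (s' + 1)(2s' + 3) \;=\; \OO((s')^2 + d').
\]
Thus $s + d$ is bounded by a computable function of $s' + d'$, and the construction runs in polynomial time, so the reduction is a parameterized reduction with respect to $s + d$. Composing it with the W[1]-hardness established in Proposition~\ref{prop-hardness-thr-adjust} yields W[1]-hardness of \pExc{} for the parameter $s + d$.

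There is no real technical obstacle here: the correctness of the underlying reduction was already proved in Theorem~\ref{thm:adjustment-to-exchange}, and Proposition~\ref{prop-hardness-thr-adjust} supplies the source-problem hardness. The only step requiring care is the bookkeeping above, in particular keeping in mind that in the source instances we have $k' = s'$ (because $\ell = 0$), which is what keeps the blow-up of the gadget a function of $s'$ alone rather than of $k$ separately.
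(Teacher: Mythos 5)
Your proposal is correct and follows essentially the same route as the paper: compose the reduction of Theorem~\ref{thm:adjustment-to-exchange} with the W[1]-hardness of \pAdj{} from Proposition~\ref{prop-hardness-thr-adjust}, observing that the output tree size and feature count are bounded (quadratically) in terms of the source parameters. Your explicit bookkeeping of the gadget sizes just fills in the bound that the paper states without detail.
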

\appendixproof{cor:exc-w-hard-s-d}{
\begin{proof}
  We can reduce any instance of \pAdj{} to \pExc{} in polynomial time by \Cref{thm:adjustment-to-exchange} such that the size of the decision tree and the number of features in the new instance are bounded by a function (quadratic) of the size of the decision tree in the \pAdj{} instance. The result then follows from \pAdj{} being W[1]-hard with respect to the same parameters by \Cref{prop-hardness-thr-adjust}.
\end{proof}
}

\section{Algorithms for \pAdj{} and \pExc}
\label{sec-algorithms}
\appendixsection{sec-algorithms}

Here, we provide FPT-algorithms and XP-algorithms for both problems and various parameters.

\paragraph{The parameter\boldmath~$d+D$.}
Now, we show that both problems admit an FPT-algorithm for~$d+D$ via dynamic-programming.
Simultaneously, these algorithms are also XP-algorithms for~$d$.
Afterwards, we show that this dynamic-programming approach also works if we allow both local search operations and also pruning.
More precisely, in \pLSlong{} (\pLS) the input is a decision tree~$T$ and we have 4~individual budgets, one for the number of threshold adjustment operations, one the number of cut exchange operations, one for the number of subtree replacement operations, and one for the number of subtree raising operations.
Recall that in a subtree replacement operation an inner node~$v$ of~$T$ whose 2 children are leafs is replaced by a new leaf and in subtree raising the subtree rooted at an ancestor of~$v$ is substituted by the subtree rooted at~$v$.

\begin{theorem}[\appref{thm:adj-fpt-d-D}]\label{thm:adj-fpt-d-D}
  \pAdj{} can be solved in $\OO((D + 1)^{2d+1} nk^2 + ns)$ time.
\end{theorem}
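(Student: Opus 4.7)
The plan is to solve \pAdj{} by a bottom-up dynamic program on the input decision tree $T$, whose state is indexed by a node $v$, a \emph{region} $R$ of the feature space potentially reaching $v$, and a budget $j$ on threshold adjustments used inside $T_v$.

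First, I would canonicalize thresholds: since feature $i$ takes only $D$ distinct values, any threshold used on a cut of feature $i$ can be replaced by one of $D + 1$ canonical representatives (one strictly between each pair of consecutive distinct values, plus a smaller-than-minimum and larger-than-maximum value) without changing which examples lie on either side. I would then observe that for any adjusted tree and any node $v$, the set of examples reaching $v$ is exactly $R_v \cap E$ for a hyperrectangle $R_v = \prod_{i=1}^{d}(a_i, b_i]$ whose endpoints are canonical thresholds, since each cut along the root-to-$v$ path imposes a one-sided inequality on a single feature, and a conjunction of such inequalities is an interval per feature. Thus there are at most $(D+1)^{2d}$ possible regions.

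Next I would define $\mathrm{dp}[v, R, j]$ as the minimum number of errors inside $T_v$ when the examples reaching $v$ are exactly $R \cap E$ and at most $j$ threshold adjustments are performed within $T_v$, allowing free relabeling of the leaves in $T_v$. The base case, where $v$ is a leaf, uses optimal relabeling and gives $\mathrm{dp}[v, R, j] = \min(\mathrm{blue}(R), \mathrm{red}(R))$. For an internal node with $\dimn(v) = i$, original threshold $\tau$, and children $u, w$, the recurrence is
\[
\mathrm{dp}[v, R, j] = \min_{x \in X_i,\ j_1 + j_2 \le j - [x \neq \tau]} \bigl(\mathrm{dp}[u, R^{\le}_x, j_1] + \mathrm{dp}[w, R^{>}_x, j_2]\bigr),
\]
where $X_i$ consists of the $D+1$ canonical thresholds for feature $i$ and $R^{\le}_x, R^{>}_x$ split $R$ by the hyperplane $e[i] = x$. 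The answer is $\mathrm{dp}[\mathrm{root}, \mathds{R}^d, k]$.

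To bound the running time, I would precompute $\mathrm{blue}(R)$ and $\mathrm{red}(R)$ for all $(D+1)^{2d}$ regions via $d$-dimensional prefix sums, in $\OO(nd + (D+1)^{2d})$ time after bucketing each example ($\OO(nd)$ time). The DP has $\OO(s \cdot (D+1)^{2d} \cdot k)$ entries, and each internal-node entry is evaluated in $\OO((D+1) k)$ time by ranging over the $D+1$ canonical thresholds and $\OO(k)$ budget splits, for a total of $\OO(s (D+1)^{2d+1} k^2)$. Combining with $\OO(ns)$ for initial tree-routing bookkeeping, and using $s \le n$ and $d \le s$ (features absent from $T$ can be discarded), the overall bound $\OO((D+1)^{2d+1} n k^2 + ns)$ follows. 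The main obstacle I anticipate is verifying the hyperrectangle property rigorously and confirming that restricting to canonical thresholds preserves optimality, together with carefully accounting for the interaction between the single budget unit possibly spent at $v$ and the budget split across $u$ and $w$.
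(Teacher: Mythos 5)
Your proposal is correct and matches the paper's proof in essence: both use a bottom-up dynamic program indexed by a tree node, an axis-aligned box (threshold sequence with at most $(D+1)^{2d}$ choices), and the remaining adjustment budget, with the same leaf base case (majority relabeling) and the same recurrence distinguishing "keep the original threshold for free" from "pay one operation to move it to one of $D+1$ candidate positions," yielding the identical running-time accounting. Your canonical-threshold representation and prefix-sum precomputation of per-box class counts are only cosmetic variations on the paper's half-open intervals over $D_i \cup \{-\infty\}$ and its direct leaf counting.
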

\begin{proof}[Proof Sketch]
  We adapt the algorithm of Harviainen et al.~\cite{HSSS25} for subtree raising under the same parameterization. A \emph{threshold sequence} $(l_i, r_i)_{i \in [d]}$ encodes for each feature $i \in [d]$ an interval $(l_i, r_i]$, characterizing a subset of the feature space. Let
  $E[(l_i, r_i)_{i \in [d]}] \coloneqq \{e \in E \colon e[i] \in (l_i, r_i] \text{ for }i \in [d]\}$
  be the set of examples within the box defined by a threshold sequence. Let $Q[v, (l_i, r_i)_{i \in [d]}, k']$ be the minimum number of errors achievable if our decision tree is the subtree rooted at $v \in V$ on our initial decision tree $T$ on the set of examples $E[(l_i, r_i)_{i \in [d]}]$ if we make at most $k'$ threshold adjustment operations. Define $L((l_i, r_i)_{i \in [d]}, j, x)$ as the threshold sequence identical to $(l_i, r_i)_{i \in [d]}$ except that for $i = j$ we have the interval $(\max\{l_i, x\}, r_i)$. Define $R((l_i, r_i)_{i \in [d]}, j, x)$ analogously with an interval $(l_i, \max\{r_i, x\})$.
%
%  \looseness=-1 
For any cut~$v$, when combining subresults of the left child~$w$ and the right child~$u$ of $v$, we either adjust $\thr(v)$ or not. Thus, $Q[v, (l_i, r_i)_{i \in [d]}, k']$ is the minimum of %we get the following recurrence
  \[ \min_{\beta \in [0,k']} Q[w, L((l_i, r_i)_{i \in [d]}, \dimn(v), \thr(v)), \beta] + Q[u, R((l_i, r_i)_{i \in [d]}, \dimn(v), \thr(v)), k' - \beta], \text{and} \]
  %and
  \[ \min_{\beta \in [0,k' - 1], x \in \mathds{R}} Q[w, L((l_i, r_i)_{i \in [d]}, \dimn(v), x), \beta] + Q[u, R((l_i, r_i)_{i \in [d]}, \dimn(v), x), k' - \beta - 1]. \qed \]
  % \[ Q[v, (l_i, r_i)_{i \in [d]}, k'] = \min \left\lbrace 
  % \begin{aligned}
  %   & \min_{k'' \in [0,k']} Q[w, L((l_i, r_i)_{i \in [d]}, \dimn(v), \thr(v)), k'']\\
  %   & \quad\quad + Q[u, R((l_i, r_i)_{i \in [d]}, \dimn(v), \thr(v)), k' - k''],\\
  %   & \min_{k'' \in [0,k' - 1]} \min_{x \in D_{\dimn(v)}} Q[w, L((l_i, r_i)_{i \in [d]}, \dimn(v), x), k'']\\
  %   & \quad\quad + Q[u, R((l_i, r_i)_{i \in [d]}, \dimn(v), x), k' - k'' - 1].\\
  % \end{aligned}
  % \right.\]
  \renewcommand{\qedsymbol}{}
\end{proof}
\appendixproof{thm:adj-fpt-d-D}{
\begin{proof}
  We adapt the algorithm of Harviainen et al.~\cite{HSSS25} for subtree raising under the same parameterization. Let $D_i = \{e[i] \colon e \in E\} \cup \{-\infty\}$ for $i \in [d]$ be the set of distinct values for feature~$i$ with an extra value of $-\infty$. A \emph{threshold sequence} $(l_i, r_i)_{i \in [d]}$ encodes for each feature $i \in [d]$ an interval $(l_i, r_i]$ with $l_i \le r_i$ and $l_i, r_i \in D_i$, characterizing a subset of the feature space. We thus let
  \[ E[(l_i, r_i)_{i \in [d]}] \coloneqq \{e \in E \colon l_i < e[i] \le r_i\text{ for all }i \in [d]\} \]
  be the set of examples within the box defined by a threshold sequence. Let $Q$ be a dynamic programming table whose entry $Q[v, (l_i, r_i)_{i \in [d]}, k']$ is the minimum number of misclassifications achievable if our decision tree is the subtree rooted at $v \in V$ on our initial decision tree $T$ on the set of examples $E[(l_i, r_i)_{i \in [d]}]$ if we make at most $k'$ threshold adjustment operations. In other words, we have a positive answer to \pAdj{} if $Q[r, (-\infty, \infty)_{i \in [d]}, k] \le t$ when the root of the initial decision tree is $r$. Define $L((l_i, r_i)_{i \in [d]}, j, x)$ as the threshold sequence identical to $(l_i, r_i)_{i \in [d]}$ except that for $i = j$ we have the interval $(\max\{l_i, x\}, r_i)$. Define $R((l_i, r_i)_{i \in [d]}, j, x)$ analogously with an interval $(l_i, \max\{r_i, x\})$.

  \looseness=-1 For a leaf node~$v$ and all $k' \le k$, it suffices to count the number of examples in $E[(l_i, r_i)_{i \in [d]}]$ whose class labels differ from the majority, since the leaf can be relabeled to have the label of the majority. For a cut~$v$, there are two cases to consider when combining subresults of the left child~$w$ and the right child~$u$ of $v$: either we adjust the threshold of $v$ or not. Thus, we get a dynamic programming recurrence
  \[ Q[v, (l_i, r_i)_{i \in [d]}, k'] = \min \left\lbrace 
  \begin{aligned}
    & \min_{k'' \in [0,k']} Q[w, L((l_i, r_i)_{i \in [d]}, \dimn(v), \thr(v)), k'']\\
    & \quad\quad + Q[u, R((l_i, r_i)_{i \in [d]}, \dimn(v), \thr(v)), k' - k''],\\
    & \min_{k'' \in [0,k' - 1]} \min_{x \in D_{\dimn(v)}} Q[w, L((l_i, r_i)_{i \in [d]}, \dimn(v), x), k'']\\
    & \quad\quad + Q[u, R((l_i, r_i)_{i \in [d]}, \dimn(v), x), k' - k'' - 1].\\
  \end{aligned}
  \right.\]

  To recover one of the possible solutions, backtracking in the dynamic programming table is straightforward.

  Note that the number of entries is bounded from above by $n \cdot (D + 1)^{2d} \cdot k$, and computing the entry for each cut takes $\OO(D \cdot k)$ time. For each of the $s$ leaves, the computation takes $\OO(n)$ time. Thus, the total time complexity is $\OO((D + 1)^{2d+1} nk^2 + ns)$.
\end{proof}
}

\begin{theorem}[\appref{thm:exc-fpt-d-D}]
  \label{thm:exc-fpt-d-D}
  \pExc{} can be solved in $\OO((D + 1)^{2d+1} ndk^2 + ns)$ time.
\end{theorem}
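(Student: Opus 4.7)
The plan is to adapt the dynamic-programming algorithm from the proof of Theorem \ref{thm:adj-fpt-d-D} so that it accommodates the more powerful cut exchange operation. I would keep the same table $Q[v, (l_i, r_i)_{i \in [d]}, k']$, now reinterpreted as the minimum number of misclassifications on the subtree rooted at $v$ restricted to examples in the box $(l_i, r_i]_{i \in [d]}$ using at most $k'$ cut exchanges; the definitions of $L$ and $R$, and the base case for a leaf node, all carry over unchanged, as they only depend on which examples can reach $v$ and not on the type of operation being performed.

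The only modification needed is in the recurrence at a cut node $v$. In the adjustment case one either keeps the cut or changes only its threshold; in the exchange case one either keeps the cut or replaces it by some pair $(j, x) \in [d] \times D_j$, which subsumes threshold adjustments via the choice $j = \dimn(v)$. Concretely, I would replace the inner $\min_{x \in D_{\dimn(v)}}$ of the adjustment recurrence by a double minimum $\min_{j \in [d]} \min_{x \in D_j}$, and replace $\dimn(v)$ in the $L$ and $R$ arguments by the new feature $j$. The correctness argument then carries over verbatim: any optimal solution either leaves $v$ untouched or uses exactly one exchange to install some $(j, x)$, and restricting $x$ to $D_j$ is without loss of generality since on the examples in the box the induced partition is piecewise constant in $x$ between consecutive values in $D_j$.

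For the running time, the table still has $\OO((D+1)^{2d} n k)$ entries. The work per cut entry grows by a factor of $d$ compared to the adjustment case, because the exchange now enumerates $d$ new features in addition to $D+1$ thresholds and the $k+1$ ways to split the budget between the two children. Multiplying by the table size yields $\OO((D+1)^{2d+1} n d k^2)$ work for the cut entries, while the leaves still contribute $\OO(ns)$ via the same aggregation as in Theorem \ref{thm:adj-fpt-d-D}. This matches the claimed bound of $\OO((D+1)^{2d+1} n d k^2 + ns)$. There is no substantive obstacle beyond the extra feature-enumeration factor; the rest is a direct lift from the adjustment algorithm, and backtracking to recover a concrete solution is again routine.
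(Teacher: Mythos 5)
Your proposal is correct and matches the paper's proof essentially exactly: the paper likewise reuses the dynamic program of Theorem~\ref{thm:adj-fpt-d-D}, adds an inner minimization over the new feature $j \in [d]$ (and threshold $x \in D_j$) in the recurrence at cut nodes, and observes that this multiplies the per-entry work by $d$, yielding $\OO((D+1)^{2d+1} n d k^2 + ns)$.
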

\appendixproof{thm:exc-fpt-d-D}{
\begin{proof}
  We adapt the algorithm of Harviainen et al.~\cite{HSSS25} for subtree raising under the same parameterization, proceeding analogously to \Cref{thm:adj-fpt-d-D}. The only difference is that when we modify a cut, we may also change its associated feature.
  For a cut~$v$ with a left child~$w$ and a right child~$u$, the dynamic programming recurrence is thus
  \[ Q[v, (l_i, r_i)_{i \in [d]}, k'] = \min \left\lbrace 
  \begin{aligned}
    & \min_{k'' \in [0,k']} Q[w, L((l_i, r_i)_{i \in [d]}, \dimn(v), \thr(v)), k'']\\
    & \quad\quad + Q[u, R((l_i, r_i)_{i \in [d]}, \dimn(v), \thr(v)), k' - k''],\\
    & \min_{k'' \in [0,k' - 1]} \min_{j \in [d]} \min_{x \in D_{j}} Q[w, L((l_i, r_i)_{i \in [d]}, j, x), k'']\\
    & \quad\quad + Q[u, R((l_i, r_i)_{i \in [d]}, j, x), k' - k'' - 1].\\
  \end{aligned}
  \right.\]
  The time complexity of computing each state increases by a factor of $d$, resulting in the total time complexity of $\OO((D + 1)^{2d+1} ndk^2 + ns)$.
\end{proof}
}

\begin{theorem}[\appref{thm:adj+exc-fpt-d-D}]
\label{thm:adj+exc-fpt-d-D}
  \pLS{} can be solved in $\OO((D + 1)^{2d+1} ndk^8 + ns)$ time.
\end{theorem}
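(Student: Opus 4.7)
The plan is to extend the dynamic-programming framework of \Cref{thm:adj-fpt-d-D} and \Cref{thm:exc-fpt-d-D} by enlarging the state to carry separate budgets for each of the four operation types. Define
\[ Q[v, (l_i, r_i)_{i \in [d]}, \kadj, \kexch, \krepl, \krais] \]
as the minimum number of errors on the examples $E[(l_i,r_i)_{i \in [d]}]$ achievable by a modified subtree installed at position $v$ of the original decision tree when at most $\kadj$ threshold adjustments, $\kexch$ cut exchanges, $\krepl$ subtree replacements, and $\krais$ subtree raisings are used. A yes-instance of \pLS{} corresponds to $Q$ at the root with the full feature-space box and the input budgets being at most $t$. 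Leaves are handled as in \Cref{thm:adj-fpt-d-D}, by counting misclassifications under the majority label independently of the budgets.

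At an internal node $v$ with children $w, u$ the recurrence takes the minimum over: (i) keeping the original cut; (ii) adjusting $\thr(v)$, enumerating $x \in D_{\dimn(v)}$ and decrementing $\kadj$; (iii) exchanging the cut at $v$, enumerating $j \in [d]$ and $x \in D_j$ and decrementing $\kexch$; (iv) collapsing the subtree at $v$ to a leaf via subtree replacement; and (v) installing at $v$ a modified subtree of a descendant via subtree raising. In each of (i)--(iii) the remaining four-tuple of budgets is distributed between $w$ and $u$ in all $(k+1)^4$ ways. For case (iv) I maintain in parallel a table $L[v, \cdot]$ recording the min-error achievable when the final subtree at $v$ is a single leaf; the recurrence for $L$ at an internal node requires both children to be leaves in the final tree and charges one $\krepl$ at $v$, so that collapsing $h$ remaining internal nodes consumes $h$ units of $\krepl$, permitting interleaving with cut modifications and raisings below.

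The main obstacle, as in the raising algorithm of \cite{HSSS25}, is case (v): naively iterating over all descendants of $v$ would blow up the per-entry work by $\Theta(s)$. I avoid this with the standard minimum-propagation trick, maintaining alongside $Q$ an auxiliary table
\[ M[v, \cdot] \coloneqq \min\bigl(Q[v, \cdot], M[w, \cdot], M[u, \cdot]\bigr), \]
computable bottom-up in $O(1)$ per entry, which stores the best $Q$-value over $v$ and all its descendants at the same box and budget tuple. Case (v) then reduces to $Q[v, \cdot, \krais] \le M[w_c, \cdot, \krais - 1]$ for each child $w_c$ of $v$, correctly modelling the replacement of $v$'s subtree by any descendant's further-modified subtree at a cost of one raising, with the examples in $v$'s box being routed through it. For the complexity: $Q$ has $O(n (D+1)^{2d} k^4)$ entries, and at an internal node each is computed in $O((k+1)^4 dD)$ time because of the four-budget split between the children and the $O(dD)$ enumeration used in case (iii), while cases (iv) and (v) contribute only $O(1)$ per entry thanks to $L$ and $M$. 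Summed over all positions this yields $O((D+1)^{2d+1} ndk^8)$ for the cuts and $O(ns)$ for the leaves, matching the claimed bound.
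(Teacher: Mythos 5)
Your proposal follows essentially the same route as the paper: the same table $Q[v,(l_i,r_i)_{i\in[d]},\kadj,\kexch,\krepl,\krais]$ over threshold-sequence boxes, the same five cases at a cut, and the same running-time arithmetic ($O(n(D+1)^{2d}k^4)$ entries times $O(k^4dD)$ per cut entry, plus $O(ns)$ for leaves). The substantive divergence is in how the two pruning operations are charged against their budgets, and here you and the paper are solving slightly different problems. The paper's table demands that \emph{exactly} $\krepl'$ and $\krais'$ \emph{nodes} be pruned by the respective operations (matching the node-counting parameterization of Harviainen et al.\ and the paper's implementation, where $\krepl$ counts cut nodes); accordingly, its replacement case only requires $s_v=\krepl'+\krais'$ with $\krepl'\ge 1$ (the collapsed nodes may be charged to either budget), and its raising case only ever raises a \emph{child} of $v$, charging all discarded nodes of the sibling subtree to $\krepl''+\krais''$ — multi-level raises then decompose correctly into repeated applications. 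Your accounting instead charges one unit of $\krais$ per raise regardless of how many nodes disappear, forbids mixing the two budgets when collapsing to a leaf, and treats all budgets as ``at most.'' Under the paper's semantics your case (v) undercounts the cost of a raise and case (iv) rejects mixed collapses, so the two algorithms return different answers on the same instance; moreover, with ``at most'' pruning budgets the pruning component largely trivializes on reasonable trees (the paper uses ``exactly'' precisely because replacement never reduces errors there).

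That said, if one reads \pLS{} as counting \emph{operations} (which the paper's informal problem statement also suggests), your construction is sound and in one respect cleaner than the paper's: the min-propagation table $M[v,\cdot]=\min(Q[v,\cdot],M[w,\cdot],M[u,\cdot])$ correctly implements ``raise any descendant to $v$ for one unit'' in $O(1)$ per entry, which is exactly what operation-counting requires and what a child-only recurrence cannot express. The auxiliary table $L$ for the replacement case is unnecessary overhead — once the subtree at $v$ becomes a single leaf, the error is just the minority count in $v$'s box, which the paper reads off directly — but it is not wrong. To align with the theorem as the paper proves it, you would need to (a) switch the pruning budgets to exact node counts, (b) allow $\krepl$ and $\krais$ to be mixed when discarding a subtree, and (c) replace the $M$-based raising step by the child-only version that charges the discarded sibling's $s$-value to the budgets.
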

\appendixproof{thm:adj+exc-fpt-d-D}{
\begin{proof}
Essentially, we combine the two dynamic programs from \Cref{thm:adj-fpt-d-D,thm:exc-fpt-d-D} while also allowing subtree replacement as a pruning operation.
Formally, in \pLS{} we are equipped with numbers~$\kadj, \kexch, \krepl, \krais$ which denote the number of allowed threshold adjustments, cut exchanges, subtree replacement operations, and subtree raising operations, respectively.
As in \Cref{thm:adj-fpt-d-D}, we let $D_i = \{e[i] \colon e \in E\} \cup \{-\infty\}$ for $i \in [d]$ be the set of distinct values for feature~$i$ with an extra value of $-\infty$. 
Again, a \emph{threshold sequence} $(l_i, r_i)_{i \in [d]}$ encodes for each feature $i \in [d]$ an interval $(l_i, r_i]$ with $l_i \le r_i$ and $l_i, r_i \in D_i$, characterizing a subset of the feature space. We thus let
  \[ E[(l_i, r_i)_{i \in [d]}] \coloneqq \{e \in E \colon l_i < e[i] \le r_i\text{ for all }i \in [d]\} \]
  be the set of examples within the box defined by a threshold sequence. 
  
  Now, however, the definition of table~$S$ and the updates are more complicated to incorporate the 4~different operations.

\[ Q[v, (l_i, r_i)_{i \in [d]}, \kadj', \kexch', \krepl', \krais'] = \\ 
  \left\lbrace 
  \begin{aligned} 
    & \text{The minimum number of misclassifications achievable}   \\
    & \text{if our decision tree is the subtree rooted at } v \in V \\
    & \text{on our initial decision tree } T \text{ on the set of}  \\
    & \text{examples } E[(l_i, r_i)_{i \in [d]}] \text{ if we make}\\
    & \text{(1) at most } \kadj' \text{ threshold adjustments,} \\
    & \text{(2) at most } \kexch' \text{ cut exchanges,} \\
    & \text{(3) prune exactly } \krepl' \text{ nodes with subtree replacement, and}\\
    & \text{(4) prune exactly } \krais' \text{ nodes with subtree raising.}
    \end{aligned}
  \right.\]

  Note that for subtree replacement we demand \emph{exactly}~$\krepl'$ operations instead of at most since we assume the input decision tree~$T'$ is reasonable and thus no subtree replacement operation on~$T'$ can reduce the number of errors. 
  We also demand \emph{exactly}~$\krais$ raising operations since usually they increase the number of errors.
  Thus, we have a positive answer to \pLS{} if $Q[r, (-\infty, \infty)_{i \in [d]}, \kadj, \kexch, \krepl, \krais] \le t$ when the root of the initial decision tree is $r$. 
  Again, we define $L((l_i, r_i)_{i \in [d]}, j, x)$ as the threshold sequence identical to $(l_i, r_i)_{i \in [d]}$ except that for $i = j$ we have the interval $(\max\{l_i, x\}, r_i)$ and we define $R((l_i, r_i)_{i \in [d]}, j, x)$ analogously with an interval $(l_i, \max\{r_i, x\})$.

  Consider a leaf node~$v$.
  We set a corresponding table entry with~$\krepl'\ge 1$ or~$\krais'\ge 1$ to~$\infty$ since in~$v$ no subtree replacement operation can be performed.
  For the remaining table entries corresponding to~$v$, that is, $\krepl'=0$ and~$\krais'=0$, we do the following: 
  For  all $\kadj' \le \kadj$ and~$\kexch'\le \kexch$, it suffices to count the number of examples in $E[(l_i, r_i)_{i \in [d]}]$ whose class labels differ from the majority, since the leaf can be relabeled to have the label of the majority. 
  
  For a cut~$v$, there are five cases to consider when combining subresults of the left child~$w$ and the right child~$u$ of $v$: (P1) We adjust the threshold of $v$, (P2) we make a cut exchange, (P3) we perform subtree replacement on~$v$, (P4) we perform subtree raising on~$v$ and we substitute the subtree rooted at~$v$ by the subtree rooted at~$u$ or the subtree rooted at~$w$ or (P5) $v$ is not changed. 
  Thus, we get a dynamic programming recurrence
  
  \[ Q[v, (l_i, r_i)_{i \in [d]}, \kadj', \kexch', \krepl', \krais'] = \min(X_1, X_2, X_3, X_4 , X_5)\]

where we always have $\kadj'' \in [0,\kadj']$, $\kexch'' \in [0,\kexch']$, $\krepl'' \in [0,\krepl']$, and $\krais'' \in [0,\krais']$ for

\[
\begin{aligned} 
   X_1 = & \min_{\substack{\kadj'', \kexch'', \krepl'', \krais''\\\kadj'' < \kadj'}} \min_{x \in D_{\dimn(v)}} Q[w, L((l_i, r_i)_{i \in [d]}, \dimn(v), x), \kadj'', \kexch'', \krepl'', \krais''] \\
    & \quad\quad + Q[u, R((l_i, r_i)_{i \in [d]}, \dimn(v), x), \kadj' - \kadj'' - 1, \kexch' - \kexch'', \krepl' - \krepl'', \krais' - \krais''],
\end{aligned}
\]

\[
\begin{aligned} 
X_2 = & \min_{\substack{\kadj'', \kexch'', \krepl'', \krais''\\\kexch'' < \kexch'}}  \min_{j \in [d]} \min_{x \in D_{j}} Q[w, L((l_i, r_i)_{i \in [d]}, \dimn(v), x), \kadj'', \kexch'', \krepl'', \krais''] \\
    & \quad\quad + Q[u, R((l_i, r_i)_{i \in [d]}, \dimn(v), x), \kadj' - \kadj'', \kexch' - \kexch'' - 1, \krepl' - \krepl'', \krais' - \krais''],
\end{aligned}
\]

\[
X_3=\left\lbrace 
  \begin{aligned} 
  & \min(\text{Blue}((l_i, r_i)_{i \in [d]})), \text{Red}((l_i, r_i)_{i \in [d]}))), & \text{if } s_v= \krepl' + \krais' \text{ and } \krepl'\ge 1, \\
  & \infty, & \text{otherwise}. \\
  \end{aligned}
  \right.
  \]

\[
\begin{aligned} 
X_4=\min_{\substack{\krepl'', \krais''\\\krais'' > 0}}\left\lbrace 
  \begin{aligned} 
  & Q[u, (l_i, r_i)_{i \in [d]}, \dimn(u), \thr(u), \kadj', \kexch', \krepl' - \krepl'', \krais' - \krais''] \text{ if } s_w= \krepl'' + \krais'' \\
  & Q[w, (l_i, r_i)_{i \in [d]}, \dimn(w), \thr(w), \kadj', \kexch', \krepl' - \krepl'', \krais' - \krais''] \text{ if } s_u= \krepl'' + \krais'',  \\
  \end{aligned}
  \right.
\end{aligned}
\]

\[
\begin{aligned} 
X_5 = & \min_{\kadj'', \kexch'', \krepl'', \krais''} Q[w, L((l_i, r_i)_{i \in [d]}, \dimn(v), \thr(v)), \kadj'', \kexch'', \krepl'', \krais''] \\
    & \quad\quad + Q[u, R((l_i, r_i)_{i \in [d]}, \dimn(v), \thr(v)), \kadj' - \kadj'', \kexch' - \kexch'', \krepl' - \krepl'', \krais' - \krais''].
\end{aligned}
\]

%
%
%  \[  
%  \min \left\lbrace 
%  \begin{aligned} 
%%
%%
%%
%%
%%    
%    & \min_{\alpha'' \in [\alpha'-1], \beta'' \in [\beta'], \gamma'' \in [\gamma']} \min_{x \in D_{\dimn(v)}} Q[w, L((l_i, r_i)_{i \in [d]}, \dimn(v), x), \alpha'', \beta'', \gamma''] & \\
%%
%%
%    & \quad\quad + Q[u, R((l_i, r_i)_{i \in [d]}, \dimn(v), x), \alpha' - \alpha'' - 1, \beta' - \beta'', \gamma' - \gamma''] & (\text{P1}),\\
%    %
%%
%%
%%
%%
%%    
%    & \min_{\alpha'' \in [\alpha'], \beta'' \in [\beta'-1], \gamma'' \in [\gamma']}  \min_{j \in [d]} \min_{x \in D_{j}} Q[w, L((l_i, r_i)_{i \in [d]}, \dimn(v), x), \alpha'', \beta'', \gamma''] & \\
%%
%%
%    & \quad\quad + Q[u, R((l_i, r_i)_{i \in [d]}, \dimn(v), x), \alpha' - \alpha'', \beta' - \beta'' - 1, \gamma' - \gamma''] & (\text{P2}),\\
%%
%%
%%
%%
%%
%	& X_3 & (\text{P3}),\\
%%
%%
%%
%%
%%
%	& \min_{\alpha'' \in [\alpha'], \beta'' \in [\beta'], \gamma'' \in [\gamma']} Q[w, L((l_i, r_i)_{i \in [d]}, \dimn(v), \thr(v)), \alpha'', \beta'', \gamma''] & \\
%%
%%
%    & \quad\quad + Q[u, R((l_i, r_i)_{i \in [d]}, \dimn(v), \thr(v)), \alpha' - \alpha'', \beta' - \beta'', \gamma' - \gamma''] & (\text{P4}).\\
%  \end{aligned}
%  \right.\]

  where~$s_v$ is the number of inner nodes in the subtree rooted at~$v$, $s_u$ is the number of inner nodes in the subtree rooted at~$w$, and~$s_w$ is the number of inner nodes in the subtree rooted at~$u$, respectively.
  Moreover, $\text{Blue}((l_i, r_i)_{i \in [d]}))$ and $\text{Red}((l_i, r_i)_{i \in [d]}))$ are the number of \lpos{} and \lneg{} examples in the box~$(l_i, r_i)_{i \in [d]}$, respectively.
  
  Note that in~$X_3$ and~$X_4$ only node~$v$ has to be removed with the respective operation; the remaining nodes in the subtree of~$v$, $u$, or~$w$ can be removed with any of the two operations.

  To recover one of the possible solutions, backtracking in the dynamic programming table is straightforward.

  Note that the number of entries is bounded from above by $n \cdot (D + 1)^{2d} \cdot k^4$ where~$k=\max(\kadj, \kexch, \krepl, \krais)$, and computing the entry for each cut takes $\OO(D \cdot k^4)$ time. For each of the $s$ leaves, the computation takes $\OO(n)$ time. Thus, the total time complexity is $\OO((D + 1)^{2d+1} nk^8 + ns)$.
\end{proof}
}

\paragraph{The parameter\boldmath~$s+t$.}
We now provide an FPT-algorithm for \pAdj{} with respect to~$s+t$ by iteratively using binary search to determine the new thresholds.
I contrast, in \Cref{prop-hardness-cut-exchange} we showed that \pExc{} remains W[2]-hard with respect to~$s$, even if~$t=0$.
Hence, an FPT-algorithm for~$s+t$ is unlikely.
However, if we additionally include the number~$d$ of features in the parameter, then we can obtain an FPT-algorithm.

\begin{theorem}\label{thm:adj-fpt-s-t}
  \pAdj\ can be solved in $s^{\OO(3s+t)}\cdot \poly(|\mathcal{I}|)$~time.
\end{theorem}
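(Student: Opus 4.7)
My plan is a two-phase algorithm: first, enumerate high-level ``configurations'' of a potential target tree; second, for each configuration, verify feasibility via a recursive binary search. A configuration consists of a subset $M\subseteq V(T)$ of cuts to adjust (at most $2^s$ choices), a candidate class label $\lambda'_v\in\{\lpos,\lneg\}$ for every leaf~$v$ (at most $2^{s+1}$ choices), and a non-negative integer~$t_v$ for every leaf~$v$ with $\sum_v t_v \le t$ (at most $\binom{t+s+1}{s+1}$ choices). A standard estimate bounds the total number of configurations by $s^{\OO(s+t)}$.

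Given a configuration, I would check whether there exist thresholds for the cuts in~$M$ such that each leaf~$v$ receives exactly $t_v$ examples whose labels differ from $\lambda'_v$. My plan is to process the tree top-down: at each cut~$v\in M$, assuming thresholds of ancestors are already fixed, the set $E[v]$ of examples reaching~$v$ is determined, and the new threshold at~$v$ splits $E[v]$ in feature $\dimn(v)$. Sorting $E[v]$ by that feature yields at most $|E[v]|+1\le n+1$ candidate splits, indexed by ranks. For each candidate rank, I would recurse on the left and right subtrees of~$v$, carrying along the portion of the target error distribution assigned to each side.

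The binary-search component exploits the following monotonicity: as the split rank at~$v$ grows, the number of each class going left (resp., right) is monotone. Since the configuration fixes the total number of wrong-class examples in each subtree of~$v$, a binary search over the rank at $v$ rules out all but $\OO(\log n)$ candidates per cut. Combined over $s$ cuts, the total work per configuration is $(\log n)^{\OO(s)}\cdot \poly(|\mathcal{I}|) \le s^{\OO(s)}\cdot \poly(|\mathcal{I}|)$. Multiplying by the number of configurations gives $s^{\OO(s+t)}\cdot s^{\OO(s)}\cdot\poly(|\mathcal{I}|)$, which lies within the claimed bound of $s^{\OO(3s+t)}\cdot\poly(|\mathcal{I}|)$.

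The main obstacle I expect is formalizing the recursive binary search: while class-count monotonicity at a single cut is straightforward, one has to argue that infeasibility of the recursive subproblem moves monotonically with the split rank once the configuration is fixed, so that binary search has a well-defined ``too left''/``too right'' verdict. A cleaner fallback, if the monotonicity argument turns out to be fragile, is to restrict the candidate ranks at each cut to an $\OO(s+t)$-sized set derived directly from the target error counts (for instance, the ranks where the wrong-class count going left equals the target of one of the potential left-subtree leaves), still keeping the total within $s^{\OO(3s+t)}\cdot\poly(|\mathcal{I}|)$.
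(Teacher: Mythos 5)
Your plan matches the paper's proof essentially step for step: guess the set of adjusted cuts, the leaf labels, and a per-leaf error vector summing to at most $t$ (giving $s^{\OO(s+t)}$ configurations), then determine the unknown thresholds by a recursive binary search whose $(\log D)^{\OO(s)}$ cost is absorbed into $s^{\OO(s)}\cdot\poly(|\mathcal{I}|)$ by the same win--win estimate. The monotonicity obstacle you flag is exactly the crux the paper addresses, and it resolves it the way you suggest: the binary search returns the \emph{largest} threshold for which the left subtree is feasible (feasibility there is monotone since shrinking the left example set can only decrease per-leaf errors, by induction), and a single feasibility check of the right subtree at that threshold is then decisive because lowering the threshold only sends more examples right.
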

\begin{proof}
Let~$((E,\lambda),T,k,t)$ be an instance of \pAdj\ and let~$T'$ be a solution.
Next, we describe our branch-and-bound algorithm:
First, we guess the $k$~nodes~$N$ where a threshold adjustment is done, that is, the $k$~nodes with different threshold in~$T'$ than in~$T$.
Clearly, by testing all $\OO(s^k)\in \OO(s^s)$~node subsets of size at most~$k$, in one branch the algorithm finds the correct node subset~$N$ where the adjustments are performed.
Moreover, since~$T$ has $s+1$~leaves, there are $2^{s+1}$~possible assignments of classes to the leaves.
Again, by testing all~$2^{s+1}$ of them, the correct one is found in one branch. 
In the following, we assume that the set~$N$ and this assignment are fixed. 

Second, we guess how many errors are done in each leaf in~$T'$.
Since~$T$ has size~$s$ and is binary, $T$ has at most $s+1$~leaves.
Now, for each of these leaves~$i$ we guess a number~$t_i\in[0,t]$ such that the sum of all these numbers is at most~$t$.
Let~$\mathbf{t}$ be the corresponding \emph{error vector}.
Number~$t_i$ represents the number of errors in leaf~$i$ of the solution~$T'$.
Clearly, by testing all $\OO(s^t)$~possibilities, in one branch the algorithm find the correct error distribution as in the solution~$T'$.
In the following, we assume that the error vector~$\mathbf{t}$ is fixed.

Finally, we use binary-search to obtain the concrete thresholds of the nodes~$N$.
This technique is inspired by a similar procedure to obtain the thresholds when computing a minimal size (or minimal depth) decision tree~\cite{EibenOrdyniakPaesaniSzeider23,GZ24,Kobourov23,OrdyniakS21}.
Consider a node~$p\in N$ for which we have to find the correct threshold~$t_p$ in~$T'$.
With the binary search, we determine the largest
threshold value~$t_p$ in the feature of node~$p$ for which we can find thresholds for all nodes with unknown thresholds in the left subtree of~$p$ respecting our guess on the number of errors in all leaves.
Moreover, if for threshold~$t_p$ we cannot find thresholds for all nodes with unknown thresholds in the right subtree of~$p$ respecting our guess on the number of errors in all leaves, then we deal with a no instance: 
We cannot move threshold~$t_p$ further to the left since then even more examples go to the right subtree of~$p$ and thus the number of errors in the right subtree cannot decrease.
The pseudocode of this binary search is shown in \Cref{alg:binary-search}.
We defer the analysis of correctness and time complexity to the \Cref{app:adj-fpt-s-t}.\qedhere

\begin{algorithm}[t]
	\DontPrintSemicolon
	
	\SetKwProg{Gl}{Global Variables:}{}{}
	
	\SetKwProg{Fn}{Function}{}{}
	\SetKwFunction{CT}{ComputeThresholds}
	\SetKwFunction{BS}{BinarySearch}
	
	\Gl{Decision tree~$T$, $N\subseteq V(T)$ of nodes with unknown threshold, error vector~$\mathbf{t}$}
	
	\Fn{\CT($E', n$)}{
		
		\KwIn{An example set~$E'\subseteq E$ and a node~$n\in T$.}
		
		\KwOut{\texttt{True} if and only if we can assign thresholds to all nodes with unknown threshold in the decision tree corresponding to the subtree rooted at~$n$ while respecting the error vector~$\mathbf{t}$ if we only consider the examples~$E'$.}
		
		\BlankLine
		
		%\If {$n$ is a leaf \tcp*{check if error bound according to~$\mathbf{t}$ is met}}
		\If(\tcp*[f]{check if error bound according to~$\mathbf{t}$ is met}){$n$ is a leaf} 	
{			
			Let $c$ be the class label of $n$ and~$t_n$ be the entry of~$\mathbf{t}$ corresponding to~$n$
			
			% \lIf {more than~$t_n$ examples of~$E'$ do not have label~$c$}
			% {\Return \texttt{False}}
			% \lElse {\Return \texttt{True}}
			\leIf {more than~$t_n$ examples of~$E'$ do not have label~$c$}{\Return \texttt{False}}{\Return \texttt{True}}					
			
			}
		
		\lIf(\tcp*[f]{$n$ has an unknown threshold}){ $n\in N$ }{$z=\BS(E',n)$ \label{bs-for-thr}}
			\lElse {$z=\thr(n)$}
		
		$p=$ right child of~$n$, \,\,\, $x=\CT(E'[f_{\dimn(n)}>z], p)$ \label{rec-call1}
		
		$q=$ left child of~$n$, \,\,\, $y=\CT(E'[f_{\dimn(n)}\le z], q)$ \label{rec-call2}

		\Return $x$ AND $y$
		}

	%\BlankLine
		
	\Fn{\BS($E', n$)}{	
	
	\KwIn{An example set~$E'\subseteq E$ and a node~$n\in T$.}
		
		\KwOut{The largest threshold~$x$ of~$\dimn(n)$ s.t. we can populate all unknown thresholds in the left subtree of~$n$ while respecting the error vector~$\mathbf{t}$ if we only consider the examples~$E'$.}
		
		{Set $D$ to be an array containing all thresholds of feature~$\dimn(n)$ in ascending order}
		
		{Set $L = 0$, $R = |\texttt{length}(D)|-1$, $b = 0$, \,\,\, $q=$ left child of~$n$}
		
		\While{$L \le R$}{
			{$m = \lfloor(L+R) / 2\rfloor$}
			
			\lIf{$\CT(E'[f_{\dimn(n)} \leq D[m]], q) = \texttt{\textup{True}}$}
				{$L = m+1$, $b = 1$}
				
			\lElse
			{$R = m-1$, $b = 0$}	
		}
		\lIf{$b = 1$}{\Return $D[m]$}

\Return $D[m-1]$, with $D[-1] = D[0]-1$
		}
	  
	\caption{Binary search to populate the unknown thresholds of the decision tree~$T$.
	The initial call is \texttt{ComputeThresholds}($E,r$), where~$E$ is the set of examples and~$r$ is the root of~$T$.
	}
	\label{alg:binary-search}
\end{algorithm}

\toappendix{
  \subsection{Missing details of \Cref{thm:adj-fpt-s-t}}\label{app:adj-fpt-s-t}
We next argue that \Cref{alg:binary-search} is correct.
Clearly, if \Cref{alg:binary-search} outputs \texttt{True} a threshold assignment to each node in~$N$ respecting the error vector~$\mathbf{t}$ was found and thus the input instance is a yes-instance.
For the converse assume that there exists at least one solution.
Now we argue that \Cref{alg:binary-search} outputs \texttt{True}.
    We show this via induction over $|N|$.
  For the base case~$|N|=0$, no threshold needs to be determined and the algorithm simply checks if the error bound of each leaf is fulfilled.
  Now, assume an the induction hypothesis that the statement holds for all instances where~$|N|<q$.
  Since the algorithm does nothing for nodes where the threshold is already known, it is safe to assume that the root~$r$ of~$T$ has an unknown threshold.
  Moreover, assume that the left subtree~$T_\ell$ of~$r$ has $x_1$~nodes with an unknown threshold and that the right subtree~$T_r$ of~$r$ has $x_2$~nodes with an unknown threshold.
  Clearly, $x_1<|N|$ and~$x_2<|N|$.
  Tree~$T_r$ witnesses that there is a threshold assignment for all $x_2$~nodes with an unknown threshold in~$T_r$ for $E[f_{\dimn(n)}>\thr{r^*}]$, where~$\thr(r^*)$ is the threshold of the root~$r$ in a fixed solution.
  An analogous statement is true for~$T_\ell$.
 Thus, in  \Cref{bs-for-thr} we find a threshold $z \ge \thr(p^{*})$ because, by the induction hypothesis, the recursive calls to \Cref{rec-call1,rec-call2} in which $D[m] \leq \thr(p^{*})$ will return \texttt{True}.
 Again, by the induction hypothesis we obtain threshold assignments for all other nodes with an unknown threshold.
 Consequently, \Cref{alg:binary-search} is correct.

  It remains to analyze the running time of our algorithm:
  There are $\OO(s^s)$~possibilities for the up to $k$~nodes where a threshold adjustment is performed.
  Moreover, there are $\OO(s^t)$~possibilities for the error vector.
  Finally, it remains to analyze the running time of \Cref{alg:binary-search}.
  Observe that the number of recursive calls is~$\OO(\log(D))$.
  Moreover, the depth is bounded by~$s$, the number of inner nodes of~$T$, since after each recursive call one additional threshold is fixed.
  Consequently, \Cref{alg:binary-search} has a running time of~$\log(D)^{\OO(s)}$.
  Analogously to Kobourov et al.~\cite[Theorem 3.2.]{Kobourov23}, one can show that
$\log(D)^{\OO(s)}\in \OO(s^{2s}\cdot D^{1/s})$.
Hence, we obtain the desired running time bound.
}
\end{proof}

\begin{theorem}[\appref{thm:exc-fpt-d-s-t}]
  \label{thm:exc-fpt-d-s-t}
  \pExc{} can be solved in $s^{\OO(3s + t)} d^k \cdot |\mathcal{I}|^{\OO(1)}$ time. 
\end{theorem}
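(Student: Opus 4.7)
The plan is to reduce \pExc{} to \pAdj{} by guessing the additional structural choice that \pExc{} offers over \pAdj{}, namely the choice of which feature to assign to each exchanged cut, and then invoke the algorithm from \Cref{thm:adj-fpt-s-t} on the resulting subinstances.

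First, I would branch over the set~$N \subseteq V(T)$ of at most $k$~cuts whose feature is changed. As in the proof of \Cref{thm:adj-fpt-s-t}, there are $\OO(s^k) \subseteq \OO(s^s)$~such subsets. Second, for each chosen node in~$N$, I would branch over the new feature assigned to that node; this gives $d^k$~branches. After these two branching steps, the \emph{underlying} feature assignment~$\dimn(\cdot)$ of the decision tree is fully determined: cuts outside~$N$ keep their original feature, and cuts in~$N$ have the freshly guessed feature. The only remaining freedom consists of (i)~choosing a threshold for each cut whose threshold is not already fixed (which includes all cuts in~$N$, and possibly none outside~$N$) and (ii)~choosing labels for the leaves. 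But this is exactly the situation addressed by the algorithm from \Cref{thm:adj-fpt-s-t}: a fixed tree structure with a fixed feature assignment, up to $k \leq s$~unknown thresholds, and free leaf labeling, where we want at most $t$~errors.

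Accordingly, for each of the $\OO(s^s \cdot d^k)$~guesses, I would invoke the subroutine from the proof of \Cref{thm:adj-fpt-s-t}: guess the leaf label vector ($2^{s+1}$~possibilities), guess the error vector~$\mathbf{t}$ across the $s+1$~leaves ($\OO(s^t)$~possibilities), and then run the recursive binary-search procedure \textsc{ComputeThresholds} to determine the remaining thresholds or certify infeasibility. Accept iff some branch succeeds. The total running time is
\[
  \OO(s^s) \cdot d^k \cdot 2^{s+1} \cdot \OO(s^t) \cdot s^{\OO(s)} \cdot |\mathcal{I}|^{\OO(1)} \,\subseteq\, s^{\OO(3s+t)} \cdot d^k \cdot |\mathcal{I}|^{\OO(1)},
\]
matching the claimed bound.

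The main obstacle is essentially bookkeeping rather than anything genuinely new: once the features on the exchanged cuts are fixed, correctness of the binary-search procedure carries over verbatim from \Cref{thm:adj-fpt-s-t}, because that proof nowhere uses that the features originated from the input tree---it only needs a fixed feature assignment together with an error vector to guide the search. One small point to double-check is that the extra freedom of \pExc{} (a cut may keep its feature and change only its threshold, or vice versa) is correctly captured by our branching: branching on the subset~$N$ of \emph{modified} cuts and then on the new feature covers both cases, since a ``cut exchange'' that keeps the feature and changes only the threshold is simply the branch where the guessed new feature equals the original feature, and is handled identically by the binary search. With these observations, correctness and the stated running time follow immediately.
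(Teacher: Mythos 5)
Your proposal is correct and follows essentially the same route as the paper: guess the set of at most $k$ exchanged cuts, additionally guess the new feature for each of them (the $d^k$ factor), and then run the \Cref{thm:adj-fpt-s-t} machinery (leaf labels, error vector, recursive binary search for thresholds) unchanged. Your extra remark that the binary-search correctness only needs a fixed feature assignment---not that the features come from the input tree---is exactly the observation the paper relies on implicitly.
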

\appendixproof{thm:exc-fpt-d-s-t}{
\begin{proof}
We modify the algorithm behind \Cref{thm:adj-fpt-s-t} for the FPT-algorithm for~$s+t$ for \pAdj{} as follows:
After the guess of the $k$~nodes where an operation is performed, we additionally guess which feature is used in any of these $k$~nodes.
Note that we need to consider at most $d^k$~possibilities and clearly in one branch this guess is correct.
Afterwards, the algorithm behind \Cref{thm:adj-fpt-s-t} works completely analogously.
Consequently, we obtain an additional factor of~$d^k$ in the running time.
\end{proof}
}

\paragraph{The parameter\boldmath~$k$.}
Finally, for the parameter~$k$ we obtain XP-algorithms via brute-force.

\begin{proposition}
[\appref{thm:adj-xp-k}]
\label{thm:adj-xp-k}
\label{thm:exc-xp-k}
1. \pAdj{} can be solved in time $\OO\left(s^{k + 1} D^k n\right)$.

2. \pAdj\ can be solved in time $\OO\left((D + 1)^s sn\right)$.

3. \pExc{} can be solved in $\OO(((D + 1)ds)^{k} ns)$ time.
\end{proposition}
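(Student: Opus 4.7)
The plan is to prove all three bounds by straightforward brute-force enumeration, where the main observations are (i) only finitely many ``meaningful'' thresholds exist per feature, and (ii) once the tree structure is fixed, the optimal leaf labels are simply the majority class in each leaf, which can be computed while routing all examples.

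For part 1, I would first enumerate the subset $S$ of at most $k$ cuts in which to perform adjustments; there are $\binom{s}{k}\le s^k$ such subsets. For each cut $v\in S$ only the at most $D$ distinct values of $E$ in feature $\dimn(v)$ give genuinely different partitions of the examples, so it suffices to try those, giving at most $D^k$ threshold combinations per subset. For each of the resulting $s^k\cdot D^k$ candidate trees, route the $n$ examples to their leaves in $O(sn)$ time, relabel each leaf by the majority class, count errors, and keep the best tree. This yields the claimed $O(s^{k+1}D^k n)$ bound.

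For part 2, when $k$ is not small but $s$ and $D$ are, I would bypass the choice of modified cuts altogether and simply enumerate all $(D+1)^s$ assignments of a threshold (from the $D$ distinct values in the relevant feature, plus possibly $-\infty$) to every cut of $T$. For each such assignment, verify in $O(s)$ time that it differs from the original tree in at most $k$ cuts and, if so, classify the $n$ examples in $O(sn)$ time using majority relabeling, giving the $O((D+1)^s sn)$ running time.

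For part 3, the same plan as in part 1 works with the additional freedom that each of the $k$ modified cuts can now also change its feature. I enumerate the $\le s^k$ choices of which cuts to replace, and for each of them try all $d$ features and all $\le D+1$ meaningful thresholds, for a total of $((D+1)ds)^k$ candidate trees; each evaluation again costs $O(ns)$, yielding the stated $O(((D+1)ds)^k ns)$ bound. The only step requiring any care across all three parts is the observation that it is safe to restrict thresholds to values appearing in the examples (so the search space is finite and of size $D$), together with the fact that optimally relabeling leaves reduces to taking a majority and therefore comes for free during the routing pass.
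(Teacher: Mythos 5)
Your proposal is correct and follows essentially the same brute-force approach as the paper's proof for all three parts: enumerate the modified cuts (or all threshold assignments in part 2), restrict to the at most $D$ (or $D+1$) meaningful thresholds per feature, relabel leaves by majority, and evaluate each candidate tree in $\OO(ns)$ time. No further comment is needed.
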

\appendixproof{thm:adj-xp-k}{
\begin{proof}
1.   
We enumerate over all subsets of adjusted nodes of size at most $k$ in $\OO(s^k)$ time. For each of these subsets, we try all combinations of the new thresholds in time $\OO(D^k)$. 
To each leaf we assign the most-dominant class label.  
  Finally, we verify whether performing these adjustments results in at most $t$ misclassifications in time $\OO(ns)$. In total, we use $\OO\left(s^{k + 1} D^k n\right)$ time.
  
2.
We use a straightforward brute force algorithm. We enumerate over all $(D + 1)^s$ possible combinations of adjustments and test if we made at most $k$ of them. 
To each leaf we assign the most-dominant class label. 
 Further, we test if we misclassify at most $t$ examples in $\OO(sn)$ time on each iteration.
 
3.
We brute force over all subsets of at most $k$ cuts to modify and enumerate over all possible $d$ features~$i$ and $D + 1$ thresholds in $D_i$. 
  To each leaf we assign the most-dominant class label.
  Finally, we verify in $\OO(ns)$ time whether we get at most $t$ errors.
\end{proof}
}

\section{Complementing Hardness Results for both Problems}
\label{sec:hardness}
\appendixsection{sec:hardness}

First, we show that both problems are W[1]-hard for~$d$ even if~$t=0$.
This result implies that the parameter~$s$ in \Cref{thm:exc-fpt-d-s-t} for \pExc{} cannot be dropped.
Second, we take a closer inspection of \pAdj.
More specifically, we provide hardness results for various parameters including~$k$.
One of our results implies that the simple brute-force algorithm behind \Cref{thm:adj-xp-k} for \pAdj{} cannot substantially improved, unless the ETH fails.

\paragraph{The parameter\boldmath~$d+t$.}
We show that both problems are unlikely to admit an FPT-algorithm for~$d+t$.
Our results are inspired by a hardness reduction of Harviainen et al.~\cite[Thm.~5.10]{HSSS25} who showed W[1]-hardness for decision tree pruning with subtree raising for the same parameter.
Our reductions, however, especially the one for \pAdj, are substantially more involved.% than the one of Harviainen et al.~\cite[Thm.~5.10]{HSSS25}.

For both problems, we reduce from \textsc{Multicolored Independent Set}, where the input is a graph~$G$, and~$\kappa\in\mathds{N}$, where the vertex set~$V(G)$ of $N$~vertices is partitioned into~$V_1,\ldots, V_\kappa$.
The question is whether~$G$ contains an independent set consisting of exactly one vertex per class~$V_i$.
\textsc{Multicolored Independent Set} is \W[1]-hard parameterized by~$\kappa$ and cannot be solved in $f(\kappa)\cdot n^{o(\kappa)}$~time unless the ETH fails~\cite{CyFoKoLoMaPiPiSa2015}.
In both reductions, we create two features~$d_i^<$ and~$d_i^>$ per class~$i$ such that all cuts in features~$d_i^<$ and~$d_i^>$ together correspond to a vertex selection in~$V_i$.
We achieve this as follows:
For each pair~$d_i^<$ and~$d_i^>$, we create examples which can only be separated in these two features and which have labels~$\lpos$ and~$\lneg$ alternatingly.
Hence, for each possible threshold~$x$ in features~$d_i^<$ and~$d_i^>$, the resulting decision tree~$T$ needs to contain either cut~$(d_i^<,x)$ or cut~$(d_i^>,x)$.
Furthermore, for each edge we create a~$\lneg$ \emph{edge example}.
If vertex~$v_{i,j}\in V_i$ is selected, then all edge examples corresponding to edges having an endpoint in color class~$i$ which is \emph{not}~$v_{i,j}$ will then be correctly classified by the resulting decision tree~$T$.
Thus, we can only correctly classify an edge example if we \emph{do not} select at least one endpoint of the corresponding edge.

For \pExc{} the input decision tree~$T'$ is a path with \lneg{} and \lpos~leaves alternatingly with cuts in a dummy feature~$\widehat{d}$.
To obtain~$T$, all cuts of~$T'$ which are in~$\widehat{d}$, need to be changed to cuts in features~$d_i^<$, $d_i^>$, and~$d_i^*$.
For \pAdj{} it is more complicated since we cannot change the feature of any cut:
Thus, we extend features~$d_i^<$ and~$d_i^>$ by new dummy thresholds which are used in the input decision tree~$T'$.
Since~$t=0$, however, $T'$ cannot be a path anymore.
Instead, we add new cuts in dummy cuts in new dummy features which cannot be adjusted, as we show.

\begin{theorem}[\appref{thm:exc-w-hard-d-t}]
\label{thm:exc-w-hard-d-t}
  \pExc\ is W[1]-hard for~$d$ even if $\delta_{\max} = 6$, $\ell=0$ and~$t = 0$. Under ETH, the problem cannot be solved in time~$|\mathcal{I}|^{o(d)}$.
\end{theorem}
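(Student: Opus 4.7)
The plan is to reduce from \textsc{Multicolored Independent Set} parameterized by the number~$\kappa$ of color classes; this problem is W[1]-hard in~$\kappa$ and, under ETH, admits no $f(\kappa)\cdot N^{o(\kappa)}$-time algorithm. Given a $\kappa$-partite graph~$G$ with parts $V_1,\ldots,V_\kappa$, I construct a \pExc{} instance whose starting decision tree~$T'$ is a long path of cuts all in a single dummy feature~$\widehat{d}$ with strictly alternating \lpos{} and \lneg{} leaves, and whose budget~$k$ equals the number of cuts on this path so that $\ell = 0$ and every cut must be exchanged. The target error bound is $t = 0$.

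For each color class~$i$ I introduce three features $d_i^<, d_i^>, d_i^*$. The vertex-selection gadget adds, for each candidate $v_{i,j} \in V_i$, a pair of oppositely-labeled ``separator'' examples whose feature values agree everywhere except in $d_i^<$ and $d_i^>$, and whose class labels alternate along~$j$. Zero errors on this gadget forces the modified tree to contain, between consecutive candidates, at least one cut in $d_i^<$ or $d_i^>$; a tight cut-counting argument shows that exactly one cut per class is ``flexible'', and its (feature, threshold) pair encodes a selected vertex $v_{i,j^\ast}$. The remaining cuts allocated to class~$i$ are absorbed into~$d_i^*$ at trivial thresholds. For each edge~$uv$ with $u\in V_i$, $v\in V_j$, I add a single \lneg{} edge example whose coordinates in $d_i^<, d_i^>, d_j^<, d_j^>$ are chosen so that it reaches a \lpos{} leaf iff $u$ and $v$ are both the selected vertices for their classes; otherwise one of the two class gadgets routes it to a \lneg{} leaf. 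Hence $t = 0$ is achievable iff the selected vertices form an independent set.

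The reduction is polynomial and uses $d = 3\kappa + O(1)$ features, so W[1]-hardness in~$d$ and the $|\mathcal{I}|^{o(d)}$ lower bound under ETH transfer directly from the source problem. The main obstacle will be the combined bookkeeping that simultaneously enforces $\ell = 0$, $t = 0$, and $\delta_{\max} \le 6$: no pair of oppositely-labeled examples may differ in more than six features, yet the separator examples must encode vertex selection in classes of arbitrary size. I would resolve this by letting each separator example deviate from the path-majority only in $\widehat{d}$ and the three features of its own class, and each edge example only in the six features of its two incident classes, matching the $\delta_{\max}$ bound exactly. The most delicate correctness step is ruling out unintended solutions---in particular, showing that any cut reassigned to another class's features, or a cut placed at a threshold splitting a single candidate instead of between two consecutive candidates, must create at least one misclassification among the separator or edge examples.
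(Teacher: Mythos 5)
Your high-level strategy coincides with the paper's: reduce from \textsc{Multicolored Independent Set}, take as input tree a path of cuts in a single dummy feature with alternating leaf labels, set $k=s$ (so $\ell=0$) and $t=0$, introduce per color class a pair of features $d_i^<,d_i^>$ with alternating-label examples that force one cut per boundary, and arrange that an edge example is misclassified exactly when both endpoints are selected. However, the vertex-selection mechanism you describe has a genuine gap. You claim that ``exactly one cut per class is flexible, and its (feature, threshold) pair encodes a selected vertex.'' A single axis-parallel cut in $d_i^<$ or $d_i^>$ splits the $p$ candidates of $V_i$ into a prefix and a suffix; it cannot isolate one candidate $v_{i,j^\ast}$. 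The working encoding is different: \emph{every} one of the $2(p-1)$ forced separations in class $i$ is flexible in \emph{which of the two features} realizes it, and the selected vertex is the switch point --- a zero-error tree must contain $d_i^< < x$ for all thresholds $x$ up to some $a_i$ and $d_i^> > x$ for all thresholds from $a_i$ upward, with no gaps (ruling out gaps is itself a separate argument), so that precisely the choice example of $v_i^{a_i}$ and the edge examples incident to $v_i^{a_i}$ fall through all of class $i$'s cuts to the final cut in $d^*$, where a single $\lpos$/$\lneg$ forcing pair decides their fate.

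A second problem is your plan to ``absorb remaining cuts into $d_i^*$ at trivial thresholds.'' The reverse direction of the reduction relies on the count being exactly tight: the path has precisely $2(p-1)\kappa+1$ inner nodes, which equals the number of forced cuts (one per separation plus one global cut in a single shared feature $d^*$), so no slack cuts exist. If slack cuts are available, a zero-error tree for a no-instance could redeploy them, e.g., to place an extra cut in $d_i^<$ or $d_i^>$ that peels a problematic edge example away from the $\lpos$ forcing example, and you provide no examples forbidding this. Finally, your $\delta_{\max}$ accounting bounds how many features each example deviates from the defaults in, but $\delta_{\max}$ is the maximum number of features in which two \emph{opposite-class} examples differ, so you must bound the sum of the (disjoint) deviation sets over such pairs; with three features per class plus $\widehat{d}$ this does not come out to $6$.
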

\appendixproof{thm:exc-w-hard-d-t}{
\begin{proof}
Our construction is inspired by a similar construction of Harviainen et al.~\cite[Thm.~5.10]{HSSS25} who showed W[1]-hardness for decision tree pruning with the so-called raising operation for the same parameter.

We reduce from \textsc{Multicolored Independent Set} where each color class has the same number~$p$ of vertices.
An example instance is shown in part~$a)$ of \Cref{fig-exc-hard-d-t=0}.
Formally, the input is a graph~$G$, and~$\kappa\in\mathds{N}$, where the vertex set~$V(G)$ of $N$~vertices is partitioned into~$V_1,\ldots, V_\kappa$ and~$|V_i|=p$ for each~$i\in[\kappa]$.
More precisely, $V_i\coloneqq\{v_i^1,v_i^2,\ldots, v_i^p\}$ and~$p\cdot \kappa= N$.
The question is whether~$G$ contains an independent set consisting of exactly one vertex per class~$V_i$.
\textsc{Multicolored Independent Set} is \W[1]-hard parameterized by~$\kappa$ and cannot be solved in $f(\kappa)\cdot n^{o(\kappa)}$~time unless the ETH fails~\cite{CyFoKoLoMaPiPiSa2015}.
The property that all color classes have the same number of vertices is only used to simplify the proof.

\textbf{Outline.}
For an overview of our reduction, we refer to \Cref{fig-exc-hard-d-t=0}.
The idea is to create two features~$d_i^<$ and~$d_i^>$ per color class~$i$ such that all cuts in features~$d_i^<$ and~$d_i^>$ correspond to a vertex selection in~$V_i$.
We achieve this as follows:
For each pair~$d_i^<$ and~$d_i^>$ of features we create examples which can only be separated in these two features and which have labels~$\lpos$ (\emph{separating examples}) and~$\lneg$ (\emph{choice examples}) alternatingly.
Hence, for each possible threshold~$x$ in features~$d_i^<$ and~$d_i^>$, the resulting decision tree~$T$ needs to contain either cut~$(d_i^<,x)$ or cut~$(d_i^>,x)$.
Furthermore, for each edge we create a~$\lneg$ \emph{edge example}.
If vertex~$v_i^j\in V_i$ is selected, then all edge examples corresponding to edges having an endpoint in color class~$i$ which is \emph{not}~$v_i$ will then be correctly classified by the resulting decision tree~$T$.
Thus, we can only correctly classify an edge example if we \emph{do not} select at least one endpoint of the corresponding edge.
Next, we have another feature~$d^*$ with only~2 thresholds to ensure that all $\lneg$~choice examples corresponding to selected vertices are correctly classified by the resulting decision tree~$T$ and that an edge example gets misclassified as~$\lpos$ if we select both endpoints of the corresponding edge.
Finally, the inner nodes of the input decision tree~$T'$ are a path with \lneg{} and \lpos~leaves alternatingly with cuts in a dummy feature~$\widehat{d}$.
Thus, in order to obtain~$T$, all cuts of~$T'$ which are in feature~$\widehat{d}$ need to be changed to cuts in features~$d_i^<$, $d_i^>$, and~$d_i^*$, respectively.

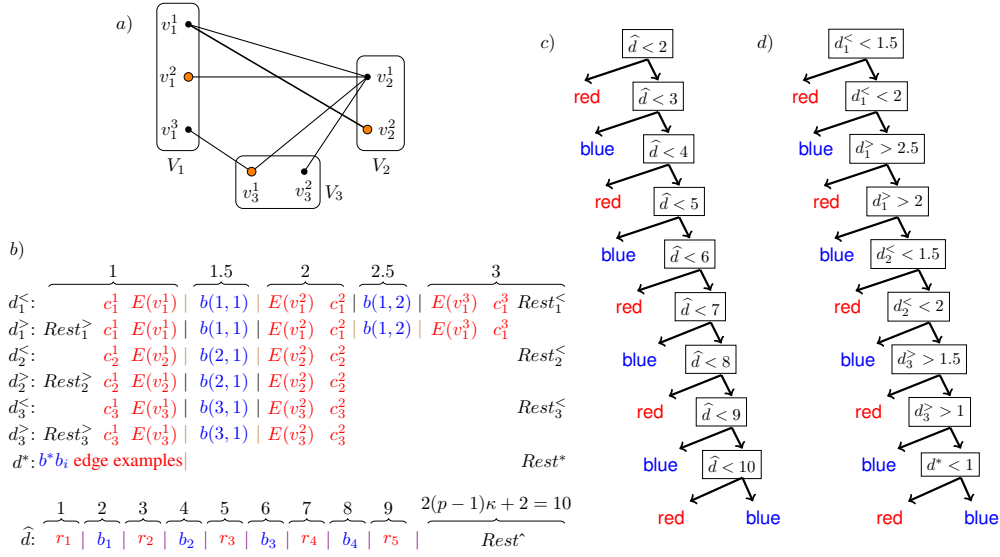
\begin{figure*}[t]

%%%%%%%%%%%%%%%%%%%%%%%%%%%%%%%%%%%%%%%%%%%%%%%%%%%%%%%%%%%%%%%%%%%%%%%
%%% Multicolored Independent Set Instance %%%%%%%%%%%%%%%%%%%%%%%%%%%%%
%%%%%%%%%%%%%%%%%%%%%%%%%%%%%%%%%%%%%%%%%%%%%%%%%%%%%%%%%%%%%%%%%%%%%%%
\begin{minipage}{0.5\textwidth}
\centering
\scalebox{0.7}{
\begin{tikzpicture}%[scale=0.50]
\node[label=left:{$a)$}](start) at (-1, 1) {};

\node[label=left:{$v_1^1$}](v11) at (-0.2, 1) [shape = circle, draw, fill=black, scale=0.07ex]{};
\node[label=left:{$v_1^2$}](v12) at (-0.2, 0) [shape = circle, draw, fill=orange, scale=0.11ex]{};
\node[label=left:{$v_1^3$}](v13) at (-0.2, -1) [shape = circle, draw, fill=black, scale=0.07ex]{};
\draw[rounded corners] (-0.8, -1.4) rectangle (0.,1.4) {};
\node[label=left:{$V_1$}](start) at (-0.0, -1.7) {};

\node[label=right:{$v_2^1$}](v22) at (3.2, 0) [shape = circle, draw, fill=black, scale=0.07ex]{};
\node[label=right:{$v_2^2$}](v23) at (3.2, -1) [shape = circle, draw, fill=orange, scale=0.11ex]{};
\draw[rounded corners] (3.0, -1.4) rectangle (3.9, 0.4) {};
\node[label=left:{$V_2$}](start) at (3.9, -1.7) {};

\node[label=below:{$v_3^1$}](v31) at (1.0, -1.8) [shape = circle, draw, fill=orange, scale=0.11ex]{};
\node[label=below:{$v_3^2$}](v32) at (2.0, -1.8) [shape = circle, draw, fill=black, scale=0.07ex]{};
\draw[rounded corners] (0.7, -2.5) rectangle (2.3, -1.5) {};
\node[label=left:{$V_3$}](start) at (3.0, -2.2) {};

\path [-,line width=0.2mm](v11) edge (v22);
\path [-,line width=0.2mm](v12) edge (v22);
\path [-,line width=0.3mm](v11) edge (v23);
\path [-,line width=0.2mm](v13) edge (v31);
\path [-,line width=0.2mm](v22) edge (v31);
\path [-,line width=0.2mm](v22) edge (v32);
\end{tikzpicture}
}
%\end{minipage}
%%%%%%%%%%%%%%%%%%%%%%%%%%%%%%%%%%%%%%%%%%%%%%%%%%%%%%%%%%%%%%%%%%%%%%%
%%% Visualization of data set %%%%%%%%%%%%%%%%%%%%%%%%%%%%%%%%%%%%%%%%%
%%%%%%%%%%%%%%%%%%%%%%%%%%%%%%%%%%%%%%%%%%%%%%%%%%%%%%%%%%%%%%%%%%%%%%%
%\begin{minipage}{0.36\textwidth}
%\centering
\\[2ex]

\scalebox{0.7}{
\begin{tikzpicture}%[scale=0.50]
\def\abstand{-0.5} 

\node[label=left:{$b)$}](start) at (-1.9, 1) {};

\node[label=left:{$d_1^{<}{:}$}](v11) at (-1.7, 0*\abstand) {};
\node[label=left:{\textcolor{red}{$E(v_1^1)$}}](v11) at (1, 0*\abstand) {};
\node[label=left:{\textcolor{red}{$c_1^1$}}](v11) at (-0.1, 0*\abstand) {};
\node[label=left:{\textcolor{blue}{$b(1,1)$}}](v11) at (2.35, 0*\abstand) {};
\node[label=left:{\textcolor{red}{$E(v_1^2)$}}](v11) at (3.6, 0*\abstand) {};
\node[label=left:{\textcolor{red}{$c_1^2$}}](v11) at (4.2, 0*\abstand) {};
\node[label=left:{\textcolor{blue}{$b(1,2)$}}](v11) at (5.45, 0*\abstand) {};
\node[label=left:{\textcolor{red}{$E(v_1^3)$}}](v11) at (6.7, 0*\abstand) {};
\node[label=left:{\textcolor{red}{$c_1^3$}}](v11) at (7.3, 0*\abstand) {};
\node[label=left:{$Rest_1^<$}](v11) at (8.4, 0*\abstand) {};

\node[label=left:{$d_1^{>}{:}$}](v11) at (-1.7, 1*\abstand) {};
\node[label=left:{$Rest_1^>$}](v11) at (-0.6, 1*\abstand) {};
\node[label=left:{\textcolor{red}{$c_1^1$}}](v11) at (-0.1, 1*\abstand) {};
\node[label=left:{\textcolor{red}{$E(v_1^1)$}}](v11) at (1, 1*\abstand) {};
\node[label=left:{\textcolor{blue}{$b(1,1)$}}](v11) at (2.35, 1*\abstand) {};
\node[label=left:{\textcolor{red}{$E(v_1^2)$}}](v11) at (3.6, 1*\abstand) {};
\node[label=left:{\textcolor{red}{$c_1^2$}}](v11) at (4.2, 1*\abstand) {};
\node[label=left:{\textcolor{blue}{$b(1,2)$}}](v11) at (5.45, 1*\abstand) {};
\node[label=left:{\textcolor{red}{$E(v_1^3)$}}](v11) at (6.7, 1*\abstand) {};
\node[label=left:{\textcolor{red}{$c_1^3$}}](v11) at (7.3, 1*\abstand) {};

\node[label=left:{$d_2^{<}{:}$}](v11) at (-1.7, 2*\abstand) {};
\node[label=left:{\textcolor{red}{$c_2^1$}}](v11) at (-0.1, 2*\abstand) {};
\node[label=left:{\textcolor{red}{$E(v_2^1)$}}](v11) at (1, 2*\abstand) {};
\node[label=left:{\textcolor{blue}{$b(2,1)$}}](v11) at (2.35, 2*\abstand) {};
\node[label=left:{\textcolor{red}{$E(v_2^2)$}}](v11) at (3.6, 2*\abstand) {};
\node[label=left:{\textcolor{red}{$c_2^2$}}](v11) at (4.2, 2*\abstand) {};
\node[label=left:{$Rest_2^<$}](v11) at (8.4, 2*\abstand) {};

\node[label=left:{$d_2^{>}{:}$}](v11) at (-1.7, 3*\abstand) {};
\node[label=left:{$Rest_2^>$}](v11) at (-0.6, 3*\abstand) {};
\node[label=left:{\textcolor{red}{$c_2^1$}}](v11) at (-0.1, 3*\abstand) {};
\node[label=left:{\textcolor{red}{$E(v_2^1)$}}](v11) at (1, 3*\abstand) {};
\node[label=left:{\textcolor{blue}{$b(2,1)$}}](v11) at (2.35, 3*\abstand) {};
\node[label=left:{\textcolor{red}{$E(v_2^2)$}}](v11) at (3.6, 3*\abstand) {};
\node[label=left:{\textcolor{red}{$c_2^2$}}](v11) at (4.2, 3*\abstand) {};

\node[label=left:{$d_3^{<}{:}$}](v11) at (-1.7, 4*\abstand) {};
\node[label=left:{\textcolor{red}{$c_3^1$}}](v11) at (-0.1, 4*\abstand) {};
\node[label=left:{\textcolor{red}{$E(v_3^1)$}}](v11) at (1, 4*\abstand) {};
\node[label=left:{\textcolor{blue}{$b(3,1)$}}](v11) at (2.35, 4*\abstand) {};
\node[label=left:{\textcolor{red}{$E(v_3^2)$}}](v11) at (3.6, 4*\abstand) {};
\node[label=left:{\textcolor{red}{$c_3^2$}}](v11) at (4.2, 4*\abstand) {};
\node[label=left:{$Rest_3^<$}](v11) at (8.4, 4*\abstand) {};

\node[label=left:{$d_3^{>}{:}$}](v11) at (-1.7, 5*\abstand) {};
\node[label=left:{$Rest_3^>$}](v11) at (-0.6, 5*\abstand) {};
\node[label=left:{\textcolor{red}{$c_3^1$}}](v11) at (-0.1, 5*\abstand) {};
\node[label=left:{\textcolor{red}{$E(v_3^1)$}}](v11) at (1, 5*\abstand) {};
\node[label=left:{\textcolor{blue}{$b(3,1)$}}](v11) at (2.35, 5*\abstand) {};
\node[label=left:{\textcolor{red}{$E(v_3^2)$}}](v11) at (3.6, 5*\abstand) {};
\node[label=left:{\textcolor{red}{$c_3^2$}}](v21) at (4.2, 5*\abstand) {};

\node[label=left:{$d^*{:}$}](v11) at (-1.7, 6*\abstand) {};
\node[label=left:{\textcolor{blue}{$b^* b_i$} \textcolor{red}{edge examples}}](v11) at (1.1, 6*\abstand) {};
\node[label=left:{$Rest^*$}](v11) at (8.4, 6*\abstand) {};

\draw[decorate, decoration={brace}, yshift=2ex]  (-1.7, 0.0) -- node[above=0.4ex] {$1$}  (0.75, 0.0);
\draw[decorate, decoration={brace}, yshift=2ex]  (1.05, 0.0) -- node[above=0.4ex] {$1.5$}  (2.1, 0.0);
\draw[decorate, decoration={brace}, yshift=2ex]  (2.45, 0.0) -- node[above=0.4ex] {$2$}  (3.9, 0.0);
\draw[decorate, decoration={brace}, yshift=2ex]  (4.20, 0.0) -- node[above=0.4ex] {$2.5$}  (5.1, 0.0);
\draw[decorate, decoration={brace}, yshift=2ex]  (5.5, 0.0) -- node[above=0.4ex] {$3$}  (8.1, 0.0);

\draw[decorate, decoration={brace}, yshift=2ex]  (-1.8 + 0*0.677 + 0*0.1, 9*\abstand) -- node[above=0.4ex] {$1$}  (-1.8 + 1*0.677 + 0*0.1, 9*\abstand);
\draw[decorate, decoration={brace}, yshift=2ex]  (-1.8 + 1*0.677 + 1*0.1, 9*\abstand) -- node[above=0.4ex] {$2$}  (-1.8 + 2*0.677 + 1*0.1, 9*\abstand);
\draw[decorate, decoration={brace}, yshift=2ex]  (-1.8 + 2*0.677 + 2*0.1, 9*\abstand) -- node[above=0.4ex] {$3$}  (-1.8 + 3*0.677 + 2*0.1, 9*\abstand);
\draw[decorate, decoration={brace}, yshift=2ex]  (-1.8 + 3*0.677 + 3*0.1, 9*\abstand) -- node[above=0.4ex] {$4$}  (-1.8 + 4*0.677 + 3*0.1, 9*\abstand);
\draw[decorate, decoration={brace}, yshift=2ex]  (-1.8 + 4*0.677 + 4*0.1, 9*\abstand) -- node[above=0.4ex] {$5$}  (-1.8 + 5*0.677 + 4*0.1, 9*\abstand);
\draw[decorate, decoration={brace}, yshift=2ex]  (-1.8 + 5*0.677 + 5*0.1, 9*\abstand) -- node[above=0.4ex] {$6$}  (-1.8 + 6*0.677 + 5*0.1, 9*\abstand);
\draw[decorate, decoration={brace}, yshift=2ex]  (-1.8 + 6*0.677 + 6*0.1, 9*\abstand) -- node[above=0.4ex] {$7$}  (-1.8 + 7*0.677 + 6*0.1, 9*\abstand);
\draw[decorate, decoration={brace}, yshift=2ex]  (-1.8 + 7*0.677 + 7*0.1, 9*\abstand) -- node[above=0.4ex] {$8$}  (-1.8 + 8*0.677 + 7*0.1, 9*\abstand);
\draw[decorate, decoration={brace}, yshift=2ex]  (-1.8 + 8*0.677 + 8*0.1, 9*\abstand) -- node[above=0.4ex] {$9$}  (-1.8 + 9*0.677 + 8*0.1, 9*\abstand);

\draw[decorate, decoration={brace}, yshift=2ex]  (5.5, 9*\abstand) -- node[above=0.4ex] {$2(p-1)\kappa+2=10$}  (8.1, 9*\abstand);

\node[label=left:{$\widehat{d}{:}$}](v11) at (-1.7, 9*\abstand + 0.08) {};

\node[label=left:{\textcolor{red}{$r_1$}}](v11) at (-1.0 + 0*0.777, 9*\abstand) {};
\node[label=left:{\textcolor{blue}{$b_1$}}](v11) at (-1.0 + 1*0.777, 9*\abstand) {};
\node[label=left:{\textcolor{red}{$r_2$}}](v11) at (-1.0 + 2*0.777, 9*\abstand) {};
\node[label=left:{\textcolor{blue}{$b_2$}}](v11) at (-1.0 + 3*0.777, 9*\abstand) {};
\node[label=left:{\textcolor{red}{$r_3$}}](v11) at (-1.0 + 4*0.777, 9*\abstand) {};
\node[label=left:{\textcolor{blue}{$b_3$}}](v11) at (-1.0 + 5*0.777, 9*\abstand) {};
\node[label=left:{\textcolor{red}{$r_4$}}](v11) at (-1.0 + 6*0.777, 9*\abstand) {};
\node[label=left:{\textcolor{blue}{$b_4$}}](v11) at (-1.0 + 7*0.777, 9*\abstand) {};
\node[label=left:{\textcolor{red}{$r_5$}}](v11) at (-1.0 + 8*0.777, 9*\abstand) {};

\node[label=left:{$Rest\text{\^{}}$}](v11) at (7.6, 9*\abstand) {};

\node[label=left:{\textbf{ \textcolor{violet}{$|$}}}](v11) at (-0.75 + 0*0.777, 9*\abstand) {};
\node[label=left:{\textbf{ \textcolor{violet}{$|$}}}](v11) at (-0.75 + 1*0.777, 9*\abstand) {};
\node[label=left:{\textbf{ \textcolor{violet}{$|$}}}](v11) at (-0.75 + 2*0.777, 9*\abstand) {};
\node[label=left:{\textbf{ \textcolor{violet}{$|$}}}](v11) at (-0.75 + 3*0.777, 9*\abstand) {};
\node[label=left:{\textbf{ \textcolor{violet}{$|$}}}](v11) at (-0.75 + 4*0.777, 9*\abstand) {};
\node[label=left:{\textbf{ \textcolor{violet}{$|$}}}](v11) at (-0.75 + 5*0.777, 9*\abstand) {};
\node[label=left:{\textbf{ \textcolor{violet}{$|$}}}](v11) at (-0.75 + 6*0.777, 9*\abstand) {};
\node[label=left:{\textbf{ \textcolor{violet}{$|$}}}](v11) at (-0.75 + 7*0.777, 9*\abstand) {};

\node[label=left:{\textbf{ \textcolor{violet}{$|$}}}](v11) at (5.6, 9*\abstand) {};

\node[label=left:{\textbf{ \textcolor{brown}{$|$}}}](v11) at (1.2, 6*\abstand) {};

\node[label=left:{\textbf{ \textcolor{brown}{$|$}}}](v11) at (1.2, 0*\abstand) {};
\node[label=left:{\textbf{ $|$}}](v11) at (1.2, 1*\abstand) {};
\node[label=left:{\textbf{ \textcolor{brown}{$|$}}}](v11) at (1.2, 2*\abstand) {};
\node[label=left:{\textbf{ $|$}}](v11) at (1.2, 3*\abstand) {};
\node[label=left:{\textbf{ $|$}}](v11) at (1.2, 4*\abstand) {};
\node[label=left:{\textbf{ \textcolor{brown}{$|$}}}](v11) at (1.2, 5*\abstand) {};

\node[label=left:{\textbf{ \textcolor{brown}{$|$}}}](v11) at (2.58, 0*\abstand) {};
\node[label=left:{\textbf{ $|$}}](v11) at (2.58, 1*\abstand) {};
\node[label=left:{\textbf{ \textcolor{brown}{$|$}}}](v11) at (2.58, 2*\abstand) {};
\node[label=left:{\textbf{ $|$}}](v11) at (2.58, 3*\abstand) {};
\node[label=left:{\textbf{ $|$}}](v11) at (2.58, 4*\abstand) {};
\node[label=left:{\textbf{ \textcolor{brown}{$|$}}}](v11) at (2.58, 5*\abstand) {};

\node[label=left:{\textbf{ $|$}}](v11) at (4.4, 0*\abstand) {};
\node[label=left:{\textbf{ \textcolor{brown}{$|$}}}](v11) at (4.4, 1*\abstand) {};

\node[label=left:{\textbf{ $|$}}](v11) at (5.65, 0*\abstand) {};
\node[label=left:{\textbf{ \textcolor{brown}{$|$}}}](v11) at (5.65, 1*\abstand) {};
\end{tikzpicture}%loosely dotted, line width=1mm
}
\end{minipage}
%%%%%%%%%%%%%%%%%%%%%%%%%%%%%%%%%%%%%%%%%%%%%%%%%%%%%%%%%%%%%%%%%%%%%%%
%%% Visualization of initial tree %%%%%%%%%%%%%%%%%%%%%%%%%%%%%%%%%%%%%
%%%%%%%%%%%%%%%%%%%%%%%%%%%%%%%%%%%%%%%%%%%%%%%%%%%%%%%%%%%%%%%%%%%%%%%
\begin{minipage}{0.2\textwidth}
\centering
\scalebox{0.7}{
\begin{tikzpicture}%[scale=0.50]
\def\mydistx{0.2}
\def\mydisty{-1}

\node[label=left:{$c)$}](start) at (-1.5, 0) {};

\node[](d1<1) at (0*\mydistx, 0*\mydisty) [shape = rectangle, draw, scale=0.2ex]{$\widehat{d}<2$};
\node[](d1<2) at (1*\mydistx, 1*\mydisty) [shape = rectangle, draw, scale=0.2ex]{$\widehat{d}<3$};
\node[](d1<25) at (2*\mydistx, 2*\mydisty) [shape = rectangle, draw, scale=0.2ex]{$\widehat{d}<4$};
\node[](d1<3) at (3*\mydistx, 3*\mydisty) [shape = rectangle, draw, scale=0.2ex]{$\widehat{d}<5$};
\node[](d3>>1) at (4*\mydistx, 4*\mydisty) [shape = rectangle, draw, scale=0.2ex]{$\widehat{d}<6$};
\node[](d3>>1) at (5*\mydistx, 5*\mydisty) [shape = rectangle, draw, scale=0.2ex]{$\widehat{d}<7$};
\node[](d3>>1) at (6*\mydistx, 6*\mydisty) [shape = rectangle, draw, scale=0.2ex]{$\widehat{d}<8$};
\node[](d3>>1) at (7*\mydistx, 7*\mydisty) [shape = rectangle, draw, scale=0.2ex]{$\widehat{d}<9$};
\node[](dstar) at (8*\mydistx, 8*\mydisty) [shape = rectangle, draw, scale=0.2ex]{$\widehat{d}<10$};

\draw [->,very thick](0*\mydistx, -0.3 + 0*\mydisty) -- (1*\mydistx, -0.7 + 0*\mydisty);
\draw [->,very thick](1*\mydistx, -0.3 + 1*\mydisty) -- (2*\mydistx, -0.7 + 1*\mydisty);
\draw [->,very thick](2*\mydistx, -0.3 + 2*\mydisty) -- (3*\mydistx, -0.7 + 2*\mydisty);
\draw [->,very thick](3*\mydistx, -0.3 + 3*\mydisty) -- (4*\mydistx, -0.7 + 3*\mydisty);
\draw [->,very thick](5*\mydistx, -0.3 + 4*\mydisty) -- (6*\mydistx, -0.7 + 4*\mydisty);
\draw [->,very thick](6*\mydistx, -0.3 + 5*\mydisty) -- (7*\mydistx, -0.7 + 5*\mydisty);
\draw [->,very thick](7*\mydistx, -0.3 + 6*\mydisty) -- (8*\mydistx, -0.7 + 6*\mydisty);
\draw [->,very thick](8*\mydistx, -0.3 + 7*\mydisty) -- (9*\mydistx, -0.7 + 7*\mydisty);
\draw [->,very thick](9*\mydistx, -0.3 + 8*\mydisty + 0.05) -- (10*\mydistx, -0.7 + 8*\mydisty);

\draw [->,very thick](0*\mydistx, -0.3 + 0*\mydisty) -- (-1*\mydistx -1, -0.7 + 0*\mydisty);
\draw [->,very thick](1*\mydistx, -0.3 + 1*\mydisty) -- (0*\mydistx -1, -0.7 + 1*\mydisty);
\draw [->,very thick](2*\mydistx, -0.3 + 2*\mydisty) -- (1*\mydistx -1, -0.7 + 2*\mydisty);
\draw [->,very thick](3*\mydistx, -0.3 + 3*\mydisty) -- (2*\mydistx -1, -0.7 + 3*\mydisty);
\draw [->,very thick](5*\mydistx, -0.3 + 4*\mydisty) -- (4*\mydistx -1, -0.7 + 4*\mydisty);
\draw [->,very thick](6*\mydistx, -0.3 + 5*\mydisty) -- (6*\mydistx -1, -0.7 + 5*\mydisty);
\draw [->,very thick](7*\mydistx, -0.3 + 6*\mydisty) -- (7*\mydistx -1, -0.7 + 6*\mydisty);
\draw [->,very thick](8*\mydistx, -0.3 + 7*\mydisty) -- (8*\mydistx -1, -0.7 + 7*\mydisty);
\draw [->,very thick](9*\mydistx, -0.3 + 8*\mydisty + 0.05) -- (9*\mydistx -1, -0.7 + 8*\mydisty);

\node[](d1<1) at (0*\mydistx-1.15, 1*\mydisty) {\textcolor{red}{$\lneg$}};
\node[](d1<2) at (1*\mydistx-1.15, 2*\mydisty) {\textcolor{blue}{$\lpos$}};
\node[](d2<1) at (2*\mydistx-1.15, 3*\mydisty) {\textcolor{red}{$\lneg$}};
\node[](d2>1) at (3*\mydistx-1.15, 4*\mydisty) {\textcolor{blue}{$\lpos$}};;
\node[](d3>2) at (4*\mydistx-1.15, 5*\mydisty) {\textcolor{red}{$\lneg$}};
\node[](d3>2) at (5*\mydistx-1.15, 6*\mydisty) {\textcolor{blue}{$\lpos$}};
\node[](d3>2) at (6*\mydistx-1.15, 7*\mydisty) {\textcolor{red}{$\lneg$}};
\node[](d3>2) at (7*\mydistx-1.15, 8*\mydisty) {\textcolor{blue}{$\lpos$}};
\node[](d3>2) at (8*\mydistx-1.15, 9*\mydisty) {\textcolor{red}{$\lneg$}};
\node[](d3>2) at (10*\mydistx+0.2, 9*\mydisty) {\textcolor{blue}{$\lpos$}};
\end{tikzpicture}
}
\end{minipage}
%%%%%%%%%%%%%%%%%%%%%%%%%%%%%%%%%%%%%%%%%%%%%%%%%%%%%%%%%%%%%%%%%%%%%%%
%%% Visualization of optimal resulting tree %%%%%%%%%%%%%%%%%%%%%%%%%%%
%%%%%%%%%%%%%%%%%%%%%%%%%%%%%%%%%%%%%%%%%%%%%%%%%%%%%%%%%%%%%%%%%%%%%%%
\begin{minipage}{0.2\textwidth}
\centering
\scalebox{0.7}{
\begin{tikzpicture}%[scale=0.50]
\def\mydistx{0.2}
\def\mydisty{-1}

\node[label=left:{$d)$}](start) at (-1.5, 0) {};

\node[](d1<1) at (0*\mydistx, 0*\mydisty) [shape = rectangle, draw, scale=0.2ex]{$d_1^<<1.5$};
\node[](d1<2) at (1*\mydistx, 1*\mydisty) [shape = rectangle, draw, scale=0.2ex]{$d_1^<<2$};
\node[](d1<25) at (2*\mydistx, 2*\mydisty) [shape = rectangle, draw, scale=0.2ex]{$d_1^>>2.5$};
\node[](d1<3) at (3*\mydistx, 3*\mydisty) [shape = rectangle, draw, scale=0.2ex]{$d_1^>>2$};
\node[](d3>>1) at (4*\mydistx, 4*\mydisty) [shape = rectangle, draw, scale=0.2ex]{$d_2^<<1.5$};
\node[](d3>>1) at (5*\mydistx, 5*\mydisty) [shape = rectangle, draw, scale=0.2ex]{$d_2^<<2$};
\node[](d3>>1) at (6*\mydistx, 6*\mydisty) [shape = rectangle, draw, scale=0.2ex]{$d_3^>>1.5$};
\node[](d3>>1) at (7*\mydistx, 7*\mydisty) [shape = rectangle, draw, scale=0.2ex]{$d_3^>>1$};
\node[](dstar) at (8*\mydistx, 8*\mydisty) [shape = rectangle, draw, scale=0.2ex]{$d^*<1$};

\draw [->,very thick](0*\mydistx, -0.3 + 0*\mydisty) -- (1*\mydistx, -0.7 + 0*\mydisty);
\draw [->,very thick](1*\mydistx, -0.3 + 1*\mydisty) -- (2*\mydistx, -0.7 + 1*\mydisty);
\draw [->,very thick](2*\mydistx, -0.3 + 2*\mydisty) -- (3*\mydistx, -0.7 + 2*\mydisty);
\draw [->,very thick](3*\mydistx, -0.3 + 3*\mydisty) -- (4*\mydistx, -0.7 + 3*\mydisty);
\draw [->,very thick](5*\mydistx, -0.3 + 4*\mydisty) -- (6*\mydistx, -0.7 + 4*\mydisty);
\draw [->,very thick](6*\mydistx, -0.3 + 5*\mydisty) -- (7*\mydistx, -0.7 + 5*\mydisty);
\draw [->,very thick](7*\mydistx, -0.3 + 6*\mydisty) -- (8*\mydistx, -0.7 + 6*\mydisty);
\draw [->,very thick](8*\mydistx, -0.3 + 7*\mydisty) -- (9*\mydistx, -0.7 + 7*\mydisty);
\draw [->,very thick](9*\mydistx, -0.3 + 8*\mydisty + 0.05) -- (10*\mydistx, -0.7 + 8*\mydisty);

\draw [->,very thick](0*\mydistx, -0.3 + 0*\mydisty) -- (-1*\mydistx -1, -0.7 + 0*\mydisty);
\draw [->,very thick](1*\mydistx, -0.3 + 1*\mydisty) -- (0*\mydistx -1, -0.7 + 1*\mydisty);
\draw [->,very thick](2*\mydistx, -0.3 + 2*\mydisty) -- (1*\mydistx -1, -0.7 + 2*\mydisty);
\draw [->,very thick](3*\mydistx, -0.3 + 3*\mydisty) -- (2*\mydistx -1, -0.7 + 3*\mydisty);
\draw [->,very thick](5*\mydistx, -0.3 + 4*\mydisty) -- (4*\mydistx -1, -0.7 + 4*\mydisty);
\draw [->,very thick](6*\mydistx, -0.3 + 5*\mydisty) -- (6*\mydistx -1, -0.7 + 5*\mydisty);
\draw [->,very thick](7*\mydistx, -0.3 + 6*\mydisty) -- (7*\mydistx -1, -0.7 + 6*\mydisty);
\draw [->,very thick](8*\mydistx, -0.3 + 7*\mydisty) -- (8*\mydistx -1, -0.7 + 7*\mydisty);
\draw [->,very thick](9*\mydistx, -0.3 + 8*\mydisty + 0.05) -- (9*\mydistx -1, -0.7 + 8*\mydisty);

\node[](d1<1) at (0*\mydistx-1.15, 1*\mydisty) {\textcolor{red}{$\lneg$}};
\node[](d1<2) at (1*\mydistx-1.15, 2*\mydisty) {\textcolor{blue}{$\lpos$}};
\node[](d2<1) at (2*\mydistx-1.15, 3*\mydisty) {\textcolor{red}{$\lneg$}};
\node[](d2>1) at (3*\mydistx-1.15, 4*\mydisty) {\textcolor{blue}{$\lpos$}};;
\node[](d3>2) at (4*\mydistx-1.15, 5*\mydisty) {\textcolor{red}{$\lneg$}};
\node[](d3>2) at (5*\mydistx-1.15, 6*\mydisty) {\textcolor{blue}{$\lpos$}};
\node[](d3>2) at (6*\mydistx-1.15, 7*\mydisty) {\textcolor{red}{$\lneg$}};
\node[](d3>2) at (7*\mydistx-1.15, 8*\mydisty) {\textcolor{blue}{$\lpos$}};
\node[](d3>2) at (8*\mydistx-1.15, 9*\mydisty) {\textcolor{red}{$\lneg$}};
\node[](d3>2) at (10*\mydistx+0.2, 9*\mydisty) {\textcolor{blue}{$\lpos$}};
\end{tikzpicture}
}
\end{minipage}
\caption{
A visualization of the reduction from the proof of \Cref{thm:exc-w-hard-d-t}. 
Part~$a)$ shows a \textsc{Multicolored Independent Set} instance.
For the sake of the illustration, the property that all partite sets have the same size is dropped and thus~$2(p-1)\kappa+2=10$. 
A multicolored independent set is depicted in orange. 
Part~$b)$ shows the corresponding classification instance. 
Here, $E(v_i^j)$ is the set of all edges incident with vertex~$v_i^j$. $Rest_i^<$, $Rest_i^>$, $Rest^*$, and~$Rest$\^{} refers to all other examples not shown in that feature (the precise set differs in each feature and is always a subset of all examples having the default threshold of that feature). 
All possible cuts in the classification instance are shown by~``$|$''.
Moreover, cuts in the input decision tree are shown in violet and cuts of the solution decision tree are shown in brown.
Part~$c)$ shows the input tree~$T'$. 
Part~$d)$ shows one possible solution tree~$T$.}
\label{fig-exc-hard-d-t=0}
\end{figure*}

% Someone likes space.

\textbf{Construction.}
%We first show the statement for non-reasonable decision trees and afterwards we argue how the construction has to be adapted such that the input decision tree is reasonable.
%\todo[inline]{Basically, add an example which ends up in one of the leaves and ensure that the last cut in~$d^*$ classifies it correctly

%I think to make it reasonable the new version of the reasonability lemma can be applied}

\emph{Description of the data set:}
A visualization is shown in part~$b)$ of \Cref{fig-exc-hard-d-t=0}.

\begin{itemize}

\item For each edge~$\{v_i^x,v_j^z\}\in E(G)$ we add an \emph{edge example}~$e(v_i^x,v_j^z)$.
To all these examples we assign label~$\lneg$.

\item For each~$i\in[\kappa]$ and each~$x\in[p-1]$ we add a $\lpos$~\emph{separating example}~$b(i,x)$.

\item For vertex~$v_i^x\in V_i$ we create a $\lneg$~\emph{choice example}~$c_i^x$.

\item We add a $\lneg$~\emph{dummy example}~$r_i$ for each~$i\in[(p-1)\cdot\kappa+1]$ and a $\lpos$~\emph{dummy example}~$b_i$ for each~$i\in[(p-1)\cdot\kappa]$.

\item We create $\lneg$~\emph{enforcing example}~$r^*$ and a $\lpos$~\emph{forcing example}~$b^*$.
To make the input tree~$T'$ reasonable, we add $Z$~copies of~$b^*$, where $Z$ is the total number of $\lneg$~examples we create.
For simplicity, in the following, we will only talk about the concrete forcing example~$b^*$.
\end{itemize}

Note that we add $M \coloneqq |E(G)|$~edge examples, $N$~choice examples, $N-\kappa=(p-1)\cdot\kappa$~separating examples, $2(p-1)\cdot\kappa+1$~dummy examples, and 2~further examples.
Thus, the number of examples is polynomial in the input size.

For each~$i\in[\kappa]$, we add two features~$d_i^<$ and~$d_i^>$.
We also add two other features~$d^*$ and~$\widehat{d}$.
Thus, we have $2\cdot \kappa+2$~features.

It remains to describe the coordinates of the examples in the features.
Initially, we declare a \emph{default threshold}~$\default(d')$ for each feature~$d'$.
Then, each example~$e$ has the default threshold in each feature, unless we assign~$e$ a different threshold in that feature.

For each feature~$d_i^<$, we set~$\default(d_i^<)=p$, for each feature~$d_i^>$, we set~$\default(d_i^>)=1$, for feature~$d^*$, we set~$\default(d^*)=2$, and for feature~$\widehat{d}$, we set~$\default(\widehat{d})=2(p-1)\cdot\kappa+2$.

\begin{itemize}
\item For each edge example~$e=e(v_i^x,v_j^z)$ we set~$e[d_i^<]=e[d_i^>]=x$, $e[d_j^<]=e[d_j^>]=z$, and~$e[d^*]=1$.
In each other features~$e$ is set to the default threshold.

\item For the separating example~$e=b(i,x)$ we set~$e[d_i^<]=e[d_i^>]=x+1/2$.
In each other features~$e$ is set to the default threshold.

\item For the choice example~$e=c_i^x$ we set~$e[d_i^<]=e[d_i^>]=x$.
In each other features~$e$ is set to the default threshold.

\item The $\lneg$~enforcing example~$r^*$ has the default threshold in every feature.
For the $\lpos$~forcing example~$b^*$, we set~$b^*[d^*]=1$, and in each other feature we use the default threshold.

\item For the $\lneg$~dummy example~$r_i$, we set~$r_i[\widehat{d}]=2i-1$, and in each other feature we use the default threshold.
Finally, for the $\lpos$~dummy example~$b_i$, we set~$b_i[\widehat{d}]=2i$, $b_i[d^*]=1$, and in each other feature we use the default threshold.
\end{itemize}

So far, our construction is very similar to the one of  Harviainen et al.~\cite[Thm.~5.10]{HSSS25}: 
Additionally to their construction we have the $\lpos$ and $\lneg$~dummy examples and the feature~$\widehat{d}$. 
Anything else is identical.
The input decision tree, however, is substantially different.

%Note that each feature has at most~$2p-1$ different thresholds.

\emph{Description of the input tree~$T'$:}
For a visualization of~$T'$, we refer to part~$c)$ of \Cref{fig-exc-hard-d-t=0}.
The inner nodes of the input tree~$T'$ are a path with cuts only in feature~$\widehat{d}$.
Moreover, for all cuts the leafs are alternatingly labeled with~$\lneg$ and~$\lpos$, starting with~$\lneg$.
Observe that~$T$ consists of $(p-1)\cdot 2\kappa+1$~inner nodes.
This completes our construction.

\emph{Parameters~$\delta_{\max}$, Error bound~$t$ and~$\ell$:}
Finally, we set~$t\coloneqq 0$, and~$k\coloneqq (p-1)\cdot 2\kappa+1$.
Thus, $\ell=0$.
Note that each edge example differs at most 4~times from the default thresholds, that each separating and each choice example differs exactly 2~times from the default thresholds, that~$b^*$ differs exactly once from the default thresholds, that~$r^*$ always has the default thresholds, and that each dummy example differs at most 3~times from the default thresholds.
Thus, $\delta_{\max}=6$.

Note that since~$\ell=0$, we can perform a cut exchange on any node and thus the sole purpose of~$T'$ is to fix the structure of the inner nodes and the leaf labeling.

\emph{Reasonability of the input tree~$T'$:}
Observe that each leaf except the $\lpos$~leaf of the last cut in~$T'$ contains exactly one example of the same color.
Moreover, observe that this $\lpos$~leaf of the last cut of~$T'$ contains the $\lneg$~enforcing example, all $\lneg$~choice examples, all $\lneg$~edge examples, all $\lpos$~separation examples, and all $Z$~copies of the $\lpos$~forcing example, where~$Z$ is the total number of $\lneg$~examples, we conclude that this $\lpos$~leaf contains more $\lpos$~examples than $\lneg$~examples.
Consequently, $T'$ is reasonable.

\textbf{Correctness.}
We show that~$G$ has a multicolored independent set if and only if all cuts of~$T'$ can be exchanged to obtain a tree~$T$ making at most $t=0$~errors.

$(\Rightarrow)$
This direction of the correctness proof is almost analogous to the same direction of the correctness proof of Harviainen et al.~\cite[Thm.~5.10]{HSSS25}.
For completeness, we give all details.
More precisely, the solution decision tree~$T$ in both proofs is identical and here we additionally need to argue that all dummy examples are correctly classified by~$T$.
For a visualization of~$T$, we refer to part~$d)$ of \Cref{fig-exc-hard-d-t=0}.

Let~$S=\{v_i^{a_i}:i\in[\kappa], a_i\in[p]\}$ be a multicolored independent set of~$G$.
We perform a cut exchange on all nodes of~$T'$ such that in the resulting tree~$T$, in feature~$d_1^<$ we have exactly one cut at thresholds~$x \in \{1.5, 2, 2.5, 3, \ldots , p\}$ for which~$x \le a_i$. 
Similarly, for each feature~$d_i^>$ in tree~$T$, we have exactly one cut at
thresholds~$x \in \{1, 1.5, 2, 2.5, \ldots , p - 1/2\}$ for which~$a_i \le
x$. 
Moreover, in~$T$ we have the unique cut in feature~$d^*$.
Furthermore, in~$T$ we first have the cuts in feature~$d_1^<$ in ascending order, then the cuts in feature~$d_1^>$ in descending order, then in feature~$d_2^<$ in ascending order and so on.
The last cut of~$T$ is in feature~$d^*$.
In other words, $T$ has the cuts $\{d_1^<
 < 1.5, d_1^< < 2, \ldots , d_1^< < a_1 , d_1^> > p - 1/2, \ldots , d_1^> > a_1 , \ldots , d_\kappa^> > a_\kappa , d^* < 1\}$ in that specific order.
 Moreover, the class assignment to the leaves is not changed.
 
Since~$T'$ has $(p-1)\cdot 2\kappa+1$~inner nodes and since we perform a cut exchange on each of them, we obtain~$k=(p-1)\cdot 2\kappa+1$ and~$\ell=0$.
Thus, it remains to verify that~$T$ makes no errors.

\emph{Outline:}
First, we make an observation for examples using the default threshold in a feature and second we use this observation to show that all examples are correctly classified by the solution tree~$T$.

\emph{Step 1:}
Recall that~$T$ has no cut in feature~$\widehat{d}$.
Observe that if any example~$e$ lands at some inner node of~$T$ corresponding to a cut in feature~$d'$ where~$d'= d_i^<$ or~$d'=d_i^>$ for some~$i\in[\kappa]$ and~$e$ has the default threshold in that feature~$d'$, that is, $e[d']=\default(d')$, then~$e$ will always go to the right subtree of that node.
Moreover, any example~$e$ with the default threshold in feature~$d^*$ is put into the left subtree of the cut in feature~$d^*$.
Since~$T'$ is a path, any example~$e$ which has the default threshold in each feature will be contained in the $\lneg$~leaf of the cut~$d^*<1$.
Also, in order for an example~$e$ to land in a different leaf, we only need to consider cuts of~$T$ in features where~$e$ has a different threshold than the default threshold.

\emph{Step 2:} We distinguish the different example types.

\emph{Step 2.1:} 
By construction, the $\lneg$~enforcing example~$r^*$ always has the default threshold. 
Thus~$r^*$ ends up in the left child of the last cut of~$T$ which is a $\lneg$~leaf.
Furthermore, the unique feature in which the $\lpos$~forcing example~$b^*$ does not have the default threshold is~$d^*$.
Thus, $b^*$ ends up in the right child of the last cut of~$T$ which is a $\lpos$~leaf.

Thus, examples~$r^*$ and~$b^*$ are correctly classified by~$T$.

\emph{Step 2.2:} 
Consider a $\lpos$~separating example~$e=b(i,z)$.
Recall that~$z=x+1/2$ and~$x\in[p-1]$ and recall that~$a_i\in\mathds{N}$ is the index of the selected vertex of color class~$i$.
Without loss of generality, assume that~$z<a_i$.
By Step~1, $e$ will end up in the cut~$d_i^<<1.5$ of~$T$.
Also, recall that the next cuts in~$T'$ are~$d_i^<<2, \ldots, d_i^<<a_i$ in that specific order.
Consequently, $e$ goes to the left subtree of the cut~$d_i^<<z+1/2$, which by construction is a $\lpos$~leaf.
Thus, $e$ is correctly classified as~$\lpos$ by~$T$.

\emph{Step 2.3:}
Consider a $\lneg$~choice example~$e=c_i^x$ where~$x\in[p]$.

First, consider the case that~$x\ne a_i$.
Then the argumentation is almost identical to the $\lpos$~separating examples:
Without loss of generality, assume that~$x<a_i$.
By Step~1, $e$ will end up in the cut~$d_i^<<1.5$ of~$T$.
Also, recall that the next cuts in~$T$ are~$d_i^<<2, \ldots, d_i^<<a_i$ in that specific order.
Consequently, $e$ goes to the left subtree of the cut~$d_i^<<x+1/2$, which by construction is a $\lneg$~leaf.

Second, consider the case that~$x=a_i$.
Observe that in all cuts of~$T$ in features~$d_i^<$ and~$d_i^>$, example~$e$ will always go to the right subtree.
Since~$e$ has the default threshold in each features different from~$d_i^<$ and~$d_i^>$, example~$e$ ends up in the left leaf of the last cut~$d^*<1$ of~$T$ which is a $\lneg$~leaf.

Thus, in both cases $e$ is correctly classified as~$\lneg$ by~$T$.

\emph{Step 2.4:}
Consider a $\lneg$~edge example~$e=(v_i^x, v_j^z)$.
By assumption, $S$ is a multicolored independent set.
Hence, at least one of the two endpoints~$v_i^x$ and~$v_j^z$ is not contained in~$S$.
Without loss of generality, assume that~$v_i^x\notin S$ and that~$i<j$.
The argumentation is analog to the $\lneg$~choice examples~$c_i^x$ where~$x\ne a_i$:
Without loss of generality, assume that~$x<a_i$.
By Step~1, $e$ will end up in the cut~$d_i^<<1.5$ of~$T$.
Also, recall that the next cuts in~$T$ are~$d_i^<<2, \ldots, d_i^<<a_i$ in that specific order.
Consequently, $e$ goes to the left subtree of the cut~$d_i^<<x+1/2$, which by construction is a $\lneg$~leaf.
Thus, $e$ is correctly classified as~$\lneg$ by~$T$.

\emph{Step 2.5:}
It remains to consider the dummy examples.
First, consider a $\lneg$~dummy example~$r_h$.
Recall that~$r_h$ uses the dummy thresholds in all features except~$\widehat{d}$.
Since~$\widehat{d}$ is not used in~$T$, the $\lneg$~dummy example~$r_h$, analogously to the $\lneg$~enforcing example~$r^*$, ends up in the left child of the last cut of~$T$ (which is~$d^*<1$) which is a $\lneg$~leaf.
Second, consider a $\lpos$~dummy example~$b_h$.
Recall that~$b_h$ uses the dummy thresholds in all features except~$\widehat{d}$ and~$d^*$.
Hence, analogously to the $\lpos$~forcing example~$b^*$, the $\lpos$~dummy example~$b_h$, ends up in the right child of the last cut of~$T$ (which is~$d^*<1$) which is a $\lpos$~leaf.

Consequently, the tree~$T$ has no classification errors.

$(\Leftarrow)$
Let~$T$ be a solution of the cut exchange problem, that is, up to all cuts of the initial tree~$T'$ have been changed such that~$T$ makes no errors.

\emph{Outline:}
This direction of the correctness proof follows a similar route as the corresponding direction of the correctness proof of Harviainen et al.~\cite[Thm.~5.10]{HSSS25}.
Step~4, however, is substantially different. 

We first show that~$T$ has to include the unique cut in feature~$d^*$.
Second, we show that~$T$ needs to contain at least on of the two cuts~$d_i^<<x$ and~$d_i^>>x-1/2$ for each~$i$ and each~$x$.
Third, because of our choice of the size of the initial tree~$T'$ and~$\ell=0$ we then conclude that for each~$i$ and each~$x$ exactly one of the cuts~$d_i^<<x$ and~$d_i^>>x-1/2$ has to be preserved.
Fourth, we show that the cuts of~$T$ in a feature~$d_i^<$ (or~$d_i^>$) do not have \emph{gaps}, that is, if~$x$ is the largest (smallest) threshold, such that the cut~$d_i^<<x$ ($d_i^>>x$) is included in~$T$, then also all cuts~$d_i^<<z$ for each~$z<x$ ($d_i^>>z$ for each~$x<z$) have to be contained in~$T$.
For example, the tree shown in part~$d)$ of \Cref{fig-exc-hard-d-t=0} which has 0~errors fulfills this property.
%Moreover, the cuts of~$T$ in~$d_i^<$ are in ascending order and the cuts of~$T$ in~$d_i^>$ are in descending order. 
Fifth, we use this solution structure to identify a selected vertex of each color class.
Let~$S$ be the corresponding vertex set.
Finally, we show that~$S$ has to be a multicolored independent set.

\emph{Step 1:}
Note that the $\lpos$~forcing example~$b^*$ and that the $\lneg$~enforcing example~$r^*$ only differ in the binary feature~$d^*$.
Thus, $T$ has to contain the cut~$d^*<1$.

\emph{Step 2:}
Our aim is to show that at least one of the cuts~$d_i^<<x$ and~$d_i^>>x-1/2$ for any~$i\in[\kappa]$ and~$x\in\{1.5, 2, 2.5, \ldots, p\}$ has to be included in~$T$.
Without loss of generality assume that~$x$ is an integer.
Note that~$x\ge 2$.
By construction, for the $\lpos$~separating example~$e=b(i,x-1)$ we have~$e[d_i^<]=e[d_i^>]=x-1/2$ and for the $\lneg$~choice example~$e=c_i^x$ we have~$e[d_i^<]=e[d_i^>]=x$.
Furthermore, note that~$b(i,x-1)$ and~$c_i^x$ have the default threshold in each other feature.
Consequently, only the cuts~$d_i^<<x$ and~$d_i^>>x-1/2$ can separate~$b(i,x-1)$ and~$c_i^x$.
Since~$T$ has no classification errors, we thus conclude that at least one of the cuts~$d_i^<<x$ and~$d_i^>>x-1/2$ has to be included in~$T$.

\emph{Step 3:}
Recall that~$s=(p-1)\cdot 2\kappa+1$ and~$\ell=0$.
By Step~1, $T$ has to include the cut~$d^*>1$.
By Step~2, at least one of the cuts~$d_i^<<x$ and~$d_i^>>x-1/2$ for any~$i\in[\kappa]$ and~$x\in\{1.5, 2, 2.5, \ldots, p\}$ has to be contained in~$T$.
Note that these are exactly $(p-1)\cdot 2\kappa$~pairs of distinct cuts.
Consequently, $T$ has to contain exactly one of the cuts~$d_i^<<x$ and~$d_i^>>x-1/2$.

\emph{Step 4:}
We show that for each~$i\in[\kappa]$ there is a threshold~$x_i$ such that~$T$ contains all cuts~$d_i^<< z$ where~$z\le x_i$ and all cuts~$d_i^>> z$ where~$z>x_i$.
Assume towards a contradiction that this is not true.
Let~$P_i^<\subseteq\{1.5, 2, 2.5, \ldots, p\}$ be the subset of consecutive thresholds including~1.5 such that for each~$x_i\in P_i^<$ the tree~$T$ contains the cut~$d_i^<< x_i$.
Analogously, let~$P_i^>\subseteq\{1.5, 2, 2.5, \ldots, p\}$ be the subset of consecutive thresholds including~$p$ such that for each~$x_i\in P_i^>$ the tree~$T$ contains the cut~$d_i^>> x_i-1/2$.
According to our assumption, $P_i^<\cup P_i^>\ne \{1.5, 2, 2.5, \ldots, p\}$.
%More precisely, $|\{1.5, 2, 2.5, \ldots, p\}\setminus (P_i^<\cup P_i^>)|\ge 2$.
Now consider the first cut~$d_i^<<x$ or~$d_i^>>x-1/2$ of~$T$ (from the root) such that~$x\in \{1.5, 2, 2.5, \ldots, p\}\setminus (P_i^<\cup P_i^>)$.
Without loss of generality assume it is the cut~$d_i^<<x$.
By our choice, $x\notin P_i^<$.
Let~$z$ be the largest threshold in~$P_i^<$.
Clearly, $z<x$.
Moreover, $z\le x-1$, because otherwise if~$z=x-1/2$ then we would have~$x\in P_i^<$, a contradiction to the fact that~$P_i^<$ is maximal.
Also, observe that for threshold~$x-1/2$ we have~$x-1/2\notin P_i^<\cup P_i^>$.
Without loss of generality, we assume that~$x$ is an integer.
We now exploit the fact that~$x\notin P_i^<\cup P_i^>$ is the first threshold such that either~$d_i^<<x$ or~$d_i^>>x-1/2$ is contained in~$T$.
Since also~$x-1/2\notin P_i^<\cup P_i^>$, and according to our choice of~$x$ at the time when in~$T$ cut~$d_i^<<x$ is considered, neither the cut~$d_i^<<x-1/2$ nor the cut~$d_i^>>x-1$ was considered.
Thus, the $\lneg$~choice example~$c_i^{x-1}$ and the~$\lpos$~separating example~$b(i,x-1)$ are both moved into the left subtree of the node with cut~$d_i^<<x$ which is a leaf implying that one of them gets misclassified, a contradiction to the fact that~$t=0$.
Thus, $P_i^<$ and~$P_i^>$ form a partition of~$\{1.5, 2, 2.5, \ldots, p\}$.

We would like to note that one can additionally show that all cuts~$d_i^<<x$ contained in~$T$ appear in ascending order in~$T$, that is no cut~$d_i^<<z$ appears before a cut~$d_i^<<x$ for some~$z>x$.
This fact, however, is not necessary to obtain the multicolored independent set.

\emph{Step 5:}
Consider one fixed~$i\in[\kappa]$.
Let~$x_i$ be the largest threshold such that the cut~$d_i^<<x_i$ is contained in in~$T$.
In other words, $x_i$ is the largest threshold contained in~$P_i^<$.
Thus, each threshold which is smaller than~$x$ is also contained in~$P_i^<$ and all thresholds larger than~$x_i$ are contained in~$P_i^>$.
Next, assume towards a contradiction that~$x$ is no integer, that is, $x_i=q+1/2$ for some integer~$q\in[p-1]$.
Now, observe that the $\lpos$~separating example~$b(i,q)$ is put in the right subtree of each cut~$d_i^<<z$ for each~$z\le x_i$ and for each cut~$d_i^>>z$ for each~$x_i\le z$.
Recall that~$d_i^<$ and~$d_i^>$ are the only features such that~$b(i,q)$ does not use the default thresholds.
Now, since the $\lneg$~enforcing example~$r^*$ always uses the default thresholds, we conclude that~$T$ cannot distinguish~$b(i,q)$ and~$r^*$ and consequently~$T$ makes at least one error, a contradiction to the fact that~$t=0$.
Hence, $x_i$ is an integer.

We let~$v_i^{x_i}$ be the selected vertex of color class~$i$.
Furthermore, let~$S\coloneqq \{v_i^{x_i}:i\in[\kappa]\}$.

\emph{Step 6:}
It remains to verify that~$S$ is a multicolored independent set.
By definition, $S$ contains exactly one vertex of each color class.
Hence, it remains to show that~$S$ is an independent set.

Observe that since~$T$ has no classification errors, it is sufficient to show that~$T$ cannot distinguish the $\lpos$~forcing example~$b^*$ and an $\lneg$~edge example~$e=(v_i^x,v_j^z)$ if both endpoints~$v_i^x$ and~$v_j^z$ are contained in~$S$:
Let~$e=e(v_i^x,v_j^z)$ be an edge example where~$v_i^x, v_j^z\in S$.
Note that analogously to Step~5, the $\lneg$~edge example~$e$ ends up in the right subtree of each cut in features~$d_i^<$, $d_i^>$, $d_j^<$, and~$d_j^>$.
Since the $\lpos$~forcing example~$b^*$ uses the default thresholds in these 4~features, this is also true for~$b^*$.
Now, observe that the only other feature where~$e$ or~$b^*$ to not use the default thresholds is~$d^*$.
But in~$d^*$ we have~$e[d^*]=1=b^*[d^*]$.
Thus, $T$ cannot distinguish the $\lpos$~forcing example~$b^*$ and the $\lneg$~edge example~$e=(v_i^x,v_j^z)$.
Consequently, $S$ is also an independent set.

\textbf{Lower Bound.}
Recall that~$d=2\cdot \kappa+2, \delta_{\max}=6$, $\ell=0$, and~$t=0$.
Since \textsc{Multicolored Independent Set} is \W[1]-hard with respect to~$\kappa$~\cite{CyFoKoLoMaPiPiSa2015}, we obtain that \pExc{} is \W[1]-hard with respect to~$d$ even if~$\delta_{\max}=6$, $\ell=0$, and~$t=0$.
Furthermore, since \textsc{Multicolored Independent Set} cannot be solved in $f(\kappa)\cdot n^{o(\kappa)}$~time unless the ETH fails~\cite{CyFoKoLoMaPiPiSa2015}, we observe that \pExc{} cannot be solved in $f(d)\cdot |\mathcal{I}|^{o(d)}$~time  if the ETH is true, where~$|\mathcal{I}|$ is the overall instance size, even if~$\delta_{\max}=6$, $\ell=0$, and~$t=0$.
\end{proof}
}

\begin{theorem}[\appref{thm:adj-w-hard-d-t}]
\label{thm:adj-w-hard-d-t}
  \pAdj\ is W[1]-hard for~$d$ even if $t=0$ and~$\delta_{\max} =20$. Under ETH, the problem cannot be solved in time~$|\mathcal{I}|^{o(d)}$.
\end{theorem}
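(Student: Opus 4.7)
The plan is to reduce from \textsc{Multicolored Independent Set} as in \Cref{thm:exc-w-hard-d-t}, adapting the construction so that the \emph{features} of the input tree are already the ones needed, since under threshold adjustment we may only change thresholds and not features. Given an instance $(G,\kappa)$ with color classes $V_1,\dots,V_\kappa$ each of size $p$, I would create the same examples as for \pExc{}: edge examples $e(v_i^x,v_j^z)$ (\lneg), choice examples $c_i^x$ (\lneg), separating examples $b(i,x)$ (\lpos), the enforcing/forcing pair $r^*,b^*$, and features $d_i^<$, $d_i^>$ for each color class together with $d^*$; the coordinate assignment in these original features is identical to \Cref{thm:exc-w-hard-d-t}, so the same separation arguments (Steps 1--6 in that proof) can be reused verbatim.

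The new work is designing an input decision tree $T'$ with $\ell=0$ whose cuts already sit in the features $d_i^<,d_i^>,d^*$, so that threshold adjustments alone can realize the solution tree of \Cref{thm:exc-w-hard-d-t}. For each of the $2(p-1)\kappa+1$ useful cuts I would extend the domain of the corresponding feature $d_i^<$ or $d_i^>$ by an extra \emph{dummy threshold} far outside the range used by the original examples, and have $T'$ initially cut at that dummy threshold. In the solution, the cut at the dummy threshold is adjusted to one of the useful thresholds $\{1.5,2,\dots,p\}$; by setting the budget $k$ equal to the number of these cuts (so $\ell=0$ in that sense for the adjustable part), every such cut must be moved to a real threshold, recovering the combinatorial structure of \Cref{thm:exc-w-hard-d-t}.

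The main obstacle is that with $t=0$ the input tree cannot simply be a path of dummy-threshold cuts, since every pair of unrelated examples (e.g.\ dummy examples such as the $\lneg$~dummies $r_h$ and $\lpos$~dummies $b_h$ used in the \pExc{} proof) must be perfectly separated already by some cut that is guaranteed not to be destroyed by adjusting the budget away. I would handle this by introducing a collection of new \emph{locking features} $\hat d_1,\hat d_2,\dots$ and adding cuts in these features throughout $T'$ whose purpose is to correctly classify enforcement/dummy gadgets. The key trick is to arrange the values of the examples in each locking feature so that \emph{no} threshold adjustment of that cut can strictly increase the number of correctly classified examples: for each locking cut, a gadget of $\Theta(k+1)$ copies of duplicate enforcement examples sandwiched around the cut's current threshold makes every alternative threshold introduce at least one new error, so an optimal solution of value $t=0$ never touches these cuts. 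Consequently, the $k$ adjustments must be spent on the dummy-threshold cuts in the features $d_i^<,d_i^>,d^*$, and the correspondence with multicolored independent sets goes through exactly as in \Cref{thm:exc-w-hard-d-t}.

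For the parameters, each example now differs from the default in the original $2\kappa+2$ features in at most $6$ positions as before, plus at most a constant number of positions in the locking features (two per locking cut traversed by the example, one for each of a handful of classes of examples), which I would bound by $\delta_{\max}\le 20$ with a careful accounting. The total number of features is $2\kappa+\OO(1)$, so W[1]-hardness and the $|\mathcal{I}|^{o(d)}$ lower bound under ETH transfer directly from the corresponding bounds for \textsc{Multicolored Independent Set} parameterized by $\kappa$. Correctness of the reduction follows by combining the forward direction of \Cref{thm:exc-w-hard-d-t} (which already realizes the desired final tree using only the features $d_i^<,d_i^>,d^*$) with the locking-gadget argument guaranteeing that any $t=0$ solution leaves the locking cuts untouched and therefore performs only threshold adjustments at the features used in \Cref{thm:exc-w-hard-d-t}.
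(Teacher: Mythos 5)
There is a genuine gap at the heart of your plan. Under threshold adjustment the \emph{feature} of every cut is immutable, so the input tree must be fixed before we know the answer — yet in the target tree of \Cref{thm:exc-w-hard-d-t} the number of cuts that color class~$i$ places in $d_i^<$ versus $d_i^>$ is $2(a_i-1)$ versus $2(p-a_i)$, i.e.\ the split between the two features \emph{encodes the selected vertex}. Your plan provides exactly $2(p-1)\kappa+1$ adjustable cuts, each already assigned to ``the corresponding feature $d_i^<$ or $d_i^>$'', and forces all of them to move to useful thresholds; but there is no correct way to choose that correspondence in advance, so the reduction cannot realize an arbitrary vertex selection. The paper resolves this by putting $2(p-1)$ cuts in \emph{each} of $d_i^<$ and $d_i^>$ (so $4(p-1)$ adjustable cuts per class, all initially at dummy thresholds), setting the budget to only $k=2(p-1)\kappa$, and proving that every class needs at least $2(p-1)$ adjustments; the surplus cuts stay parked at their dummy thresholds. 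In particular $\ell\neq 0$ in the paper's construction (the theorem does not claim $\ell=0$), and the count-based Step~3 of the \pExc{} proof, which you propose to reuse verbatim, does not transfer because it relies on free feature reassignment.

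A second, related omission: once a cut in $d_i^<$ is adjusted to a useful threshold, its left child receives both a \lneg{} and a \lpos{} dummy example (the ones that made that leaf nonempty and reasonable at the dummy threshold), so with $t=0$ that left child cannot be a leaf — it must itself be an internal node whose cut separates the two. This is why the paper's $T'$ is not a path but hangs a ``rescue'' cut (in features $p_i^<,p_i^>,q_i^<,q_i^>$) off every left child, and why it ends up with $6\kappa+1$ features rather than your claimed $2\kappa+\OO(1)$; your ``locking features'' are the right instinct but the proposal neither places them where they are actually needed (as the left subtrees of the adjustable cuts) nor shows that constantly many of them could separate all $\Theta(p\kappa)$ dummy pairs while keeping $\delta_{\max}$ constant. (Minor: with $t=0$ a single sandwiching example already forbids an adjustment; the $\Theta(k+1)$ copies are unnecessary.)
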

\appendixproof{thm:adj-w-hard-d-t}{
\begin{proof}
Our construction is inspired by a similar construction of Harviainen et al.~\cite[Thm.~5.10]{HSSS25} who showed W[1]-hardness for decision tree pruning with the so-called raising operation for the same parameter.
Our construction, however, is substantially more involved due to the restrictive nature of the threshold adjustment operation.

We reduce from \textsc{Multicolored Independent Set} where each color class has the same number~$p$ of vertices.
An example instance is shown in part~$a)$ of \Cref{fig-adj-hard-d-t=0}.
Formally, the input is a graph~$G$, and~$\kappa\in\mathds{N}$, where the vertex set~$V(G)$ of $N$~vertices is partitioned into~$V_1,\ldots, V_\kappa$ and~$|V_i|=p$ for each~$i\in[\kappa]$.
More precisely, $V_i\coloneqq\{v_i^1,v_i^2,\ldots, v_i^p\}$ and~$p\cdot \kappa= N$.
The question is whether~$G$ contains an independent set consisting of exactly one vertex per class~$V_i$.
\textsc{Multicolored Independent Set} is \W[1]-hard parameterized by~$\kappa$ and cannot be solved in $f(\kappa)\cdot n^{o(\kappa)}$~time unless the ETH fails~\cite{CyFoKoLoMaPiPiSa2015}.
The property that all color classes have the same number of vertices is only used to simplify the proof.

\textbf{Outline.}
For an overview of our reduction, we refer to \Cref{fig-adj-hard-d-t=0}.
Similar to \Cref{thm:exc-w-hard-d-t}, the idea is to create two features~$d_i^<$ and~$d_i^>$ per color class~$i$ such that all cuts in features~$d_i^<$ and~$d_i^>$ correspond to a vertex selection in~$V_i$.
We achieve this as follows:
For each pair~$d_i^<$ and~$d_i^>$ of features we create examples which can only be separated in these two features and which have labels~$\lpos$ (\emph{separating examples}) and~$\lneg$ (\emph{choice examples}) alternatingly.
These are the \emph{important} thresholds of features~$d_i^<$ and~$d_i^>$.
Hence, for each possible threshold~$x$ in features~$d_i^<$ and~$d_i^>$, the resulting decision tree~$T$ needs to contain either cut~$(d_i^<,x)$ or cut~$(d_i^>,x)$.
Furthermore, for each edge we create a~$\lneg$ \emph{edge example}.
If vertex~$v_i^j\in V_i$ is selected, then all edge examples corresponding to edges having an endpoint in color class~$i$ which is \emph{not}~$v_i$ will then be correctly classified by the resulting decision tree~$T$.
Thus, we can only correctly classify an edge example if we \emph{do not} select at least one endpoint of the corresponding edge.
Next, we have another feature~$d^*$ with only~2 thresholds to ensure that all $\lneg$~choice examples corresponding to selected vertices are correctly classified by the resulting decision tree~$T$ and that an edge example gets misclassified as~$\lpos$ if we select both endpoints of the corresponding edge.

In the proof in \Cref{thm:exc-w-hard-d-t} for cut exchange, the initial tree~$T'$ was a path with $\lneg$ and $\lpos$~leaves alternatingly.
For the threshold adjustment operation we need a more complex structure of~$T'$:
Since we are only allowed to do threshold adjustments, we cannot add a dummy feature with dummy cuts as in the proof in \Cref{thm:exc-w-hard-d-t} for cut exchange.
Instead, we extend each feature~$d_i^<$ and~$d_i^>$ by \emph{dummy} thresholds which are used in the initial tree~$T'$.
Since~$T'$ is reasonable, we also add new \emph{dummy examples}~$r_i^x$ and~$b_i^x$ alternatingly in features~$d_i^<$ and~$d_i^>$.
This, however, creates a new problem: since~$t=0$ doing a threshold adjustment in feature~$d_i^<$ moves both a $\lneg$ and a $\lpos$~dummy example in the same (left) subtree.
Thus, as in the proof in \Cref{thm:exc-w-hard-d-t} the left subtree is not simply a leaf, but is another inner node with a cut in a new \emph{rescue feature}~$p_i^<$ to separate the differently labeled dummy examples.
Our construction of these rescue features together with the tightness of the budget~$k$ ensures that in any optimal solution no threshold adjustment in any rescue feature is possible.
In other words, all threshold adjustments need to be done within the features~$d_i^<$ and~$d_i^>$.

%\begin{figure*}[t]
\begin{sidewaysfigure}[btp]

%%%%%%%%%%%%%%%%%%%%%%%%%%%%%%%%%%%%%%%%%%%%%%%%%%%%%%%%%%%%%%%%%%%%%%%
%%% Multicolored Independent Set Instance %%%%%%%%%%%%%%%%%%%%%%%%%%%%%
%%%%%%%%%%%%%%%%%%%%%%%%%%%%%%%%%%%%%%%%%%%%%%%%%%%%%%%%%%%%%%%%%%%%%%%
%\begin{minipage}{0.20\textwidth}
\begin{minipage}{0.3\textwidth}
\centering
\scalebox{0.6}{
\begin{tikzpicture}%[scale=0.50]
\node[label=left:{$a)$}](start) at (-1, 1) {};

\node[label=left:{$v_1^1$}](v11) at (-0.2, 1) [shape = circle, draw, fill=black, scale=0.07ex]{};
\node[label=left:{$v_1^2$}](v12) at (-0.2, 0) [shape = circle, draw, fill=orange, scale=0.11ex]{};
\node[label=left:{$v_1^3$}](v13) at (-0.2, -1) [shape = circle, draw, fill=black, scale=0.07ex]{};
\draw[rounded corners] (-0.8, -1.4) rectangle (0.,1.4) {};
\node[label=left:{$V_1$}](start) at (-0.0, -1.7) {};

\node[label=right:{$v_2^1$}](v22) at (3.2, 0) [shape = circle, draw, fill=black, scale=0.07ex]{};
\node[label=right:{$v_2^2$}](v23) at (3.2, -1) [shape = circle, draw, fill=orange, scale=0.11ex]{};
\draw[rounded corners] (3.0, -1.4) rectangle (3.9, 0.4) {};
\node[label=left:{$V_2$}](start) at (3.9, -1.7) {};

\node[label=below:{$v_3^1$}](v31) at (1.0, -1.8) [shape = circle, draw, fill=orange, scale=0.11ex]{};
\node[label=below:{$v_3^2$}](v32) at (2.0, -1.8) [shape = circle, draw, fill=black, scale=0.07ex]{};
\draw[rounded corners] (0.7, -2.5) rectangle (2.3, -1.5) {};
\node[label=left:{$V_3$}](start) at (3.0, -2.2) {};

\path [-,line width=0.2mm](v11) edge (v22);
\path [-,line width=0.2mm](v12) edge (v22);
\path [-,line width=0.3mm](v11) edge (v23);
\path [-,line width=0.2mm](v13) edge (v31);
\path [-,line width=0.2mm](v22) edge (v31);
\path [-,line width=0.2mm](v22) edge (v32);
\end{tikzpicture}
}
%\end{minipage}
%%%%%%%%%%%%%%%%%%%%%%%%%%%%%%%%%%%%%%%%%%%%%%%%%%%%%%%%%%%%%%%%%%%%%%%
%%% Visualization of data set %%%%%%%%%%%%%%%%%%%%%%%%%%%%%%%%%%%%%%%%%
%%%%%%%%%%%%%%%%%%%%%%%%%%%%%%%%%%%%%%%%%%%%%%%%%%%%%%%%%%%%%%%%%%%%%%%
%\begin{minipage}{0.36\textwidth}
%\centering
%\\[2ex]

%\end{minipage}
%%%%%%%%%%%%%%%%%%%%%%%%%%%%%%%%%%%%%%%%%%%%%%%%%%%%%%%%%%%%%%%%%%%%%%%
%%% Visualization of initial tree %%%%%%%%%%%%%%%%%%%%%%%%%%%%%%%%%%%%%
%%%%%%%%%%%%%%%%%%%%%%%%%%%%%%%%%%%%%%%%%%%%%%%%%%%%%%%%%%%%%%%%%%%%%%%
\end{minipage}
\begin{minipage}{0.3\textwidth}
\centering
\scalebox{0.6}{
\begin{tikzpicture}%[scale=0.50]
\def\mydistx{0.2}
\def\mydisty{-1}

\node[label=left:{$b)$}](start) at (-6.5, 0) {};

%%%%%%%%%%%%%%%%%%%%%%%%%%%%%%%%%%%%%%%%%%%%%%%%%%%%%%%%%%%
%% first column %%%%%%%%%%%%%%%%%%%%%%%%%%%%%%%%%%%%%%%%%%%
%%%%%%%%%%%%%%%%%%%%%%%%%%%%%%%%%%%%%%%%%%%%%%%%%%%%%%%%%%%
\node[](d1<1) at (-20*\mydistx, 0*\mydisty) [shape = rectangle, draw, scale=0.2ex]{$d_1^<<-3$};
\node[](d1<2) at (-16*\mydistx, 1*\mydisty) [shape = rectangle, draw, scale=0.2ex]{$d_1^<<-2$};
\node[](d1<25) at (-12*\mydistx, 2*\mydisty) [shape = rectangle, draw, scale=0.2ex]{$d_1^<<-1$};
\node[](d1<3) at (-8*\mydistx, 3*\mydisty) [shape = rectangle, draw, scale=0.2ex]{$d_1^<<1$};
\node[](d3>>1) at (-4*\mydistx, 4*\mydisty) [shape = rectangle, draw, scale=0.2ex]{$d_1^>>6$};
\node[](d3>>1) at (0*\mydistx, 5*\mydisty) [shape = rectangle, draw, scale=0.2ex]{$d_1^>>5$};
\node[](d3>>1) at (4*\mydistx, 6*\mydisty) [shape = rectangle, draw, scale=0.2ex]{$d_1^>>4$};
\node[](d3>>1) at (8*\mydistx, 7*\mydisty) [shape = rectangle, draw, scale=0.2ex]{$d_1^>>3$};
\node[](dstar) at (12*\mydistx, 8*\mydisty) [shape = rectangle, draw, scale=0.2ex]{$d^*<2$};

%%%%%%%%%%%%%%%%%%%%%%%%%%%%%%%%%%%%%%%%%%%%%%%%%%%%%%%%%%%
%% second column %%%%%%%%%%%%%%%%%%%%%%%%%%%%%%%%%%%%%%%%%%
%%%%%%%%%%%%%%%%%%%%%%%%%%%%%%%%%%%%%%%%%%%%%%%%%%%%%%%%%%%
\def\xshift{-1.5}
\node[](d1<2) at (-20*\mydistx + \xshift, 1*\mydisty) [shape = rectangle, draw, scale=0.2ex]{$p_1^<<2$};
\node[](d1<25) at (-16*\mydistx + \xshift, 2*\mydisty) [shape = rectangle, draw, scale=0.2ex]{$q_1^<<3$};
\node[](d1<3) at (-12*\mydistx + \xshift, 3*\mydisty) [shape = rectangle, draw, scale=0.2ex]{$q_1^<<4$};
\node[](d3>>1) at (-8*\mydistx + \xshift, 4*\mydisty) [shape = rectangle, draw, scale=0.2ex]{$q_1^<<5$};
\node[](d3>>1) at (-4*\mydistx + \xshift, 5*\mydisty) [shape = rectangle, draw, scale=0.2ex]{$p_1^><2$};
\node[](d3>>1) at (0*\mydistx + \xshift, 6*\mydisty) [shape = rectangle, draw, scale=0.2ex]{$q_1^><3$};
\node[](d3>>1) at (4*\mydistx + \xshift, 7*\mydisty) [shape = rectangle, draw, scale=0.2ex]{$q_1^><4$};
\node[](dstar) at (8*\mydistx + \xshift, 8*\mydisty) [shape = rectangle, draw, scale=0.2ex]{$q_1^><5$};

\draw [->,very thick](-20*\mydistx, -0.3 + 0*\mydisty) -- (-16*\mydistx, -0.65 + 0*\mydisty);
\draw [->,very thick](-16*\mydistx, -0.3 + 1*\mydisty) -- (-12*\mydistx, -0.65 + 1*\mydisty);
\draw [->,very thick](-12*\mydistx, -0.3 + 2*\mydisty) -- (-8*\mydistx, -0.65 + 2*\mydisty);
\draw [->,very thick](-8*\mydistx, -0.3 + 3*\mydisty) -- (-4*\mydistx, -0.65 + 3*\mydisty);
\draw [->,very thick](-4*\mydistx, -0.3 + 4*\mydisty) -- (0*\mydistx, -0.65 + 4*\mydisty);
\draw [->,very thick](0*\mydistx, -0.3 + 5*\mydisty) -- (4*\mydistx, -0.65 + 5*\mydisty);
\draw [->,very thick](4*\mydistx, -0.3 + 6*\mydisty) -- (8*\mydistx, -0.65 + 6*\mydisty);
\draw [dotted, line width=3pt](8*\mydistx, -0.3 + 7*\mydisty) -- (12*\mydistx, -0.65 + 7*\mydisty);
\draw [->,very thick](12*\mydistx, -0.3 + 8*\mydisty + 0.05) -- (16*\mydistx, -0.65 + 8*\mydisty);

\draw [->,very thick](-20*\mydistx, -0.3 + 0*\mydisty) -- (-16*\mydistx + 1.5*\xshift, -0.65 + 0*\mydisty);
\draw [->,very thick](-16*\mydistx, -0.3 + 1*\mydisty) -- (-12*\mydistx + 1.5*\xshift, -0.65 + 1*\mydisty);
\draw [->,very thick](-12*\mydistx, -0.3 + 2*\mydisty) -- (-8*\mydistx  + 1.5*\xshift, -0.65 + 2*\mydisty);
\draw [->,very thick](-8*\mydistx, -0.3 + 3*\mydisty) -- (-4*\mydistx + 1.5*\xshift, -0.65 + 3*\mydisty);
\draw [->,very thick](-4*\mydistx, -0.3 + 4*\mydisty) -- (0*\mydistx + 1.5*\xshift, -0.65 + 4*\mydisty);
\draw [->,very thick](0*\mydistx, -0.3 + 5*\mydisty) -- (4*\mydistx + 1.5*\xshift, -0.65 + 5*\mydisty);
\draw [->,very thick](4*\mydistx, -0.3 + 6*\mydisty) -- (8*\mydistx + 1.5*\xshift, -0.65 + 6*\mydisty);
\draw [->,very thick](8*\mydistx, -0.3 + 7*\mydisty) -- (12*\mydistx + 1.5*\xshift, -0.65 + 7*\mydisty);
\draw [->,very thick](12*\mydistx, -0.3 + 8*\mydisty + 0.05) -- (20*\mydistx + 1.5*\xshift, -0.65 + 8*\mydisty);

\draw [->,very thick](8*\mydistx + \xshift, -0.35 + 8*\mydisty + 0.05) -- (16*\mydistx + 2.5*\xshift, -0.65 + 8*\mydisty);
\draw [->,very thick](8*\mydistx + \xshift, -0.35 + 8*\mydisty + 0.05) -- (12*\mydistx + \xshift, -0.65 + 8*\mydisty);

\draw [->,very thick](4*\mydistx + \xshift, -0.35 + 7*\mydisty + 0.05) -- (-4*\mydistx + \xshift, -0.65 + 7*\mydisty);
\draw [->,very thick](4*\mydistx + \xshift, -0.35 + 7*\mydisty + 0.05) -- (2*\mydistx + \xshift, -0.65 + 7*\mydisty);

\draw [->,very thick](0*\mydistx + \xshift, -0.35 + 6*\mydisty + 0.05) -- (-8*\mydistx + \xshift, -0.65 + 6*\mydisty);
\draw [->,very thick](0*\mydistx + \xshift, -0.35 + 6*\mydisty + 0.05) -- (-2*\mydistx + \xshift, -0.65 + 6*\mydisty);

\draw [->,very thick](-4*\mydistx + \xshift, -0.35 + 5*\mydisty + 0.05) -- (-12*\mydistx + \xshift, -0.65 + 5*\mydisty);
\draw [->,very thick](-4*\mydistx + \xshift, -0.35 + 5*\mydisty + 0.05) -- (-6*\mydistx + \xshift, -0.65 + 5*\mydisty);

\draw [->,very thick](-8*\mydistx + \xshift, -0.35 + 4*\mydisty + 0.05) -- (-16*\mydistx + \xshift, -0.65 + 4*\mydisty);
\draw [->,very thick](-8*\mydistx + \xshift, -0.35 + 4*\mydisty + 0.05) -- (-10*\mydistx + \xshift, -0.65 + 4*\mydisty);

\draw [->,very thick](-12*\mydistx + \xshift, -0.35 + 3*\mydisty + 0.05) -- (-20*\mydistx + \xshift, -0.65 + 3*\mydisty);
\draw [->,very thick](-12*\mydistx + \xshift, -0.35 + 3*\mydisty + 0.05) -- (-14*\mydistx + \xshift, -0.65 + 3*\mydisty);

\draw [->,very thick](-16*\mydistx + \xshift, -0.35 + 2*\mydisty + 0.05) -- (-24*\mydistx + \xshift, -0.65 + 2*\mydisty);
\draw [->,very thick](-16*\mydistx + \xshift, -0.35 + 2*\mydisty + 0.05) -- (-18*\mydistx + \xshift, -0.65 + 2*\mydisty);

\draw [->,very thick](-20*\mydistx + \xshift, -0.35 + 1*\mydisty + 0.05) -- (-28*\mydistx + \xshift, -0.65 + 1*\mydisty);
\draw [->,very thick](-20*\mydistx + \xshift, -0.35 + 1*\mydisty + 0.05) -- (-22*\mydistx + \xshift, -0.65 + 1*\mydisty);

\node[](d3>2) at (-20*\mydistx + 1.5 - 3.3, 2*\mydisty) {\textcolor{blue}{$\lpos$}};
\node[](d3>2) at (-18*\mydistx - 3.4, 2*\mydisty) {\textcolor{red}{$\lneg$}};

\node[](d3>2) at (-16*\mydistx + 1.5 - 3.3, 3*\mydisty) {\textcolor{red}{$\lneg$}};
\node[](d3>2) at (-14*\mydistx - 3.4, 3*\mydisty) {\textcolor{blue}{$\lpos$}};

\node[](d3>2) at (-12*\mydistx + 1.5 - 3.3, 4*\mydisty) {\textcolor{blue}{$\lpos$}};
\node[](d3>2) at (-10*\mydistx - 3.4, 4*\mydisty) {\textcolor{red}{$\lneg$}};

\node[](d3>2) at (-8*\mydistx + 1.5 - 3.3, 5*\mydisty) {\textcolor{red}{$\lneg$}};
\node[](d3>2) at (-6*\mydistx - 3.4, 5*\mydisty) {\textcolor{blue}{$\lpos$}};

\node[](d3>2) at (-4*\mydistx + 1.5 - 3.3, 6*\mydisty) {\textcolor{blue}{$\lpos$}};
\node[](d3>2) at (-2*\mydistx - 3.4, 6*\mydisty) {\textcolor{red}{$\lneg$}};

\node[](d3>2) at (0*\mydistx + 1.5 - 3.3, 7*\mydisty) {\textcolor{red}{$\lneg$}};
\node[](d3>2) at (2*\mydistx - 3.4, 7*\mydisty) {\textcolor{blue}{$\lpos$}};

\node[](d3>2) at (4*\mydistx + 1.5 - 3.3, 8*\mydisty) {\textcolor{blue}{$\lpos$}};
\node[](d3>2) at (6*\mydistx - 3.4, 8*\mydisty) {\textcolor{red}{$\lneg$}};

\node[](d3>2) at (8*\mydistx + 1.5 - 2.2, 9*\mydisty) {\textcolor{red}{$\lneg$}};
\node[](d3>2) at (10*\mydistx - 2.3, 9*\mydisty) {\textcolor{blue}{$\lpos$}};

\node[](d3>2) at (8*\mydistx + 1.5, 9*\mydisty) {\textcolor{red}{$\lneg$}};
\node[](d3>2) at (10*\mydistx + 0.0, 9*\mydisty) {\textcolor{blue}{$\lpos$}};
\end{tikzpicture}
}
\end{minipage}
%%%%%%%%%%%%%%%%%%%%%%%%%%%%%%%%%%%%%%%%%%%%%%%%%%%%%%%%%%%%%%%%%%%%%%%
%%% Visualization of optimal resulting tree %%%%%%%%%%%%%%%%%%%%%%%%%%%
%%%%%%%%%%%%%%%%%%%%%%%%%%%%%%%%%%%%%%%%%%%%%%%%%%%%%%%%%%%%%%%%%%%%%%%
\begin{minipage}{0.3\textwidth}
\centering
\scalebox{0.6}{
\begin{tikzpicture}%[scale=0.50]
\def\mydistx{0.2}
\def\mydisty{-1}

\node[label=left:{$c)$}](start) at (-6.5, 0) {};

%%%%%%%%%%%%%%%%%%%%%%%%%%%%%%%%%%%%%%%%%%%%%%%%%%%%%%%%%%%
%% first column %%%%%%%%%%%%%%%%%%%%%%%%%%%%%%%%%%%%%%%%%%%
%%%%%%%%%%%%%%%%%%%%%%%%%%%%%%%%%%%%%%%%%%%%%%%%%%%%%%%%%%%
\node[](d1<1) at (-20*\mydistx, 0*\mydisty) [shape = rectangle, fill=orange, draw, scale=0.2ex]{$d_1^<<1.5$};
\node[](d1<2) at (-16*\mydistx, 1*\mydisty) [shape = rectangle, fill=orange, draw, scale=0.2ex]{$d_1^<<2$};
\node[](d1<25) at (-12*\mydistx, 2*\mydisty) [shape = rectangle, draw, scale=0.2ex]{$d_1^<<-1$};
\node[](d1<3) at (-8*\mydistx, 3*\mydisty) [shape = rectangle, draw, scale=0.2ex]{$d_1^<<1$};
\node[](d3>>1) at (-4*\mydistx, 4*\mydisty) [shape = rectangle, fill=orange, draw, scale=0.2ex]{$d_1^>>2.5$};
\node[](d3>>1) at (0*\mydistx, 5*\mydisty) [shape = rectangle, fill=orange, draw, scale=0.2ex]{$d_1^>>2$};
\node[](d3>>1) at (4*\mydistx, 6*\mydisty) [shape = rectangle, draw, scale=0.2ex]{$d_1^>>4$};
\node[](d3>>1) at (8*\mydistx, 7*\mydisty) [shape = rectangle, draw, scale=0.2ex]{$d_1^>>3$};
\node[](dstar) at (12*\mydistx, 8*\mydisty) [shape = rectangle, draw, scale=0.2ex]{$d^*<2$};

%%%%%%%%%%%%%%%%%%%%%%%%%%%%%%%%%%%%%%%%%%%%%%%%%%%%%%%%%%%
%% second column %%%%%%%%%%%%%%%%%%%%%%%%%%%%%%%%%%%%%%%%%%
%%%%%%%%%%%%%%%%%%%%%%%%%%%%%%%%%%%%%%%%%%%%%%%%%%%%%%%%%%%
\def\xshift{-1.5}
\node[](d1<2) at (-20*\mydistx + \xshift, 1*\mydisty) [shape = rectangle, draw, scale=0.2ex]{$p_1^<<2$};
\node[](d1<25) at (-16*\mydistx + \xshift, 2*\mydisty) [shape = rectangle, draw, scale=0.2ex]{$q_1^<<3$};
\node[](d1<3) at (-12*\mydistx + \xshift, 3*\mydisty) [shape = rectangle, draw, scale=0.2ex]{$q_1^<<4$};
\node[](d3>>1) at (-8*\mydistx + \xshift, 4*\mydisty) [shape = rectangle, draw, scale=0.2ex]{$q_1^<<5$};
\node[](d3>>1) at (-4*\mydistx + \xshift, 5*\mydisty) [shape = rectangle, draw, scale=0.2ex]{$p_1^><2$};
\node[](d3>>1) at (0*\mydistx + \xshift, 6*\mydisty) [shape = rectangle, draw, scale=0.2ex]{$q_1^><3$};
\node[](d3>>1) at (4*\mydistx + \xshift, 7*\mydisty) [shape = rectangle, draw, scale=0.2ex]{$q_1^><4$};
\node[](dstar) at (8*\mydistx + \xshift, 8*\mydisty) [shape = rectangle, draw, scale=0.2ex]{$q_1^><5$};

\draw [->,very thick](-20*\mydistx, -0.3 + 0*\mydisty) -- (-16*\mydistx, -0.65 + 0*\mydisty);
\draw [->,very thick](-16*\mydistx, -0.3 + 1*\mydisty) -- (-12*\mydistx, -0.65 + 1*\mydisty);
\draw [->,very thick](-12*\mydistx, -0.3 + 2*\mydisty) -- (-8*\mydistx, -0.65 + 2*\mydisty);
\draw [->,very thick](-8*\mydistx, -0.3 + 3*\mydisty) -- (-4*\mydistx, -0.65 + 3*\mydisty);
\draw [->,very thick](-4*\mydistx, -0.3 + 4*\mydisty) -- (0*\mydistx, -0.65 + 4*\mydisty);
\draw [->,very thick](0*\mydistx, -0.3 + 5*\mydisty) -- (4*\mydistx, -0.65 + 5*\mydisty);
\draw [->,very thick](4*\mydistx, -0.3 + 6*\mydisty) -- (8*\mydistx, -0.65 + 6*\mydisty);
\draw [dotted, line width=3pt](8*\mydistx, -0.3 + 7*\mydisty) -- (12*\mydistx, -0.65 + 7*\mydisty);
\draw [->,very thick](12*\mydistx, -0.3 + 8*\mydisty + 0.05) -- (16*\mydistx, -0.65 + 8*\mydisty);

\draw [->,very thick](-20*\mydistx, -0.3 + 0*\mydisty) -- (-16*\mydistx + 1.5*\xshift, -0.65 + 0*\mydisty);
\draw [->,very thick](-16*\mydistx, -0.3 + 1*\mydisty) -- (-12*\mydistx + 1.5*\xshift, -0.65 + 1*\mydisty);
\draw [->,very thick](-12*\mydistx, -0.3 + 2*\mydisty) -- (-8*\mydistx  + 1.5*\xshift, -0.65 + 2*\mydisty);
\draw [->,very thick](-8*\mydistx, -0.3 + 3*\mydisty) -- (-4*\mydistx + 1.5*\xshift, -0.65 + 3*\mydisty);
\draw [->,very thick](-4*\mydistx, -0.3 + 4*\mydisty) -- (0*\mydistx + 1.5*\xshift, -0.65 + 4*\mydisty);
\draw [->,very thick](0*\mydistx, -0.3 + 5*\mydisty) -- (4*\mydistx + 1.5*\xshift, -0.65 + 5*\mydisty);
\draw [->,very thick](4*\mydistx, -0.3 + 6*\mydisty) -- (8*\mydistx + 1.5*\xshift, -0.65 + 6*\mydisty);
\draw [->,very thick](8*\mydistx, -0.3 + 7*\mydisty) -- (12*\mydistx + 1.5*\xshift, -0.65 + 7*\mydisty);
\draw [->,very thick](12*\mydistx, -0.3 + 8*\mydisty + 0.05) -- (20*\mydistx + 1.5*\xshift, -0.65 + 8*\mydisty);

\draw [->,very thick](8*\mydistx + \xshift, -0.35 + 8*\mydisty + 0.05) -- (16*\mydistx + 2.5*\xshift, -0.65 + 8*\mydisty);
\draw [->,very thick](8*\mydistx + \xshift, -0.35 + 8*\mydisty + 0.05) -- (12*\mydistx + \xshift, -0.65 + 8*\mydisty);

\draw [->,very thick](4*\mydistx + \xshift, -0.35 + 7*\mydisty + 0.05) -- (-4*\mydistx + \xshift, -0.65 + 7*\mydisty);
\draw [->,very thick](4*\mydistx + \xshift, -0.35 + 7*\mydisty + 0.05) -- (2*\mydistx + \xshift, -0.65 + 7*\mydisty);

\draw [->,very thick](0*\mydistx + \xshift, -0.35 + 6*\mydisty + 0.05) -- (-8*\mydistx + \xshift, -0.65 + 6*\mydisty);
\draw [->,very thick](0*\mydistx + \xshift, -0.35 + 6*\mydisty + 0.05) -- (-2*\mydistx + \xshift, -0.65 + 6*\mydisty);

\draw [->,very thick](-4*\mydistx + \xshift, -0.35 + 5*\mydisty + 0.05) -- (-12*\mydistx + \xshift, -0.65 + 5*\mydisty);
\draw [->,very thick](-4*\mydistx + \xshift, -0.35 + 5*\mydisty + 0.05) -- (-6*\mydistx + \xshift, -0.65 + 5*\mydisty);

\draw [->,very thick](-8*\mydistx + \xshift, -0.35 + 4*\mydisty + 0.05) -- (-16*\mydistx + \xshift, -0.65 + 4*\mydisty);
\draw [->,very thick](-8*\mydistx + \xshift, -0.35 + 4*\mydisty + 0.05) -- (-10*\mydistx + \xshift, -0.65 + 4*\mydisty);

\draw [->,very thick](-12*\mydistx + \xshift, -0.35 + 3*\mydisty + 0.05) -- (-20*\mydistx + \xshift, -0.65 + 3*\mydisty);
\draw [->,very thick](-12*\mydistx + \xshift, -0.35 + 3*\mydisty + 0.05) -- (-14*\mydistx + \xshift, -0.65 + 3*\mydisty);

\draw [->,very thick](-16*\mydistx + \xshift, -0.35 + 2*\mydisty + 0.05) -- (-24*\mydistx + \xshift, -0.65 + 2*\mydisty);
\draw [->,very thick](-16*\mydistx + \xshift, -0.35 + 2*\mydisty + 0.05) -- (-18*\mydistx + \xshift, -0.65 + 2*\mydisty);

\draw [->,very thick](-20*\mydistx + \xshift, -0.35 + 1*\mydisty + 0.05) -- (-28*\mydistx + \xshift, -0.65 + 1*\mydisty);
\draw [->,very thick](-20*\mydistx + \xshift, -0.35 + 1*\mydisty + 0.05) -- (-22*\mydistx + \xshift, -0.65 + 1*\mydisty);

\node[](d3>2) at (-20*\mydistx + 1.5 - 3.3, 2*\mydisty) {\textcolor{blue}{$\lpos$}};
\node[](d3>2) at (-18*\mydistx - 3.4, 2*\mydisty) {\textcolor{red}{$\lneg$}};

\node[](d3>2) at (-16*\mydistx + 1.5 - 3.3, 3*\mydisty) {\textcolor{red}{$\lneg$}};
\node[](d3>2) at (-14*\mydistx - 3.4, 3*\mydisty) {\textcolor{blue}{$\lpos$}};

\node[](d3>2) at (-12*\mydistx + 1.5 - 3.3, 4*\mydisty) {\textcolor{blue}{$\lpos$}};
\node[](d3>2) at (-10*\mydistx - 3.4, 4*\mydisty) {\textcolor{red}{$\lneg$}};

\node[](d3>2) at (-8*\mydistx + 1.5 - 3.3, 5*\mydisty) {\textcolor{red}{$\lneg$}};
\node[](d3>2) at (-6*\mydistx - 3.4, 5*\mydisty) {\textcolor{blue}{$\lpos$}};

\node[](d3>2) at (-4*\mydistx + 1.5 - 3.3, 6*\mydisty) {\textcolor{blue}{$\lpos$}};
\node[](d3>2) at (-2*\mydistx - 3.4, 6*\mydisty) {\textcolor{red}{$\lneg$}};

\node[](d3>2) at (0*\mydistx + 1.5 - 3.3, 7*\mydisty) {\textcolor{red}{$\lneg$}};
\node[](d3>2) at (2*\mydistx - 3.4, 7*\mydisty) {\textcolor{blue}{$\lpos$}};

\node[](d3>2) at (4*\mydistx + 1.5 - 3.3, 8*\mydisty) {\textcolor{blue}{$\lpos$}};
\node[](d3>2) at (6*\mydistx - 3.4, 8*\mydisty) {\textcolor{red}{$\lneg$}};

\node[](d3>2) at (8*\mydistx + 1.5 - 2.2, 9*\mydisty) {\textcolor{red}{$\lneg$}};
\node[](d3>2) at (10*\mydistx - 2.3, 9*\mydisty) {\textcolor{blue}{$\lpos$}};

\node[](d3>2) at (8*\mydistx + 1.5, 9*\mydisty) {\textcolor{red}{$\lneg$}};
\node[](d3>2) at (10*\mydistx + 0.0, 9*\mydisty) {\textcolor{blue}{$\lpos$}};
\end{tikzpicture}
}
\end{minipage}

\centering
\scalebox{0.8}{
\begin{tikzpicture}%[scale=0.50]
\def\abstand{-0.5} 

\node[label=left:{$d)$}](start) at (-6.8, 1) {};

\def\wertb{0}
\node[label=left:{$d_1^{<}{:}$}](v11) at (-6.7, \wertb*\abstand) {};

\node[label=left:{\textcolor{red}{$r_1^{1<}$}}](v11) at (-6, \wertb*\abstand) {};
\node[label=left:{\textcolor{blue}{$b_1^{1<}$}}](v11) at (-5.4, \wertb*\abstand) {};

\node[label=left:{\textcolor{red}{$r_2^{1<}$}}](v11) at (-4.6, \wertb*\abstand) {};
\node[label=left:{\textcolor{blue}{$b_2^{1<}$}}](v11) at (-4.0, \wertb*\abstand) {};

\node[label=left:{\textcolor{red}{$r_3^{1<}$}}](v11) at (-3.2, \wertb*\abstand) {};
\node[label=left:{\textcolor{blue}{$b_3^{1<}$}}](v11) at (-2.6, \wertb*\abstand) {};

\node[label=left:{\textcolor{red}{$r_4^{1<}$}}](v11) at (-1.8, \wertb*\abstand) {};
\node[label=left:{\textcolor{blue}{$b_4^{1<}$}}](v11) at (-1.2, \wertb*\abstand) {};

\node[label=left:{\textcolor{red}{$E(v_1^1)$}}](v11) at (1.2, \wertb*\abstand) {};
\node[label=left:{\textcolor{red}{$c_1^1$}}](v11) at (0.3, \wertb*\abstand) {};
\node[label=left:{\textcolor{red}{$r_1^{1*}$}}](v11) at (1.7, \wertb*\abstand) {};
\node[label=left:{\textcolor{blue}{$b_1^{1*}$}}](v11) at (2.15, \wertb*\abstand) {};

\node[label=left:{\textcolor{blue}{$b(1,1)$}}](v11) at (3.3, \wertb*\abstand) {};
\node[label=left:{\textcolor{red}{$r_2^{1*}$}}](v11) at (3.8, \wertb*\abstand) {};
\node[label=left:{\textcolor{blue}{$b_2^{1*}$}}](v11) at (4.25, \wertb*\abstand) {};

\node[label=left:{\textcolor{red}{$E(v_1^2)$}}](v11) at (5.6, \wertb*\abstand) {};
\node[label=left:{\textcolor{red}{$c_1^2$}}](v11) at (4.7, \wertb*\abstand) {};
\node[label=left:{\textcolor{red}{$r_3^{1*}$}}](v11) at (6.1, \wertb*\abstand) {};
\node[label=left:{\textcolor{blue}{$b_3^{1*}$}}](v11) at (6.55, \wertb*\abstand) {};

\node[label=left:{\textcolor{blue}{$b(1,2)$}}](v11) at (7.7, \wertb*\abstand) {};
\node[label=left:{\textcolor{red}{$r_4^{1*}$}}](v11) at (8.2, \wertb*\abstand) {};
\node[label=left:{\textcolor{blue}{$b_4^{1*}$}}](v11) at (8.65, \wertb*\abstand) {};

\node[label=left:{\textcolor{red}{$E(v_1^3)$}}](v11) at (10.0, \wertb*\abstand) {};
\node[label=left:{\textcolor{red}{$c_1^3$}}](v11) at (9.1, \wertb*\abstand) {};
\node[label=left:{\textcolor{red}{$r_5^{1*}$}}](v11) at (10.5, \wertb*\abstand) {};
\node[label=left:{\textcolor{blue}{$b_5^{1*}$}}](v11) at (10.95, \wertb*\abstand) {};

\node[label=left:{$Rest_1^<$}](v11) at (12.0, \wertb*\abstand) {};

\def\werta{1}
\node[label=left:{$d_1^{>}{:}$}](v11) at (-6.7, \werta*\abstand) {};

\node[label=left:{$Rest_1^>$}](v11) at (-0.1, \werta*\abstand) {};
\node[label=left:{\textcolor{red}{$E(v_1^1)$}}](v11) at (1.2, \werta*\abstand) {};
\node[label=left:{\textcolor{red}{$c_1^1$}}](v11) at (0.3, \werta*\abstand) {};
\node[label=left:{\textcolor{red}{$r_1^{1*}$}}](v11) at (1.7, \werta*\abstand) {};
\node[label=left:{\textcolor{blue}{$b_1^{1*}$}}](v11) at (2.15, \werta*\abstand) {};

\node[label=left:{\textcolor{blue}{$b(1,1)$}}](v11) at (3.3, \werta*\abstand) {};
\node[label=left:{\textcolor{red}{$r_2^{1*}$}}](v11) at (3.8, \werta*\abstand) {};
\node[label=left:{\textcolor{blue}{$b_2^{1*}$}}](v11) at (4.25, \werta*\abstand) {};

\node[label=left:{\textcolor{red}{$E(v_1^2)$}}](v11) at (5.6, \werta*\abstand) {};
\node[label=left:{\textcolor{red}{$c_1^2$}}](v11) at (4.7, \werta*\abstand) {};
\node[label=left:{\textcolor{red}{$r_3^{1*}$}}](v11) at (6.1, \werta*\abstand) {};
\node[label=left:{\textcolor{blue}{$b_3^{1*}$}}](v11) at (6.55, \werta*\abstand) {};

\node[label=left:{\textcolor{blue}{$b(1,2)$}}](v11) at (7.7, \werta*\abstand) {};
\node[label=left:{\textcolor{red}{$r_4^{1*}$}}](v11) at (8.2, \werta*\abstand) {};
\node[label=left:{\textcolor{blue}{$b_4^{1*}$}}](v11) at (8.65, \werta*\abstand) {};

\node[label=left:{\textcolor{red}{$E(v_1^3)$}}](v11) at (10.0, \werta*\abstand) {};
\node[label=left:{\textcolor{red}{$c_1^3$}}](v11) at (9.1, \werta*\abstand) {};
\node[label=left:{\textcolor{red}{$r_5^{1*}$}}](v11) at (10.5, \werta*\abstand) {};
\node[label=left:{\textcolor{blue}{$b_5^{1*}$}}](v11) at (10.95, \werta*\abstand) {};

\node[label=left:{\textcolor{red}{$r_4^{1>}$}}](v11) at (12.8, \werta*\abstand) {};
\node[label=left:{\textcolor{blue}{$b_4^{1>}$}}](v11) at (13.4, \werta*\abstand) {};

\node[label=left:{\textcolor{red}{$r_3^{1>}$}}](v11) at (-4.6+18.8, \werta*\abstand) {};
\node[label=left:{\textcolor{blue}{$b_3^{1>}$}}](v11) at (-4.0+18.8, \werta*\abstand) {};

\node[label=left:{\textcolor{red}{$r_2^{1>}$}}](v11) at (-3.2+18.8, \werta*\abstand) {};
\node[label=left:{\textcolor{blue}{$b_2^{1>}$}}](v11) at (-2.6+18.8, \werta*\abstand) {};

\node[label=left:{\textcolor{red}{$r_1^{1>}$}}](v11) at (-1.8+18.8, \werta*\abstand) {};
\node[label=left:{\textcolor{blue}{$b_1^{1>}$}}](v11) at (-1.2+18.8, \werta*\abstand) {};

\def\wertb{2}
\node[label=left:{$d_2^{<}{:}$}](v11) at (-6.7, \wertb*\abstand) {};

\node[label=left:{\textcolor{red}{$r_1^{2<}$}}](v11) at (-6, \wertb*\abstand) {};
\node[label=left:{\textcolor{blue}{$b_1^{2<}$}}](v11) at (-5.4, \wertb*\abstand) {};

\node[label=left:{\textcolor{red}{$r_2^{2<}$}}](v11) at (-4.6, \wertb*\abstand) {};
\node[label=left:{\textcolor{blue}{$b_2^{2<}$}}](v11) at (-4.0, \wertb*\abstand) {};

\node[label=left:{\textcolor{red}{$E(v_2^1)$}}](v11) at (1.2, \wertb*\abstand) {};
\node[label=left:{\textcolor{red}{$c_2^1$}}](v11) at (0.3, \wertb*\abstand) {};
\node[label=left:{\textcolor{red}{$r_1^{2*}$}}](v11) at (1.7, \wertb*\abstand) {};
\node[label=left:{\textcolor{blue}{$b_1^{2*}$}}](v11) at (2.15, \wertb*\abstand) {};

\node[label=left:{\textcolor{blue}{$b(2,1)$}}](v11) at (3.3, \wertb*\abstand) {};
\node[label=left:{\textcolor{red}{$r_2^{2*}$}}](v11) at (3.8, \wertb*\abstand) {};
\node[label=left:{\textcolor{blue}{$b_2^{2*}$}}](v11) at (4.25, \wertb*\abstand) {};

\node[label=left:{\textcolor{red}{$E(v_2^2)$}}](v11) at (5.6, \wertb*\abstand) {};
\node[label=left:{\textcolor{red}{$c_2^2$}}](v11) at (4.7, \wertb*\abstand) {};
\node[label=left:{\textcolor{red}{$r_3^{2*}$}}](v11) at (6.1, \wertb*\abstand) {};
\node[label=left:{\textcolor{blue}{$b_3^{2*}$}}](v11) at (6.55, \wertb*\abstand) {};

\node[label=left:{$Rest_2^<$}](v11) at (12.0, \wertb*\abstand) {};

\def\werta{3}
\node[label=left:{$d_2^{>}{:}$}](v11) at (-6.7, \werta*\abstand) {};

\node[label=left:{$Rest_2^>$}](v11) at (-0.1, \werta*\abstand) {};
\node[label=left:{\textcolor{red}{$E(v_2^1)$}}](v11) at (1.2, \werta*\abstand) {};
\node[label=left:{\textcolor{red}{$c_2^1$}}](v11) at (0.3, \werta*\abstand) {};
\node[label=left:{\textcolor{red}{$r_1^{2*}$}}](v11) at (1.7, \werta*\abstand) {};
\node[label=left:{\textcolor{blue}{$b_1^{2*}$}}](v11) at (2.15, \werta*\abstand) {};

\node[label=left:{\textcolor{blue}{$b(2,1)$}}](v11) at (3.3, \werta*\abstand) {};
\node[label=left:{\textcolor{red}{$r_2^{2*}$}}](v11) at (3.8, \werta*\abstand) {};
\node[label=left:{\textcolor{blue}{$b_2^{2*}$}}](v11) at (4.25, \werta*\abstand) {};

\node[label=left:{\textcolor{red}{$E(v_2^2)$}}](v11) at (5.6, \werta*\abstand) {};
\node[label=left:{\textcolor{red}{$c_2^2$}}](v11) at (4.7, \werta*\abstand) {};
\node[label=left:{\textcolor{red}{$r_3^{2*}$}}](v11) at (6.1, \werta*\abstand) {};
\node[label=left:{\textcolor{blue}{$b_3^{2*}$}}](v11) at (6.55, \werta*\abstand) {};

\node[label=left:{\textcolor{red}{$r_2^{2>}$}}](v11) at (-3.2+18.8, \werta*\abstand) {};
\node[label=left:{\textcolor{blue}{$b_2^{2>}$}}](v11) at (-2.6+18.8, \werta*\abstand) {};

\node[label=left:{\textcolor{red}{$r_1^{2>}$}}](v11) at (-1.8+18.8, \werta*\abstand) {};
\node[label=left:{\textcolor{blue}{$b_1^{2>}$}}](v11) at (-1.2+18.8, \werta*\abstand) {};

\def\wertb{4}
\node[label=left:{$d_3^{<}{:}$}](v11) at (-6.7, \wertb*\abstand) {};

\node[label=left:{\textcolor{red}{$r_1^{3<}$}}](v11) at (-6, \wertb*\abstand) {};
\node[label=left:{\textcolor{blue}{$b_1^{3<}$}}](v11) at (-5.4, \wertb*\abstand) {};

\node[label=left:{\textcolor{red}{$r_2^{3<}$}}](v11) at (-4.6, \wertb*\abstand) {};
\node[label=left:{\textcolor{blue}{$b_2^{3<}$}}](v11) at (-4.0, \wertb*\abstand) {};

\node[label=left:{\textcolor{red}{$E(v_3^1)$}}](v11) at (1.2, \wertb*\abstand) {};
\node[label=left:{\textcolor{red}{$c_3^1$}}](v11) at (0.3, \wertb*\abstand) {};
\node[label=left:{\textcolor{red}{$r_1^{3*}$}}](v11) at (1.7, \wertb*\abstand) {};
\node[label=left:{\textcolor{blue}{$b_1^{3*}$}}](v11) at (2.15, \wertb*\abstand) {};

\node[label=left:{\textcolor{blue}{$b(3,1)$}}](v11) at (3.3, \wertb*\abstand) {};
\node[label=left:{\textcolor{red}{$r_2^{3*}$}}](v11) at (3.8, \wertb*\abstand) {};
\node[label=left:{\textcolor{blue}{$b_2^{3*}$}}](v11) at (4.25, \wertb*\abstand) {};

\node[label=left:{\textcolor{red}{$E(v_3^2)$}}](v11) at (5.6, \wertb*\abstand) {};
\node[label=left:{\textcolor{red}{$c_3^2$}}](v11) at (4.7, \wertb*\abstand) {};
\node[label=left:{\textcolor{red}{$r_3^{3*}$}}](v11) at (6.1, \wertb*\abstand) {};
\node[label=left:{\textcolor{blue}{$b_3^{3*}$}}](v11) at (6.55, \wertb*\abstand) {};

\node[label=left:{$Rest_3^<$}](v11) at (12.0, \wertb*\abstand) {};

\def\werta{5}
\node[label=left:{$d_3^{>}{:}$}](v11) at (-6.7, \werta*\abstand) {};

\node[label=left:{$Rest_3^>$}](v11) at (-0.1, \werta*\abstand) {};
\node[label=left:{\textcolor{red}{$E(v_3^1)$}}](v11) at (1.2, \werta*\abstand) {};
\node[label=left:{\textcolor{red}{$c_3^1$}}](v11) at (0.3, \werta*\abstand) {};
\node[label=left:{\textcolor{red}{$r_1^{3*}$}}](v11) at (1.7, \werta*\abstand) {};
\node[label=left:{\textcolor{blue}{$b_1^{3*}$}}](v11) at (2.15, \werta*\abstand) {};

\node[label=left:{\textcolor{blue}{$b(3,1)$}}](v11) at (3.3, \werta*\abstand) {};
\node[label=left:{\textcolor{red}{$r_2^{3*}$}}](v11) at (3.8, \werta*\abstand) {};
\node[label=left:{\textcolor{blue}{$b_2^{3*}$}}](v11) at (4.25, \werta*\abstand) {};

\node[label=left:{\textcolor{red}{$E(v_3^2)$}}](v11) at (5.6, \werta*\abstand) {};
\node[label=left:{\textcolor{red}{$c_3^2$}}](v11) at (4.7, \werta*\abstand) {};
\node[label=left:{\textcolor{red}{$r_3^{3*}$}}](v11) at (6.1, \werta*\abstand) {};
\node[label=left:{\textcolor{blue}{$b_3^{3*}$}}](v11) at (6.55, \werta*\abstand) {};

\node[label=left:{\textcolor{red}{$r_2^{3>}$}}](v11) at (-3.2+18.8, \werta*\abstand) {};
\node[label=left:{\textcolor{blue}{$b_2^{3>}$}}](v11) at (-2.6+18.8, \werta*\abstand) {};

\node[label=left:{\textcolor{red}{$r_1^{3>}$}}](v11) at (-1.8+18.8, \werta*\abstand) {};
\node[label=left:{\textcolor{blue}{$b_1^{3>}$}}](v11) at (-1.2+18.8, \werta*\abstand) {};

\def\wertb{8}
\node[label=left:{$p_1^{<}{:}$}](v11) at (-6.7, \wertb*\abstand) {};

\node[label=left:{\textcolor{red}{$E(v_1^1)$}}](v11) at (1.2, \wertb*\abstand) {};
\node[label=left:{\textcolor{red}{$c_1^1$}}](v11) at (0.3, \wertb*\abstand) {};
\node[label=left:{\textcolor{red}{$r_1^{1*}$}}](v11) at (1.7, \wertb*\abstand) {};
\node[label=left:{\textcolor{red}{$r_1^{1<}$}}](v11) at (2.3, \wertb*\abstand) {};
\node[label=left:{\textcolor{red}{$r_2^{1<}$}}](v11) at (2.9, \wertb*\abstand) {};
\node[label=left:{\textcolor{red}{$r_3^{1<}$}}](v11) at (3.5, \wertb*\abstand) {};
\node[label=left:{\textcolor{red}{$r_4^{1<}$}}](v11) at (4.1, \wertb*\abstand) {};

\node[label=left:{$Rest(p_1^<)$}](v11) at (6.5, \wertb*\abstand) {};

\def\wertb{9}
\node[label=left:{$q_1^{<}{:}$}](v11) at (-6.7, \wertb*\abstand) {};

\node[label=left:{\textcolor{blue}{$b(1,1)$}}](v11) at (6.0, \wertb*\abstand) {};
\node[label=left:{\textcolor{blue}{$b_1^{1*}$}}](v11) at (6.5, \wertb*\abstand) {};
\node[label=left:{\textcolor{blue}{$b_2^{1*}$}}](v11) at (6.95, \wertb*\abstand) {};
\node[label=left:{\textcolor{blue}{$b_2^{1<}$}}](v11) at (7.55, \wertb*\abstand) {};

\node[label=left:{\textcolor{red}{$E(v_1^2)$}}](v11) at (9.6, \wertb*\abstand) {};
\node[label=left:{\textcolor{red}{$c_1^2$}}](v11) at (8.7, \wertb*\abstand) {};
\node[label=left:{\textcolor{red}{$r_2^{1*}$}}](v11) at (10.1, \wertb*\abstand) {};
\node[label=left:{\textcolor{red}{$r_3^{1*}$}}](v11) at (10.55, \wertb*\abstand) {};
\node[label=left:{\textcolor{red}{$r_3^{1<}$}}](v11) at (11.15, \wertb*\abstand) {};

\node[label=left:{\textcolor{blue}{$b(1,2)$}}](v11) at (13.0, \wertb*\abstand) {};
\node[label=left:{\textcolor{blue}{$b_3^{1*}$}}](v11) at (13.5, \wertb*\abstand) {};
\node[label=left:{\textcolor{blue}{$b_4^{1*}$}}](v11) at (13.95, \wertb*\abstand) {};
\node[label=left:{\textcolor{blue}{$b_4^{1<}$}}](v11) at (14.55, \wertb*\abstand) {};

\node[label=left:{$Rest(q_1^<)$}](v11) at (17, \wertb*\abstand) {};

\def\wertb{10}
\node[label=left:{$p_1^{>}{:}$}](v11) at (-6.7, \wertb*\abstand) {};

\node[label=left:{\textcolor{red}{$E(v_1^3)$}}](v11) at (1.2, \wertb*\abstand) {};
\node[label=left:{\textcolor{red}{$c_1^3$}}](v11) at (0.3, \wertb*\abstand) {};
\node[label=left:{\textcolor{red}{$r_5^{1*}$}}](v11) at (1.7, \wertb*\abstand) {};
\node[label=left:{\textcolor{red}{$r_1^{1>}$}}](v11) at (2.3, \wertb*\abstand) {};
\node[label=left:{\textcolor{red}{$r_2^{1>}$}}](v11) at (2.9, \wertb*\abstand) {};
\node[label=left:{\textcolor{red}{$r_3^{1>}$}}](v11) at (3.5, \wertb*\abstand) {};
\node[label=left:{\textcolor{red}{$r_4^{1>}$}}](v11) at (4.1, \wertb*\abstand) {};

\node[label=left:{$Rest(p_1^>)$}](v11) at (6.5, \wertb*\abstand) {};

\def\wertb{11}
\node[label=left:{$q_1^{>}{:}$}](v11) at (-6.7, \wertb*\abstand) {};

\node[label=left:{\textcolor{blue}{$b(1,2)$}}](v11) at (6.0, \wertb*\abstand) {};
\node[label=left:{\textcolor{blue}{$b_4^{1*}$}}](v11) at (6.5, \wertb*\abstand) {};
\node[label=left:{\textcolor{blue}{$b_5^{1*}$}}](v11) at (6.95, \wertb*\abstand) {};
\node[label=left:{\textcolor{blue}{$b_2^{1>}$}}](v11) at (7.55, \wertb*\abstand) {};

\node[label=left:{\textcolor{red}{$E(v_1^2)$}}](v11) at (9.6, \wertb*\abstand) {};
\node[label=left:{\textcolor{red}{$c_1^2$}}](v11) at (8.7, \wertb*\abstand) {};
\node[label=left:{\textcolor{red}{$r_4^{1*}$}}](v11) at (10.1, \wertb*\abstand) {};
\node[label=left:{\textcolor{red}{$r_3^{1*}$}}](v11) at (10.55, \wertb*\abstand) {};
\node[label=left:{\textcolor{red}{$r_3^{1>}$}}](v11) at (11.15, \wertb*\abstand) {};

\node[label=left:{\textcolor{blue}{$b(1,1)$}}](v11) at (13.0, \wertb*\abstand) {};
\node[label=left:{\textcolor{blue}{$b_2^{1*}$}}](v11) at (13.5, \wertb*\abstand) {};
\node[label=left:{\textcolor{blue}{$b_3^{1*}$}}](v11) at (13.95, \wertb*\abstand) {};
\node[label=left:{\textcolor{blue}{$b_4^{1>}$}}](v11) at (14.55, \wertb*\abstand) {};

\node[label=left:{$Rest(q_1^>)$}](v11) at (17, \wertb*\abstand) {};

\def\wertb{12}
\node[label=left:{$d^*{:}$}](v11) at (-6.7, \wertb*\abstand) {};

\node[label=left:{\textcolor{blue}{$b_j^{i*}$}}](v11) at (1.2, \wertb*\abstand) {};
\node[label=left:{\textcolor{blue}{$b^*$}}](v11) at (0.3, \wertb*\abstand) {};
%\node[label=left:{\textcolor{red}{$r_5^{1*}$}}](v11) at (1.7, \wertb*\abstand) {};
%\node[label=left:{\textcolor{red}{$r_1^{1>}$}}](v11) at (2.3, \wertb*\abstand) {};
%\node[label=left:{\textcolor{red}{$r_2^{1>}$}}](v11) at (2.9, \wertb*\abstand) {};
%\node[label=left:{\textcolor{red}{$r_3^{1>}$}}](v11) at (3.5, \wertb*\abstand) {};
\node[label=left:{\textcolor{red}{$\text{edge examples}$}}](v11) at (4.1, \wertb*\abstand) {};

\node[label=left:{$Rest(d^*)$}](v11) at (6.5, \wertb*\abstand) {};

%%%%%%%%%%%%%%%%%%%%%%%%%%%%%%%%%%%%%%%%%%%%%%%%%%%%%%%%%%%%%%%%%%%%%%%%%%%%%%
%%% Cuts in the instance %%%%%%%%%%%%%%%%%%%%%%%%%%%%%%%%%%%%%%%%%%%%%%%%%%%%%
%%%%%%%%%%%%%%%%%%%%%%%%%%%%%%%%%%%%%%%%%%%%%%%%%%%%%%%%%%%%%%%%%%%%%%%%%%%%%%
\draw [very thick, olive](-5.55, 0*\abstand + 0.2) -- (-5.55, 0*\abstand - 0.2);
\draw [very thick, olive](-5.55, 2*\abstand + 0.2) -- (-5.55, 2*\abstand - 0.2);
\draw [very thick, cyan](-5.55, 4*\abstand + 0.2) -- (-5.55, 4*\abstand - 0.2);

\draw [very thick, olive](-4.15, 0*\abstand + 0.2) -- (-4.15, 0*\abstand - 0.2);

\draw [very thick, cyan](-2.75, 0*\abstand + 0.2) -- (-2.75, 0*\abstand - 0.2);

\draw [very thick, cyan](-1.4, 0*\abstand + 0.2) -- (-1.4, 0*\abstand - 0.2);
\draw [very thick, olive](-1.4, 2*\abstand + 0.2) -- (-1.4, 2*\abstand - 0.2);
\draw [very thick, cyan](-1.4, 4*\abstand + 0.2) -- (-1.4, 4*\abstand - 0.2);

\draw [very thick, orange](2.0, 0*\abstand + 0.2) -- (2, 0*\abstand - 0.2);
\draw [very thick](2.0, 1*\abstand + 0.2) -- (2, 1*\abstand - 0.2);
\draw [very thick, orange](2.0, 2*\abstand + 0.2) -- (2, 2*\abstand - 0.2);
\draw [very thick](2.0, 3*\abstand + 0.2) -- (2, 3*\abstand - 0.2);
\draw [very thick](2.0, 4*\abstand + 0.2) -- (2, 4*\abstand - 0.2);
\draw [very thick, orange](2.0, 5*\abstand + 0.2) -- (2, 5*\abstand - 0.2);

\draw [very thick, orange](4.05, 0*\abstand + 0.2) -- (4.05, 0*\abstand - 0.2);
\draw [very thick](4.05, 1*\abstand + 0.2) -- (4.05, 1*\abstand - 0.2);
\draw [very thick, orange](4.05, 2*\abstand + 0.2) -- (4.05, 2*\abstand - 0.2);
\draw [very thick](4.05, 3*\abstand + 0.2) -- (4.05, 3*\abstand - 0.2);
\draw [very thick](4.05, 4*\abstand + 0.2) -- (4.05, 4*\abstand - 0.2);
\draw [very thick, orange](4.05, 5*\abstand + 0.2) -- (4.05, 5*\abstand - 0.2);

\draw [very thick](6.4, 0*\abstand + 0.2) -- (6.4, 0*\abstand - 0.2);
\draw [very thick, orange](6.4, 1*\abstand + 0.2) -- (6.4, 1*\abstand - 0.2);

\draw [very thick](8.45, 0*\abstand + 0.2) -- (8.45, 0*\abstand - 0.2);
\draw [very thick, orange](8.45, 1*\abstand + 0.2) -- (8.45, 1*\abstand - 0.2);

\draw [very thick, cyan](11.9, 1*\abstand + 0.2) -- (11.9, 1*\abstand - 0.2);
\draw [very thick, cyan](11.9, 3*\abstand + 0.2) -- (11.9, 3*\abstand - 0.2);
\draw [very thick, olive](11.9, 5*\abstand + 0.2) -- (11.9, 5*\abstand - 0.2);

\draw [very thick, cyan](13.3, 1*\abstand + 0.2) -- (13.3, 1*\abstand - 0.2);

\draw [very thick, olive](14.7, 1*\abstand + 0.2) -- (14.7, 1*\abstand - 0.2);

\draw [very thick, olive](16.1, 1*\abstand + 0.2) -- (16.1, 1*\abstand - 0.2);
\draw [very thick, cyan](16.1, 3*\abstand + 0.2) -- (16.1, 3*\abstand - 0.2);
\draw [very thick, olive](16.1, 5*\abstand + 0.2) -- (16.1, 5*\abstand - 0.2);

\draw [very thick, cyan](4.3, 8*\abstand + 0.2) -- (4.3, 8*\abstand - 0.2);
\draw [very thick, cyan](4.3, 10*\abstand + 0.2) -- (4.3, 10*\abstand - 0.2);
\draw [very thick, cyan](4.3, 12*\abstand + 0.2) -- (4.3, 12*\abstand - 0.2);

\draw [very thick, cyan](7.7, 9*\abstand + 0.2) -- (7.7, 9*\abstand - 0.2);
\draw [very thick, cyan](7.7, 11*\abstand + 0.2) -- (7.7, 11*\abstand - 0.2);

\draw [very thick, cyan](11.3, 9*\abstand + 0.2) -- (11.3, 9*\abstand - 0.2);
\draw [very thick, cyan](11.3, 11*\abstand + 0.2) -- (11.3, 11*\abstand - 0.2);

\draw [very thick, cyan](14.8, 9*\abstand + 0.2) -- (14.8, 9*\abstand - 0.2);
\draw [very thick, cyan](14.8, 11*\abstand + 0.2) -- (14.8, 11*\abstand - 0.2);

%%%%%%%%%%%%%%%%%%%%%%%%%%%%%%%%%%%%%%%%%%%%%%%%%%%%%%%%%%%%%%%%%%%%%%%%%%%%%%
%%% Definitions of Thresholds %%%%%%%%%%%%%%%%%%%%%%%%%%%%%%%%%%%%%%%%%%%%%%%%
%%%%%%%%%%%%%%%%%%%%%%%%%%%%%%%%%%%%%%%%%%%%%%%%%%%%%%%%%%%%%%%%%%%%%%%%%%%%%%

\draw[decorate, decoration={brace}, yshift=2ex]  (-6.9, 0*\abstand) -- node[above=0.4ex] {$-4$}  (-5.6, 0*\abstand);
\draw[decorate, decoration={brace}, yshift=2ex]  (-5.5, 0*\abstand) -- node[above=0.4ex] {$-3$}  (-4.2, 0*\abstand);
\draw[decorate, decoration={brace}, yshift=2ex]  (-4.1, 0*\abstand) -- node[above=0.4ex] {$-2$}  (-2.8, 0*\abstand);
\draw[decorate, decoration={brace}, yshift=2ex]  (-2.7, 0*\abstand) -- node[above=0.4ex] {$-1$}  (-1.45, 0*\abstand);

\draw[decorate, decoration={brace}, yshift=2ex]  (-1.35, 0*\abstand) -- node[above=0.4ex] {$1$}  (1.9, 0*\abstand);
\draw[decorate, decoration={brace}, yshift=2ex]  (2.1, 0*\abstand) -- node[above=0.4ex] {$1.5$}  (4.0, 0*\abstand);
\draw[decorate, decoration={brace}, yshift=2ex]  (4.1, 0*\abstand) -- node[above=0.4ex] {$2$}  (6.3, 0*\abstand);
\draw[decorate, decoration={brace}, yshift=2ex]  (6.5, 0*\abstand) -- node[above=0.4ex] {$2.5$}  (8.4, 0*\abstand);
\draw[decorate, decoration={brace}, yshift=2ex]  (8.5, 0*\abstand) -- node[above=0.4ex] {$3$}  (11.8, 0*\abstand);

\draw[decorate, decoration={brace}, yshift=2ex]  (11.9, 0*\abstand) -- node[above=0.4ex] {$4$}  (13.2, 0*\abstand);
\draw[decorate, decoration={brace}, yshift=2ex]  (13.3, 0*\abstand) -- node[above=0.4ex] {$5$}  (14.6, 0*\abstand);
\draw[decorate, decoration={brace}, yshift=2ex]  (14.7, 0*\abstand) -- node[above=0.4ex] {$6$}  (16.0, 0*\abstand);
\draw[decorate, decoration={brace}, yshift=2ex]  (16.1, 0*\abstand) -- node[above=0.4ex] {$7$}  (17.4, 0*\abstand);

\draw[decorate, decoration={brace}, yshift=2ex]  (-0.3, 8*\abstand) -- node[above=0.4ex] {$1$}  (4.25, 8*\abstand);
\draw[decorate, decoration={brace}, yshift=2ex]  (4.35, 8*\abstand) -- node[above=0.4ex] {$2$}  (7.65, 8*\abstand);
\draw[decorate, decoration={brace}, yshift=2ex]  (7.75, 8*\abstand) -- node[above=0.4ex] {$3$}  (11.2, 8*\abstand);
\draw[decorate, decoration={brace}, yshift=2ex]  (11.35, 8*\abstand) -- node[above=0.4ex] {$4$}  (14.75, 8*\abstand);
\draw[decorate, decoration={brace}, yshift=2ex]  (14.85, 8*\abstand) -- node[above=0.4ex] {$5$}  (16.7, 8*\abstand);

\end{tikzpicture}
}
\caption{
A visualization of the reduction from the proof of \Cref{thm:adj-w-hard-d-t}. 
Part $a)$ shows a \textsc{Multicolored Independent Set} instance.
For the sake of the illustration, the property that all partite sets have the same size is dropped. 
A multicolored independent set is depicted in orange. 
The vertex selection in the partite set~$V_1$, and the final cut in feature~$d^*$.
Part $b)$ shows the part of the input tree~$T'$ corresponding to the vertex selection in the partite set~$V_1$ together with the final cut in feature~$d^*$. 
Part $c)$ shows the part of a solution tree~$T$ corresponding to the vertex selection in the partite set~$V_1$ together with the final cut in feature~$d^*$. 
The cuts where a threshold adjustment was applied are marked orange and they indicate the selection of vertex~$v_1^2\in V_1$.
Part $d)$ shows the corresponding classification instance. 
Here, $E(v_i^j)$ is the set of all edges incident with vertex~$v_i^j$. $Rest$ refers to all other examples not shown in that feature (the precise set differs in each feature and is always a subset of all examples having the default threshold of that feature). 
Features~$x_j^<$ and~$x_j^>$ for~$x\in\{p,q\}$ and~$j\in\{2,3\}$ are not shown.
Note that thresholds~$-4$ to~$-1$ only exist for features~$d_i^<$ and that thresholds~$d_i^>$ only exist for features~$d_i^>$.
All possible cuts in the classification instance are shown by~``$|$''.
Moreover, we use a 4~coloring to distinguish cuts of the initial tree~$T'$ and a solution tree~$T$:
1) cyan cuts are used by both~$T'$ and~$T$, 2) brown cuts are only used by~$T'$, orange cuts are only used by~$T$, and~4) black cuts are neither used by~$T'$ nor~$T$.}
\label{fig-adj-hard-d-t=0}
%\end{figure*}
\end{sidewaysfigure}
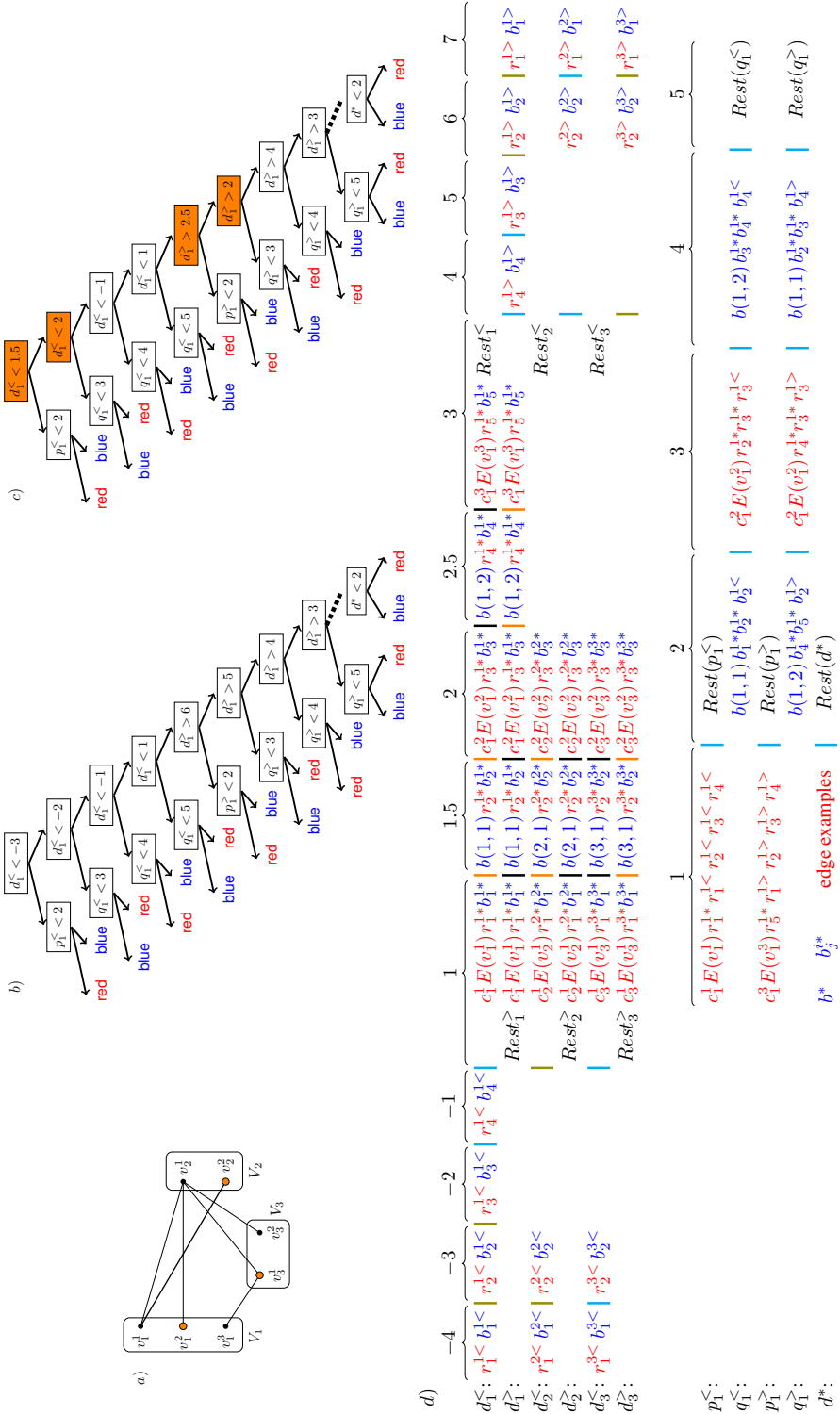

\textbf{Construction.}
Compared to Harviainen et al.~\cite[Thm.~5.10]{HSSS25} and to our reduction in \Cref{thm:exc-w-hard-d-t}, our construction here is significantly more involved.
On the one hand, we need way more examples and features and on the other hand, the input decision tree is not only a path.
Next, we describe our construction in detail.

\emph{Description of the data set:}
A visualization is shown in part~$d)$ of \Cref{fig-adj-hard-d-t=0}.

\begin{itemize}

\item For each edge~$\{v_i^x,v_j^z\}\in E(G)$ we add an \emph{edge example}~$e(v_i^x,v_j^z)$.
To all these examples we assign label~$\lneg$.

\item For each~$i\in[\kappa]$ and each~$x\in[p-1]$ we add a $\lpos$~\emph{separating example}~$b(i,x)$.

\item For vertex~$v_i^x\in V_i$ we create a $\lneg$~\emph{choice example}~$c_i^x$.

\item We create $\lneg$~\emph{enforcing example}~$r^*$ and a $\lpos$~\emph{forcing example}~$b^*$.
To make the input tree~$T'$ reasonable, we add $(N^2+1)$~copies of~$b^*$.
For simplicity, in the following, we will only talk about the concrete forcing example~$b^*$.

\item For each feature~$d_i^<$ and each~$j\in[2(p-1)]$ we add a $\lpos$~\emph{dummy example}~$b_j^{i<}$ and a $\lneg$~\emph{dummy example}~$r_j^{i<}$.
Analogously, for each feature~$d_i^>$ and each~$j\in[2(p-1)]$ we add a $\lpos$~\emph{dummy example}~$b_j^{i>}$ and a $\lneg$~\emph{dummy example}~$r_j^{i>}$.

\item For each~$i\in[k]$ and each~$j\in[2(p-1)+1]$, we add a $\lpos$~\emph{filler example}~$b_j^{i*}$ and a $\lneg$~\emph{filler example}~$r_j^{i*}$.
\end{itemize}

Note that we add $M \coloneqq |E(G)|$~edge examples, $N$~choice examples, $N-\kappa=(p-1)\cdot\kappa$~separating examples, $8(p-1)\cdot\kappa$~dummy examples, $(4(p-1)+2)\cdot \kappa$~filler examples, and 2~further examples.
Thus, the number of examples is polynomial in the input size.

For each~$i\in[\kappa]$, we add six features~$d_i^<$, $d_i^>$, $p_i^<$, $p_i^>$, $q_i^<$, and~$q_i^>$.
We also add another feature~$d^*$.
Thus, we have $6\cdot \kappa+1$~features.

It remains to describe the coordinates of the examples in the features.
Initially, we declare a \emph{default threshold}~$\default(d')$ for each feature~$d'$.
Then, each example~$e$ has the default threshold in each feature, unless we assign~$e$ a different threshold in that feature.

\begin{itemize}
\item For each feature~$d_i^<$, we set~$\default(d_i^<)=p$, for each feature~$d_i^>$, we set~$\default(d_i^>)=1$.

\item For each feature~$p_i^<$, we set~$\default(p_i^<)=2$.
Analogously, for each feature~$p_i^>$, we set~$\default(p_i^>)=2$.

\item For each feature~$q_i^<$, we set~$\default(q_i^<)=2(p-1)+1$.
Analogously, for each feature~$q_i^>$, we set~$\default(q_i^>)=2(p-1)+1$.

\item For feature~$d^*$, we set~$\default(d^*)=2$.
\end{itemize}

Now, we are ready to assign the coordinates to all examples.

\begin{itemize}
\item For each edge example~$e=e(v_i^x,v_j^z)$ we set~$e[d_i^<]=e[d_i^>]=x$, $e[d_j^<]=e[d_j^>]=z$, and~$e[d^*]=1$.
Next, we distinguish three different values of~$x$:
(1) if~$x=1$, we set~$e[p_i^<]=1$.
(2) if~$x=p$, we set~$e[p_i^>]=1$.
(3) if~$1<x<p$, we set~$e[q_i^<]=2x-1$ and~$e[q_i^<]=2(p-x)+1$.
We do similar for~$z$, that is, 
(1) if~$z=1$, we set~$e[p_j^<]=1$.
(2) if~$z=p$, we set~$e[p_j^>]=1$.
(3) if~$1<z<p$, we set~$e[q_j^<]=2z-1$ and~$e[q_j^<]=2(p-z)+1$.
In each other features~$e$ is set to the default threshold.

\item For the separating example~$e=b(i,x)$ we set~$e[d_i^<]=e[d_i^>]=x+1/2$.
Moreover, we set~$e[p_i^<]=2x$, and~$e[p_i^>]=2(p-x)$.
In each other features~$e$ is set to the default threshold.

\item For the choice example~$e=c_i^x$ we set~$e[d_i^<]=e[d_i^>]=x$.
Next, we distinguish three different values of~$x$:
(1) if~$x=1$, we set~$e[p_i^<]=1$.
(2) if~$x=p$, we set~$e[p_i^>]=1$.
(3) if~$1<x<p$, we set~$e[q_i^<]=2x-1$ and~$e[q_i^<]=2(p-x)+1$.
In each other features~$e$ is set to the default threshold.

\item The $\lneg$~enforcing example~$r^*$ has the default threshold in every feature.
For the $\lpos$~forcing example~$b^*$, we set~$b^*[d^*]=1$, and in each other feature we use the default threshold.

\item Let~$i\in[\kappa]$ and~$j\in[2(p-1)]$.

(A) For the $\lpos$~dummy example~$e=b_j^{i<}$, we set~$e[d_1^<]=-2(p-1)-1+j=-2p+1+j$. 
Note that~$-2(p-1)\le e[d_1^<]\le -1$.
Moreover, if~$j$ is even, we set~$e[q_i^<]=j$.

(B)  For the $\lpos$~dummy example~$e=b_j^{i>}$, we set~$e[d_1^>]=p+2(p-1)-j=3p-1-j$. 
Note that~$p+1\le e[d_1^<]\le 3p-2$.
Moreover, if~$j$ is even, we set~$e[q_i^>]=j$.

(C) For the $\lneg$~dummy example~$e=r_j^{i<}$, we set~$e[d_1^<]=-2(p-1)-1+j=-2p+1+j$. 
Note that~$-2(p-1)\le e[d_1^<]\le -1$.
Also, we set~$e[p_i^<]=1$.
Moreover, if~$j>1$ and~$j$ is odd, we set~$e[q_i^<]=j$.

(D)  For the $\lneg$~dummy example~$e=r_j^{i>}$, we set~$e[d_1^>]=p+2(p-1)-j=3p-1-j$. 
Note that~$p+1\le e[d_1^<]\le 3p-2$.
Also, we set~$e[p_i^>]=1$.
Moreover, if~$j>1$ and~$j$ is odd, we set~$e[q_i^>]=j$.

\item Let~$i\in[\kappa]$ and~$j\in[2p-1]$.

 (A) For the $\lpos$~filler example~$e=b_j^{i*}$, we set~$e[d_i^<]=(j+1)/2=e[d_i^>]$, and~$e[d^*]=1$.
Next, we distinguish the precise value of~$j$:
(1) if~$j=1$, we set~$e[q_i^<]=2$.
(2) if~$j=2p-1$, we set~$e[q_i^>]=2$.
(3) if~$1<j<2p-1$ and~$j$ is odd, we set~$e[q_i^<]=j+1$ and~$e[q_i^>]=2p+1-j$.
(4) otherwise, if~$j$ is even, we set~$e[q_i^<]=j$, and~$e[q_i^>]=2p-j$.

(B) For the $\lneg$~filler example~$e=r_j^{i*}$, we set~$e[d_i^<]=(j+1)/2=e[d_i^>]$.
Next, we distinguish the precise value of~$j$:
(1) if~$j=1$, we set~$e[p_i^<]=1$.
(2) if~$j=2p-1$, we set~$e[p_i^>]=1$.
(3) if~$j=2$, we set~$e[q_i^<]=3$.
(4) if~$1<j<2p-1$ and~$j$ is odd, we set~$e[q_i^<]=j=e[q_i^>]$.
(5) otherwise, if~$2<j$ and~$j$ is even, we set~$e[q_i^<]=j+1$, and~$e[q_i^>]=j-1$.
\end{itemize}

\emph{Description of the input tree~$T'$:}
For a visualization of parts of~$T'$, we refer to part~$b)$ of \Cref{fig-adj-hard-d-t=0}.
Intuitively, tree~$T'$ consists of a path~$P$ of cuts in features~$d_i^<$, $d_i^>$, and~$d^*$, and each left subtree of this path (except for the unique cut in feature~$d^*$) consists of a unique cut in features~$x_i^<$ or~$x_i^>$ where~$x\in\{p,q\}$.

First, we formally describe path~$P$:
It consists of the cuts~$d_i^<<-2p+3, d_i^<<-2p+4, \ldots, d_i^<<-1, d_i^<<1, d_i^>>3p-3, \ldots, d_i^>>p, d_2^<<-2p+3, \ldots, d_\kappa^>>p, d^*<2$.
Second, we describe the remaining cuts:
(1) The left subtree of the cut~$d_i^<<-2p+3$ consists of the cut~$p_i^<<2$.
(2) The left subtree of the cut~$d_i^>>3p-3$ consists of the cut~$p_i^><2$.
(3) The left subtree of the cut~$d_i^<<-2p+2+j$ for~$j\ge 2$ consists of the cut~$q_i^<<j+1$.
(4) The left subtree of the cut~$d_i^>>-3p-2-j$ for~$j\ge 2$ consists of the cut~$q_i^><j+1$.

Finally, it remains to describe the class assignment to the leaves:
The left leaf of the cuts~$p_i^<<2$, $p_i^><2$, $q_i^<<j$, and~$q_i^><j$ for even~$j$ is~$\lneg$ and their right child is~$\lpos$.
For all remaining cuts, that is for cuts~$q_i^<<j$, $q_i^><j$ for odd~$j$, and for~$d^*<2$, the left child is~$\lpos$ and the right child is~$\lneg$.

Observe that~$T'$ consists of exactly $8(p-1)\cdot\kappa+1$~cuts.

\emph{Parameters~$\delta_{\max}$, Error bound~$t$ and~$\ell$:}
Finally, we set~$t\coloneqq 0$, and~$k\coloneqq (p-1)\cdot 2\kappa$.
Note that each edge example differs at most 13~times from the default thresholds, that each separating and each choice example differs exactly 6~times from the default thresholds, that~$b^*$ differs exactly once from the default thresholds, that~$r^*$ always has the default thresholds, that each dummy example differs at most 3~times from the default thresholds, and that each filler example differs 7~times from the default thresholds.
Thus, $\delta_{\max}\le 20$.

\emph{Reasonability of input tree~$T'$:}
Observe that each leaf attached to a cut in feature~$x_i^<$ or~$x_i^>$ for~$x\in\{p,q\}$ and~$i\in[\kappa]$ contains exactly one dummy example of the same class.
Moreover, observe that the $\lneg$~leaf of the cut~$d^*<2$ contains all $\lpos$~separating example, all $\lneg$~choice examples, all $\lneg$~filler examples, and the $\lneg$~enforcing example.
Thus, the $\lneg$~leaf contains more $\lneg$~examples than $\lpos$~examples.
Finally, consider the $\lpos$~leaf of the cut~$d^*<2$.
This leaf contains all $\lneg$~edge examples, all $\lpos$~filler examples, and the $(N^2+1)$~copies of the $\lpos$~forcing example.
Thus, the $\lpos$~leaf contains more $\lpos$~examples than $\lneg$~examples.
Consequently, the input tree~$T'$ is reasonable.

\textbf{Correctness.}
We show that~$G$ has a multicolored independent set if and only if on $k= (p-1)\cdot 2\kappa$~cuts of~$T'$ we can perform a threshold adjustment operation to obtain a tree~$T$ making at most $t=0$~errors.

$(\Rightarrow)$
Let~$S=\{v_i^{a_i}:i\in[\kappa], a_i\in[p]\}$ be a multicolored independent set of~$G$.
We now show how to construct a tree~$T$ from~$T'$ with at most $k= (p-1)\cdot 2\kappa$~thresholds adjustments such that~$T$ has 0~errors.
For a visualization of parts of~$T$, we refer to part~$c)$ of \Cref{fig-adj-hard-d-t=0}.

For each color class~$i$, we perform a threshold adjustment operation in the first $2(a_i-1)$~cuts of~$T'$ in feature~$d_i^<$ and in the first $2(p-a_i)$~cuts in feature~$d_i^>$.
Moreover, the newly adjusted thresholds in~$d_i^<$ range from~$1.5$ to~$a_i$ in that order and the newly adjusted thresholds in~$d_i^>$ range from~$p-1/2$ to~$a_i$ in that order.
Note that these are $2(p-1)$~threshold adjustment operations per color class and thus $k= (p-1)\cdot 2\kappa$~threshold adjustment operations in total.
Now, we describe these operations in detail:
We change cut~$d_i^<<-2(p-1)+1=-2p+3$ to~$d_i^<<1.5$, $d_i^<<-2p+4$ to~$d_i^<<2$, and so on, until~$d_i^<<-2(p-1)+2(a_i-1)=-2(p-a_i)$ to~$d_i^<<a_i$.
Moreover, we change cut~$d_i^>>3p-3$ to~$d_i^>>p-1/2$, $d_i^>>3p-4$ to~$d_1^>> p-1$, and so on, until~$d_i^>>3p-2-2(p-a_i)=p+2a_i-2$ to~$d_i^>> a_i$.
Moreover, note that we do not change the class of any leaf.

Observe that the new tree~$T$ might not be reasonable anymore:
For example, consider the selection of vertex~$v_1^2\in V_1$ shown in part~$c)$ of \Cref{fig-adj-hard-d-t=0}:
The cut~$d_1^<<-1$ puts all examples into its right subtree since its parent has the stronger cut~$d_i^<< 2$. 

Now, it remains to verify that~$T$ makes no errors.

\emph{Outline:}
First, we make an observation for examples using the default threshold in a feature and second we use this observation to show that all examples are correctly classified by the solution tree~$T$.

\emph{Step 1:}
Observe that if any example~$e$ lands at some inner node of~$T$ corresponding to a cut in feature~$d'$ where~$d'= d_i^<$ or~$d'=d_i^>$ for some~$i\in[\kappa]$ and~$e$ has the default threshold in that feature~$d'$, that is, $e[d']=\default(d')$, then~$e$ will always go to the right subtree of that node.
Moreover, any example~$e$ with the default threshold in feature~$d^*$ is put into the right subtree of the cut in feature~$d^*$ which is a $lneg$~leaf.
Consequently, any example~$e$ which has the default threshold in each feature will be contained in the $\lneg$~leaf of the cut~$d^*<2$.
Also, in order for an example~$e$ to land in a different leaf, we only need to consider cuts of~$T$ in features where~$e$ has a different threshold than the default threshold.

\emph{Step 2:} We distinguish the different example types and show that~$T$ makes no errors.

\emph{Step 2.1:} 
By construction, the $\lneg$~enforcing example~$r^*$ always has the default threshold. 
Thus~$r^*$ ends up in the right child of the last cut of~$T$ which is a $\lneg$~leaf.
Furthermore, the unique feature in which the $\lpos$~forcing example~$b^*$ does not have the default threshold is~$d^*$.
Thus, $b^*$ ends up in the left child of the last cut of~$T$ which is a $\lpos$~leaf.

Thus, examples~$r^*$ and~$b^*$ are correctly classified by~$T$.

\emph{Step 2.2:} 
Consider a $\lpos$~separating example~$e=b(i,z)$.
Recall that~$z=x+1/2$ and~$x\in[p-1]$ and recall that~$a_i\in\mathds{N}$ is the index of the selected vertex of color class~$i$.

First we consider the case~$z\le a_i$.
This implies that at least one threshold adjustment is done in feature~$d_i^<$.
By Step~1, $e$ will end up in the cut~$d_i^<<1.5$ of~$T$.
Afterwards, the next cuts are~$d_i^<<2, d_i^<< 2.5, \ldots, d_i^<< a_i$ in that specific order.
Consequently, $e$ goes to the left subtree of the cut~$d_i^<<z+1/2$ (note that~$z+1/2$ is an integer and~$z+1/2\ge 2$).
Observe that the root of this left subtree is the cut~$q_i^<<h$ for some~$h\le 2p-1$ such that~$h$ is odd.
Consequently, $e$ is put into the left subtree of the cut~$q_i^<<h$ which is a $\lpos$~leaf.
Thus, $e$ is correctly classified as~$\lpos$ by~$T$.

Second, we consider the case~$z> a_i$.
Observe that all cuts in feature~$d_i^<$ have the form~$d_i^<<h$ with some threshold~$h\le a_i<z$.
Hence, $e$ is put into the right subtree of each cut in feature~$d_i^<$ of~$T$.
Now, the argumentation is analogously as in the first case for feature~$d_i^>$ and since~$T$ contains the cut~$d_i^>>a_i$, example~$e$ is put into the left subtree of one of the cuts in feature~$d_i^>$.

\emph{Step 2.3:}
Consider a $\lneg$~choice example~$e=c_i^x$ where~$x\in[p]$.

First, consider the case that~$x\ne a_i$.
Then the argumentation is almost identical to the $\lpos$~separating examples:

(A) assume that~$x<a_i$.
By Step~1, $e$ will end up in the cut~$d_i^<<1.5$ of~$T$.
Also, recall that the next cuts in~$T$ are~$d_i^<<2, d_i^<< 2.5, \ldots, d_i^<< a_i$ in that specific order.
Consequently, $e$ goes to the left subtree of the cut~$d_i^<<x+1/2$.
If, $x=1$, then the root of this left subtree is the cut~$p_i^<<2$.
Otherwise, if~$x\ge 2$, then the root of this left subtree is the cut~$q_i^<<h$ for some~$h\le 2p-1$ such that~$h$ is even.
Observe that in both cases $e$ is put into the left subtree of the cut~$p_i^<<2$ or~$q_i^<<h$ which is a $\lneg$~leaf.

(B) assume that~$x> a_i$.
Observe that all cuts in feature~$d_i^<$ have the form~$d_i^<<h$ with some threshold~$h\le a_i<x$.
Hence, $e$ is put into the right subtree of each cut in feature~$d_i^<$ of~$T$.
Now, the argumentation is analogously as in the first case for feature~$d_i^>$ and since~$T$ contains the cut~$d_i^>>a_i$, example~$e$ is put into the left subtree of one of the cuts in feature~$d_i^>$.

Second, consider the case that~$x=a_i$.
Note that in~$T$ all cuts in feature~$d_i^<$ have the form~$d_i^<<h$ for some~$h\le a_i$ and that all cuts in feature~$d_i^>$ have the from~$d_i^>>h$ for some~$h\ge a_i$.
Thus, $e$ is put into the right subtree of each of these cuts.
Since~$e$ uses the default threshold in feature~$d^*$, $e$ is put into the right subtree of the cut~$d^*<2$ which is a $\lneg$~leaf.

Consequently, in all cases $e$ is correctly classified as~$\lneg$ by~$T$.

\emph{Step 2.4:}
Consider a $\lneg$~edge example~$e=(v_i^x, v_j^z)$.
By assumption, $S$ is a multicolored independent set.
Hence, at least one of the two endpoints~$v_i^x$ and~$v_j^z$ is not contained in~$S$.
Without loss of generality, assume that~$v_i^x\notin S$ and that~$i<j$.
The argumentation is analog to the $\lneg$~choice examples~$c_i^x$ where~$x\ne a_i$:
Without loss of generality, assume that~$x<a_i$.
By Step~1, $e$ will end up in the cut~$d_i^<<1.5$ of~$T$.
Also, recall that the next cuts in~$T$ are~$d_i^<<2, \ldots, d_i^<<a_i$ in that specific order.
Consequently, $e$ goes to the left subtree of the cut~$d_i^<<x+1/2$.
If, $x=1$, then the root of this left subtree is the cut~$p_i^<<2$.
Otherwise, if~$x\ge 2$, then the root of this left subtree is the cut~$q_i^<<h$ for some~$h\le 2p-1$ such that~$h$ is even.
Observe that in both cases $e$ is put into the left subtree of the cut~$p_i^<<2$ or~$q_i^<<h$ which is a $\lneg$~leaf.
Thus, $e$ is correctly classified as~$\lneg$.

\emph{Step 2.5:} 
(A) Consider a $\lneg$~filler example~$e=r_j^{i*}$.
The argumentation is very similar to the $\lneg$~choice examples.
First, consider the case~$j\ne a_i$ and assume without loss of generality that~$j<a_i$ (the case~$j>a_i$ follows by considering feature~$d_i^>$ instead).
By Step~1, $e$ will end up in the cut~$d_i^<<1.5$ of~$T$.
Also, recall that the next cuts in~$T$ are~$d_i^<<2, \ldots, d_i^<<a_i$ in that specific order.
Next, observe that the cut~$d_i^<<(j+1)/2$ puts~$e$ into its left subtree.
If~$j=1$, then the root of this subtree is~$p_i^<<2$ and puts~$e$ into its left subtree which is a $\lneg$~leaf.
Otherwise, if~$j\ge 2$, then the root of this subtree is~$q_i^<<j+1$.
If~$j$ is even, cut~$q_i^<<j+1$ puts~$e$ into its right subtree which is a $\lneg$~leaf.
Otherwise, if~$j$ is odd, cut~$q_i^<<j+1$ puts~$e$ into its left subtree which is a $\lneg$~leaf.

Second, consider the case~$j=a_i$.
Again, observe that in~$T$ all cuts in feature~$d_i^<$ have the form~$d_i^<<h$ for some~$h\le a_i$ and that all cuts in feature~$d_i^>$ have the from~$d_i^>>h$ for some~$h\ge a_i$.
Thus, $e$ is put into the right subtree of each of these cuts.
Since~$e$ uses the default threshold in feature~$d^*$, $e$ is put into the right subtree of the cut~$d^*<2$ which is a $\lneg$~leaf.

Hence, $e$ is correctly classified as~$\lneg$.

(B) The argumentation for a $\lpos$~filler example~$e=b_j^{i*}$ is very similar:
If~$j\ne a_i$, then the cuts~$p_i^<<2$ and~$q_i^<<j+1$ will put~$e$ into its $\lpos$~leaf.
Otherwise, if~$j=a_i$, then~$e$ ends up at the cut~$d^*<2$ and then~$e$ is put into the left subtree which is a $\lpos$~leaf.
Consequently, $e$ is correctly classified as~$\lpos$.

\emph{Step 2.6:}
(A) Consider a $\lneg$~dummy example~$e=r_j^{i<}$.
First, assume that~$a_i\ge 2$.
Then the first cut of~$T$ in feature~$d_i^<$ is~$d_i^<<1.5$.
Hence, $e$ is put into the left subtree of the cut~$d_i^<<1.5$.
The root of this subtree is the cut~$p_i^<<2$.
Consequently, $e$ is put into the left subtree of the cut~$p_i^<<2$ which is a $\lneg$~leaf.
Second, consider the case that~$a_i=1$.
Then, the cuts of~$T$ in feature~$d_i^<$ are~$d_i^<<-2p+3, d_i^<<-2p+4, \ldots, d_i^<<-1, d_i^<<1$ in that specific order.
If~$j=1$, then~$e$ is put into the left subtree of the cut~$d_i^<<-2p+3$ whose root is the cut~$p_i^<<2$.
Otherwise, if~$j\ge 2$, then~$e$ is put into the left subtree of the cut~$d_i^<<-2p+2+j$ whose root is the cut~$q_i^<<j+1$.
In both cases~$e$ is then put into the $\lneg$~leaf of this cut.

The argumentation for a $\lneg$~dummy example~$r_j^{i>}$ follows analogously with the features~$d_i^>, p_i^>$, and~$q_i^>$ instead of~$d_i^<, p_i^<$, and~$q_i^<$.

(B) The argumentation for the $\lpos$~dummy examples~$e=b_j^{i<}$ and~$e=b_j^{i>}$ follows analogously as for the $\lneg$~dummy examples.
The difference is that the cuts~$p_i^<<2$ and~$q_i^<<j+1$ put~$e$ into the $\lpos$~leaf of this cut.

Consequently, all dummy examples are correctly classified by~$T$.

Thus, all examples are correctly classified by~$T$ and we deal with a yes-instance for \pAdj.

$(\Leftarrow)$
Let~$T$ be a solution of the \pAdj{} instance, that is, to up to $k= (p-1)\cdot 2\kappa$~cuts of~$T'$ a threshold adjustment operation was applied such that the resulting tree~$T$ has $t=0$~errors. 

\emph{Outline:}
Observe that in~$T'$ the classification paths of all $\lpos$~separation examples~$b(i,x)$ is identical to the classification paths of the $\lneg$~enforcing example.
First, from that observation we conclude that in~$T$ one of the following two cases has to occur:
(a) $T$ contains a cut~$d_i^<<h$ for some~$h$ such that~$x\le h+1$ or (b)~$T$ contains a cut~$d_i^>>h$ for some~$h\le x$.
Second, we show that if~$T$ contains a cut~$d_i^<<h$ for some~$h\in\{1.5,2,2.5, \ldots, p\}$, then~$T$ also needs to contain the cuts~$d_i^<<x$ for all~$x\in\{1.5,2,2.5, \ldots, h\}$.
Analogously we show that, if~$T$ contains the cut~$d_i^>>h$ for some~$h\in\{1,1.5,2, \ldots, p-1/2\}$, then~$T$ also needs to contain the cuts~$d_i^>>x$ for all~$x\in\{h, h+1/2, h+1, \ldots , p-1/2\}$.
Third, this structure of cuts in~$d_i^<$ and~$d_i^>$ tightens the budget~$k$ and allows us to define a selected vertex of the color class~$i$.
Finally, we argue that all selected vertices form a multicolored independent set.

\emph{Step 1:}
As discussed above, the classification paths of all $\lpos$~separation examples~$b(i,x)$ is identical to the classification paths of the $\lneg$~enforcing example~$e^*$ in~$T'$.
Recall that~$e[d_i^<]=x+1/2=e[d_i^>]$ and that~$e^*$ uses the default thresholds in each feature.
Also observe that in~$T'$ all cuts in feature~$d_i^<$ have the form~$d_i^<<h$ for some~$h\le 1$ and all cuts in feature~$d_i^>$ have the form~$d_i^>>h$ for some~$h\ge p$.
Moreover, recall that~$e$ and~$e^*$ are put into the right subtree of each cut in~$d_i^<$ and~$d_i^>$ in~$T'$.
Since~$T$ makes no errors, we conclude that~$T$ needs to have (a) a cut~$d_i^<<h$ for some~$h$ such that~$x\le h+1$ or (b) a cut~$d_i^>>h$ for some~$h\le x$, in order to put~$e$ into the left subtree of this cut and consequently separate~$e$ from~$e^*$.

\emph{Step 2:}
Consider the set~$Z$ of cuts~$d_i^<<h$ in~$T$ where~$h\in\{1.5,2,2.5, \ldots, p\}$.
Let~$d_i^<<x$ be a cut in~$Z$.
We now argue (a) that~$T$ also needs to contain all cuts~$d_i^<<h$ for any~$1.5\le h< z$ and (b) that the cuts of~$Z$ need to appear in~$T$ in ascending order with thresholds, that is, first cut~$d_i^<<1.5$, then cut~$d_i^<<2$, and so on.

Assume towards a contradiction that (a) is wrong, that is, $T$ contains a cut~$d_i^<<x$ but not the cut~$d_i^<<x-1/2$.
Without loss of generality we assume that~$x$ is an integer.
Now observe that each cut in feature~$d_i^<$ handles the following examples incidentally: (a) the $\lneg$~filler examples~$r_{2x-5}^{i*}, r_{2x-4}^{i*}$ and (b) the $\lpos$~filler examples~$b_{2x-5}^{i*}, b_{2x-4}^{i*}$.
That means that each cut of~$T$ either puts all 4 of these examples into its left subtree or into its right subtree. 
This is true since by our assumption~$T$ does not contain the cut~$d_i^<<x-1/2$.
Moreover, observe that since~$T$ contains the cut~$d_i^<<x$, at some cut~$d_i^<<h$ for~$h\ge x$ all these 4 examples are put into the left subtree of the cut~$d_i^<<h$.
Now, observe that this left subtree consists of a single inner node with a cut in feature~$p_i^<$ or~$q_i^<$.
Next, note that each cut in feature~$p_i^<$ or~$q_i^<$ can only separate exactly one pair~$r_j^{i*}, b_j^{i*}$ of filler examples, but not two.
Thus, at least one filler example gets misclassified by~$T$, a contradiction to the fact that~$t=0$.

Thus, if~$T$ contains the cut~$d_i^<<x$ for~$x\ge 2$, then~$T$ also needs to contain the cut~$d_i^<<x-1/2$.
Note that the above argumentation implies that the cut~$d_i^<<x-1/2$ needs to appear before the cut~$d_i^<<x$ because otherwise still at least one filler example gets misclassified.

Analogously, one can show that if~$T$ contains the cut~$d_i^>>x$ for some~$x\le p-1$, then~$T$ also needs to contain the cut~$d_i^>>x+1/2$ and that in~$T$ cut~$d_i^>> x+1/2$ needs to appear before the cut~$d_i^>>x$.

\emph{Step 3:}
Let~$b(i,x)$ be the $\lpos$~separation example such that~$T$ contains a cut~$d_i^<<x+1$ and such that~$x$ is maximal.
Next, we distinguish the value of~$x$.
If~$x=p-1$, then since~$T$ contains the cut~$d_i^<<p$, by Step~2, $T$ also needs to contain the cuts~$d_i^<<h$ where~$1.5\le h\le p$.
Observe that these are $2(p-1)$~cuts in total.

Otherwise, if~$x\le p-2$ (this includes the case that~$x$ does not exist), then by Step~2, $T$ also needs to contain the cuts~$d_i^<<h$ where~$1.5\le h\le x+1$.
Observe that these are $2x$~cuts in total.
Since~$x\le p-2$, the $\lpos$~separation example~$b(i,x+1)$ exists.
By Steps~1 and~2, $T$ contains (a) a cut~$d_i^<<x+1$ or (b) a cut~$d_i^>>x$.
By our assumption, (a) is not possible and thus~$T$ contains the cut~$d_i^>> x$.
Again, by Step~2, we can conclude that~$T$ also needs to contain the cuts~$d_i^>>h$ for each~$x\le h\le p-1/2$.
Note that these are $2(p-1-x)$~cuts in total.
Thus, in total~$T$ needs to perform at least $2x+2(p-1-x)=2(p-1)$~threshold adjustment operations in features~$d_i^<$ and~$d_i^>$.

Recall that the budget is~$k=2(p-1)\kappa$.
Since there are $\kappa$~color classes and in each pair~$d_i^<$ and~$d_i^>$ of features~$T$ needs to perform at least $2(p-1)$~threshold adjustment operations, we conclude that to obtain~$T$ exactly $2(p-1)$~threshold adjustment operations per features~$d_i^<$ and~$d_i^>$ are performed and that in no feature~$q_i^<$ or~$q_i^>$ any threshold adjustment operations is done.

If~$x$ exists, we say that~$v_i^{x+1}\in V_i$ is \emph{selected}, and otherwise, if~$x$ does not exist, we say that~$v_i^1\in V_1$ is \emph{selected}.
Let~$S$ be the set of all selected vertices.

\emph{Step 4:}
We now argue that~$S$ is a multicolored independent set.
Clearly, $S$ contains exactly one vertex per color class.
Hence, it remains to verify that~$S$ is an independent set.

Observe that since~$T$ has no classification errors, it is sufficient to show that~$T$ cannot distinguish the $\lpos$~forcing example~$b^*$ and an $\lneg$~edge example~$e=(v_i^x,v_j^z)$ if both endpoints~$v_i^x$ and~$v_j^z$ are contained in~$S$:
Let~$e=e(v_i^x,v_j^z)$ be an edge example where~$v_i^x, v_j^z\in S$.
Observe that according to Steps~1-3 all cuts of~$T$ in feature~$d_i^<$ have the form~$d_i^<<h$ for some~$h\le x$ and all cuts in feature~$d_i^>$ have the from~$d_i^>>h$ for some~$h\ge x$.
Hence, $e$ is put into the right subtree of each of these cuts.
Analogously, we can argue that $e$ ends up in the right subtree of each cut in features~$d_j^<$ and~$d_j^>$.
Since the $\lpos$~forcing example~$b^*$ uses the default thresholds in these 4~features, this is also true for~$b^*$.
Thus, both~$e$ and~$b^*$ end up in the cut~$d^*<2$.
Both examples, however, are put in the left subtree of this cut.
Thus, $T$ cannot distinguish the $\lpos$~forcing example~$b^*$ and the $\lneg$~edge example~$e=(v_i^x,v_j^z)$.
Consequently, $S$ is also an independent set.

\textbf{Lower Bound.}
Recall that~$d=6\cdot \kappa+1, \delta_{\max}=20$, and~$t=0$.
Since \textsc{Multicolored Independent Set} is \W[1]-hard with respect to~$\kappa$~\cite{CyFoKoLoMaPiPiSa2015}, we obtain that \pExc{} is \W[1]-hard with respect to~$d$ even if~$\delta_{\max}=20$, and~$t=0$.
Furthermore, since \textsc{Multicolored Independent Set} cannot be solved in $f(\kappa)\cdot n^{o(\kappa)}$~time unless the ETH fails~\cite{CyFoKoLoMaPiPiSa2015}, we observe that \pExc{} cannot be solved in $f(d)\cdot |\mathcal{I}|^{o(d)}$~time  if the ETH is true, where~$|\mathcal{I}|$ is the overall instance size, even if~$\delta_{\max}=20$, and~$t=0$.
\end{proof}
}

\paragraph{Other hardness results for \pAdj.}
Now, we prove hardness for many parameters including~$k$.
Afterwards, we present two hardness results for parameters including~$\ell+D$.

\begin{theorem}[\appref{thm:adj-hard-par-k-const-d-t}]
\label{thm:adj-hard-par-k-const-d-t}
  \pAdj\ is W[2]-hard for $k$ even if $D = 2$ and $t = 0$. Under the ETH, the problem cannot be solved in time $\OO(|\mathcal{I}|^{o(k)})$, and under the SETH, the problem cannot be solved in time $\OO(s^{k - \epsilon})$ for any $\epsilon > 0$.
\end{theorem}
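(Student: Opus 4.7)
I would prove all three assertions through a single parameterized reduction from \textsc{Dominating Set}: given a graph $G=(V,E)$ and $\kappa\in\mathds{N}$, decide whether some $D\subseteq V$ with $|D|\le\kappa$ satisfies $N[D]=V$. This problem is W[2]-hard for $\kappa$, has no $|\mathcal{I}|^{o(\kappa)}$-time algorithm under the ETH, and has no $\OO(|V|^{\kappa-\epsilon})$-time algorithm under the SETH for any fixed $\epsilon>0$~\cite{CyFoKoLoMaPiPiSa2015}. Given $(G,\kappa)$ with $V=\{v_1,\dots,v_n\}$, the goal is to build a \pAdj{} instance with $s=n+\OO(1)$, $k=\kappa$, $D=2$, and $t=0$, such that adjusting $\kappa$ thresholds corresponds to selecting $\kappa$ vertices and $t=0$ is achievable iff the selected vertices form a dominating set of $G$.

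The instance uses one binary feature $d_v$ per $v\in V$ (plus a constant number of auxiliary features), an \lneg{} \emph{element example} $e_u$ for each $u\in V$ with $e_u[d_v]$ encoding whether $v\in N[u]$, a heavily-replicated \lpos{} \emph{forcing example}~$e^*$, and \emph{filler examples} required for reasonability. The input tree $T'$ is a path $p_{v_1},\dots,p_{v_n}$ of cuts on $d_{v_1},\dots,d_{v_n}$ followed by a small constant-size gadget. Initial thresholds are chosen so that at each non-adjusted $p_v$ every element example proceeds further down the path (so all $e_u$ initially reach a single ``catch-all'' leaf and are misclassified), while adjusting $p_v$ activates a split diverting precisely those $e_u$ with $v\in N[u]$ to a dedicated correctly-labeled \lneg{} leaf; meanwhile, $e^*$ and every filler are engineered so that their classifications remain correct under any admissible set of adjustments. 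Consequently, after $k$ adjustments all $e_u$ are correctly classified iff every $u\in V$ is dominated by some $v$ whose associated cut was adjusted, i.e., iff those vertices form a dominating set of $G$. Since $k=\kappa$ and $s\in\OO(|V|)$, the W[2]-hardness and the ETH lower bound transfer directly, and an $\OO(s^{k-\epsilon})$-algorithm for \pAdj{} would yield an $\OO(|V|^{\kappa-\epsilon'})$-algorithm for \textsc{Dominating Set}, contradicting SETH.

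The main obstacle is ensuring that $T'$ is reasonable under the tight constraint $D=2$: a cut on a binary feature has only three regimes (all-left, all-right, split), so making a cut initially useless leaves one of its children empty, and populating that child with a filler would require a third distinct value in the cut's feature, violating $D=2$. I would address this by letting distinct features use distinct pairs of values (e.g., $d_v$ takes values in $\{0,2\}$, each auxiliary feature uses its own pair) and by co-designing the fillers with the constant-size final gadget so that every filler follows the same root-to-leaf path regardless of any admissible adjustment; with this, each leaf of $T'$ is non-empty and is labeled by its majority class, and no adjustment can turn a correctly classified filler into an error. The bulk of the correctness argument then amounts to verifying (i)~that the adversary cannot benefit from wasting adjustments on gadget cuts---each such wasted adjustment would misclassify so many copies of $e^*$ or of some filler that $t>0$---so any optimal solution uses its $k=\kappa$ adjustments precisely on path cuts $p_v$, and (ii)~that these $\kappa$ path cuts indeed correspond to a dominating set of $G$ via the element-example analysis above.
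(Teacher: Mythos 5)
Your reduction is essentially the paper's: the paper reduces from \textsc{Hitting Set} (of which \textsc{Dominating Set} is the special case it itself invokes for the ETH/SETH lower bounds) using exactly the same scheme---one binary feature and one chain cut per universe element, one \lneg{} example per set encoding incidence, an all-ones \lpos{} forcing example, initial thresholds that let everything flow to the \lpos{} leaf, and threshold adjustments that ``activate'' cuts to catch the sets they hit, with $k=\kappa$ and $t=0$. Your plan is correct in approach; the only substantive delta is the reasonability fix, where the paper simply adds $n'+1$ routed examples per feature rather than a separate constant-size gadget.
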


\begin{proof}[Proof Sketch]
%Here, we only show the statement for non-reasonable trees; the adaption for reasonable trees is in the appendix.
  We reduce from the W[2]-hard problem \textsc{Hitting Set}~\cite{CyFoKoLoMaPiPiSa2015}. 
  Consider an instance of \textsc{Hitting Set} with a universe $U=\{1,2, \ldots, |U|\}$ and a family of sets $\mathcal{S}$ such that we look for a hitting set of size~$\kappa$. 
We create a \pAdj{} instance with $|U|$ features $d_1, d_2, \dots, d_{|U|}$, and for each set~$S$ in~$\mathcal{S}$ we create a \lneg{} example $e$ such that $e[d_i] = 0$ if $i \in S$, and $e[d_i] = 1$ otherwise. We also create one \lpos{} example $e'$ with $e'[d_i] = 1$ for all $i \in U$. Our initial decision tree has $|U|$ cuts $c_1, c_2, \dots, c_{|U|}$ that form a chain where the right child of the cut $c_{i}$ is $c_{i+1}$ for all $i \in [|U| - 1]$. 
The right child of $c_{|U|}$ is a \lpos{} leaf. The left child of each cut is a \lneg{} leaf. The feature of the cut $c_i$ is $d_i$ and the initial threshold is $0$.  
  Finally, let $k = \kappa$ and $t = 0$.
The correctness proof, the lower bound, and adaptation for reasonable trees are deferred to the appendix.
\end{proof}

\appendixproof{thm:adj-hard-par-k-const-d-t}{
\begin{proof}
  % reduction for learning but need three values for indicating some cut is not active.
  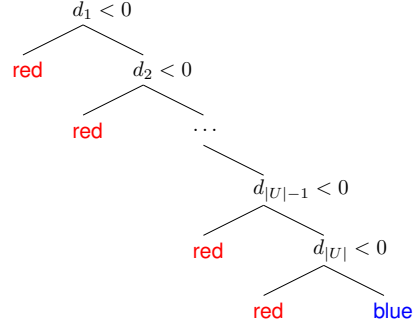
\begin{figure}
    \centering
    \scalebox{0.8}{
    \begin{tikzpicture}
      \node[cut] (r1) at (0, 0) {$d_1 < 0$};
      \node[cut] (r2) at (1, -1) {$d_2 < 0$};
      \node[cut] (rd) at (2, -2) {$\dots$};
      \node[cut] (rn1) at (3, -3) {$d_{|U|-1} < 0$};
      \node[cut] (rn) at (4, -4) {$d_{|U|} < 0$};
      \node[neg] (n1) at (-1, -1) {\lneg};
      \node[neg] (n2) at (0, -2) {\lneg};
      \node[neg] (n3) at (2, -4) {\lneg};
      \node[neg] (n5) at (3, -5) {\lneg};
      \node[posi] (n4) at (5, -5) {\lpos};
      \draw[-] ([xshift=0.3cm,yshift=-0.25cm]r1.west) -- ([xshift=0.3cm,yshift=0.25cm]r2.west);
      \draw[-] ([xshift=0.3cm,yshift=-0.25cm]r2.west) -- ([xshift=0.3cm,yshift=0.25cm]rd.west);
      \draw[-] ([xshift=0.3cm,yshift=-0.25cm]rd.west) -- ([xshift=0.3cm,yshift=0.25cm]rn1.west);
      \draw[-] ([xshift=0.3cm,yshift=-0.25cm]rn1.west) -- ([xshift=0.3cm,yshift=0.25cm]rn.west);
      \draw[-] ([xshift=0.3cm,yshift=-0.25cm]r1.west) -- ([xshift=0.3cm,yshift=0.25cm]n1.west);
      \draw[-] ([xshift=0.3cm,yshift=-0.25cm]r2.west) -- ([xshift=0.3cm,yshift=0.25cm]n2.west);
      \draw[-] ([xshift=0.3cm,yshift=-0.25cm]rn1.west) -- ([xshift=0.3cm,yshift=0.25cm]n3.west);
      \draw[-] ([xshift=0.3cm,yshift=-0.25cm]rn.west) -- ([xshift=0.3cm,yshift=0.25cm]n4.west);
      \draw[-] ([xshift=0.3cm,yshift=-0.25cm]rn.west) -- ([xshift=0.3cm,yshift=0.25cm]n5.west);
    \end{tikzpicture}}
    \caption{The initial decision tree in the reduction of \Cref{thm:adj-hard-par-k-const-d-t} from \textsc{Hitting Set}.}\label{fig:adj-hard-par-k-const-d-t}
  \end{figure}

  We reduce from the W[2]-hard problem \textsc{Hitting Set}. Let $(U, \mathcal{S}, \kappa)$ be an instance of \textsc{Hitting Set} with a universe $U$ and a family of sets $\mathcal{S}$ such that we look for a hitting set of size $\kappa$. Essentially, we create a cut and a feature for each element of the universe and an example for each set of the instance.

  Without loss of generality, assume that the elements of the universe are $1, 2, \dots, {|U|}$. We create a \pAdj{} instance with $|U|$ features $d_1, d_2, \dots, d_{|U|}$, and for each set~$S$ in~$\mathcal{S}$ we create a \lneg{} example $e$ such that $e[d_i] = 0$ if $i \in S$, and $e[d_i] = 1$ otherwise. We also create one \lpos{} example $e'$ with $e'[d_i] = 1$ for all $i \in U$. Our initial decision tree has $|U|$ cuts $c_1, c_2, \dots, c_{|U|}$ that form a chain where the right child of the cut $c_{i}$ is $c_{i+1}$ for all $i \in \{1, 2, \dots, |U| - 1\}$. The right child of $c_{|U|}$ is a \lpos{} leaf. The left child of each cut is a \lneg{} leaf. The feature of the cut $c_i$ is $d_i$ and the initial threshold is $0$. An illustration of the initial decision tree is provided in \Cref{fig:adj-hard-par-k-const-d-t}. Finally, let $k = \kappa$ and $t = 0$.

  We claim that this instance of \pAdj{} has a solution if and only if the \textsc{Hitting Set} instance has a solution. First, assume that the \textsc{Hitting Set} instance has a hitting set $H$. Then, adjusting the thresholds of cuts $c_i$ to $1$ for all $i \in H$ results in $0$ misclassifications: For the \lpos{} example, all thresholds are at most $1$ and so it still goes to the \lpos{} leaf. Similarly for any \lneg{} example~$e$ corresponding to some set $S \in \mathcal{S}$, there is some cut $c_i$ with $i \in H$ whose threshold was adjusted to $1$, and the example thus goes to a \lneg{} leaf since $e[d_i] = 0$ by construction.

  Assume now instead that the \textsc{Hitting Set} instance has no solution. Let us prove by contradiction that then the \pAdj{} instance has no solution. Suppose the opposite, that is, there exists a solution by somehow adjusting some set of cuts $C$. First note that we cannot have increased any threshold above $1$, since that would result in us misclassifying the only \lpos{} example if the leaf is not relabeled, and otherwise some \lneg{} example gets misclassified. Consequently, the \lpos{} example ends up in the only leaf that is originally labeled as \lpos{}, and thus its label cannot be changed.
  
  Consider now the set $H = \{i \colon c_i \in C\}$. It cannot be a hitting set, since our \textsc{Hitting Set} instance has no solution. Thus, there is some set $S \in \mathcal{S}$ with $S \cap H = \emptyset$. Take now the \lneg{} example $e$ corresponding to $S$. The only was we can classify $e$ as \lneg{} is that the threshold of $c_i$ is greater than $e[d_i]$. This is not possible for any $i \in S$, since those cuts remain unadjusted by the fact that~$S \cap H = \emptyset$. For all $i \not\in S$, we have that $e[d_i] = 1$, and no threshold could have been increased above $1$ without misclassifying the \lpos{} example. Thus, $e$ has to get misclassified. By contradiction, the \pAdj{} instance has no solution.

  Note that the instance is not yet reasonable, since some leaves are initially empty. The following modification to the instance suffices for making the it reasonable here and for the following propositions of the section. Let $n'$ be the number of examples in the instance before making this adjustments. For each feature $d_i$ with $i \in [|U|]$, we create $n' + 1$ \lneg{} examples $e$ with $e[d_j] = -\infty$ for $j \ge i$ and $e[d_j] = \infty$ otherwise. We create also $n' + 1$ \lpos{} examples $e$ with value $\infty$ for all features. Now, no leaf is empty, and each leaf has more than $n'$ new examples, so their labels match the majority. The proof of W[2]-hardness even with these new examples goes identically to as above.

  We have shown that the above reduction is correct.
  Note that $k = \kappa$, $D = 2$ and $t = 0$ for the constructed instance.
  Thus, \pAdj\ is W[2]-hard with respect to $k$ even if $D = 2$ and $t = 0$.
  It is known that \textsc{Hitting Set} cannot be solved in $\OO((|U| + \sum_{S \in \mathcal{S}}|S|)^{o(\kappa)})$ time assuming ETH, and not in $\OO(|U|^{\kappa - \epsilon})$ time for any $\epsilon > 0$ assuming SETH, see Corollary 14.23 and Theorem 14.42 by Cygan et al.~\cite{CyFoKoLoMaPiPiSa2015} (note that \textsc{Dominating Set} straightforwardly reduces to  \textsc{Hitting Set}).
  The ETH-based lower bound for \pAdj\ follows because $k = \kappa$ and the constructed instance of \pAdj\ has size polynomial in $|U| + |\mathcal{S}|$. 
  Further, since $s = |U|$ the SETH-based lower bound for \pAdj\ also follows.
\end{proof}
}

Our next two results follow by reducing from \textsc{(Partial) Vertex Cover} instead of \textsc{Hitting Set}.

\toappendix{
In the following propositions of the section, we do not construct our reduced instances to be reasonable for conciseness. However, an adjustment identical to that of~\Cref{thm:adj-hard-par-k-const-d-t} suffices for making them reasonable up to and including \Cref{cor:adj-np-hard-const-ell-d-delta}.
}

\begin{corollary}[\appref{cor:adj-np-hard-const-delta-d-t}]
\label{cor:adj-np-hard-const-delta-d-t}
  \pAdj\ is NP-hard even if $D = 3$, $\delta_{\max} = 2$, and $t = 0$.
\end{corollary}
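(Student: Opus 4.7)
The plan is to adapt the reduction behind \Cref{thm:adj-hard-par-k-const-d-t} by reducing from \textsc{Vertex Cover} rather than the more general \textsc{Hitting Set}: because every edge is a set of size exactly two, the resulting data set will naturally satisfy $\delta_{\max}=2$. The main obstacle is reasonability, since the $\pm\infty$ gadget that makes the input tree reasonable in \Cref{thm:adj-hard-par-k-const-d-t} inflates $\delta_{\max}$ linearly; the key new idea is to exploit the third domain value to populate every otherwise-empty \lneg{} leaf with a single dummy example that differs from the \lpos{} example in just one feature.

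Given a \textsc{Vertex Cover} instance $(G=(V,E),\kappa)$ with $V=[n]$, I would create one feature $d_i$ per vertex $i$, with values in $\{0,1,2\}$. I would add one \lpos{} example $e^{+}$ of value $2$ in every feature (together with $|E|+1$ copies, so that the rightmost leaf is majority-\lpos); for each edge $\{u,v\}$ a \lneg{} example $e_{uv}$ of value $1$ at $d_u,d_v$ and $2$ elsewhere; and for each $i\in V$ a \lneg{} \emph{dummy} $r_i$ of value $0$ at $d_i$ and $2$ elsewhere. The initial decision tree is a chain $c_1,\dots,c_n$ where $c_i$ tests feature $d_i$ with threshold $\tfrac12$, the right child of $c_i$ is $c_{i+1}$ (and the right child of $c_n$ is a \lpos{} leaf), every left child is a \lneg{} leaf; set $k=\kappa$ and $t=0$. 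Each $e_{uv}$ differs from $e^{+}$ in exactly two features and each $r_i$ in exactly one, so $\delta_{\max}=2$; only the three values $\{0,1,2\}$ ever appear in any feature, so $D=3$; and the tree is reasonable since each left leaf of $c_i$ initially contains exactly the singleton $r_i$, and the rightmost leaf has $|E|+2$ \lpos{} against $|E|$ \lneg{} examples.

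For the forward direction, a vertex cover $H$ of size $\kappa$ translates into a zero-error tree by raising $\thr(c_i)$ to $\tfrac32$ for every $i\in H$: for each edge $\{u,v\}$, the first such adjusted cut along the path of $e_{uv}$ has index in $\{u,v\}\cap H$ and routes $e_{uv}$ left (since $e_{uv}[d_i]=1\le\tfrac32$) into a \lneg{} leaf, while $e^{+}$ and its copies still pass right at every cut (value $2>\tfrac32$) and every $r_i$ is unaffected. The reverse direction is the more delicate step and leans on the dummies: any threshold $\ge 2$ would send both $e^{+}$ and $r_i$ to the same left leaf and force an error regardless of the leaf label, while any threshold ${<}0$ would flush $r_i$ all the way right into the \lpos{} leaf, which again cannot be relabeled without misclassifying the many \lpos{} copies. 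Thus in any zero-error solution every adjusted threshold lies in $[0,2)$; those in $[0,1)$ act exactly like the initial threshold $\tfrac12$ and are vacuous, so the useful adjustments lie in $[1,2)$ and their feature indices must form a vertex cover of $G$ of size at most $k=\kappa$. Since \textsc{Vertex Cover} is NP-hard and the construction is polynomial, this equivalence establishes NP-hardness of \pAdj{} even for $D=3$, $\delta_{\max}=2$, and $t=0$.
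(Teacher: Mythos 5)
Your reduction is correct and follows the same basic route as the paper: reduce from \textsc{Vertex Cover}, build a chain of cuts $c_1,\dots,c_n$ (one per vertex), one \lneg{} example per edge and a heavily weighted \lpos{} example, and let a threshold adjustment at $c_i$ encode putting vertex $i$ into the cover. Both directions of your equivalence check out: the upper bound on thresholds is forced by $e^{+}$ colliding with $r_i$ in a left leaf, the lower bound by $r_i$ being flushed into the majority-\lpos{} rightmost leaf, and thresholds in $[0,1)$ are indeed behaviourally identical to the initial $\tfrac12$, so the effective adjustments form a vertex cover.

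Where you genuinely depart from the paper is in how reasonability is handled, and your version is the more careful one. The paper simply says to rerun the \textsc{Hitting Set} reduction of \Cref{thm:adj-hard-par-k-const-d-t} on edges as size-two sets and to reuse that theorem's reasonability gadget, which populates each empty \lneg{} leaf with examples taking value $-\infty$ on a suffix of the features and $\infty$ elsewhere; those examples differ from the all-$\infty$ \lpos{} examples in up to $|V|$ features, so that gadget does not preserve $\delta_{\max}=2$ (and obscures why $D$ becomes $3$). Your per-vertex singleton dummies $r_i$, each differing from $e^{+}$ in exactly one feature and occupying exactly one left leaf, repair this: they make the input tree reasonable, account precisely for the third domain value, keep $\delta_{\max}=2$, and moreover do double duty in the backward direction by pinning every threshold into $[0,2)$. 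So your proof buys a construction whose claimed constants actually hold after the reasonability fix, at the cost of a slightly longer argument than the paper's two-line appeal to the earlier theorem.
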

\appendixproof{cor:adj-np-hard-const-delta-d-t}{
\begin{proof}
  The reduction is similar to \Cref{thm:adj-hard-par-k-const-d-t}, but we reduce from the NP-hard problem \textsc{Vertex Cover} instead of \textsc{Hitting Set}. Take an arbitrary \textsc{Vertex Cover} instance $(G, \kappa)$ such that we look for a vertex cover of size~$\kappa$ from the graph~$G = (V, E)$. We then perform the same reduction as in \cref{thm:adj-hard-par-k-const-d-t}, treating $V$ as the universe and $E$ as the family of sets, each of which is of size two.
\end{proof}
}

\begin{corollary}[\appref{cor:adj-w-hard-k-const-d-t}]
\label{cor:adj-w-hard-k-const-d-t}
  \pAdj\ is W[1]-hard for $k$ even if $D = 3$ and $\delta_{\max} = 2$.
\end{corollary}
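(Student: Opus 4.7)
The plan is to adapt the reduction of \Cref{cor:adj-np-hard-const-delta-d-t} (which reduces from \textsc{Vertex Cover}) by instead starting from \textsc{Partial Vertex Cover}, which is W[1]-hard parameterized by the cover size~$\kappa$~\cite{Guo07}, as already exploited in \Cref{thm-learning-hard-s+D+delta}. Given an instance $(G,\kappa,\tau)$ asking for $\kappa$ vertices covering at least~$\tau$ edges, I would apply the same construction as in \Cref{cor:adj-np-hard-const-delta-d-t}: create one feature~$d_v$ per vertex~$v\in V(G)$, one \lneg{} example~$e_{uw}$ per edge~$uw\in E(G)$ with value~$0$ in features~$d_u,d_w$ and~$1$ elsewhere, a single all-ones \lpos{} example, and the same initial chain of cuts (one per feature) so that correctly classifying a \lneg{} edge example forces a threshold adjustment at one of the two corresponding cuts. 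I would set $k \coloneqq \kappa$ and $t \coloneqq |E(G)| - \tau$.

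The correctness argument mirrors that of \Cref{thm:adj-hard-par-k-const-d-t}: the set of adjusted cuts corresponds one-to-one to a selected vertex set of size at most~$\kappa$, and a \lneg{} edge example~$e_{uw}$ is correctly classified exactly when at least one of the cuts~$c_u,c_w$ was adjusted, i.e., exactly when the edge~$uw$ is covered by the chosen vertex set. As in the Hitting Set reduction, no threshold can be adjusted past the value taken by the \lpos{} example without misclassifying it, so the adjustments behave as a vertex selection. Consequently the resulting tree has at most $t = |E(G)| - \tau$ errors if and only if at least $\tau$ edges of~$G$ are covered by a vertex set of size at most~$\kappa$.

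For the parameter bounds, each \lneg{} edge example differs from the \lpos{} example in exactly the two features associated with its endpoints, giving $\delta_{\max}=2$; and applying the same reasonability patch as in \Cref{cor:adj-np-hard-const-delta-d-t}, which introduces at most one additional value per feature beyond~$\{0,1\}$, yields $D=3$. Since $k=\kappa$, the W[1]-hardness of \textsc{Partial Vertex Cover} parameterized by~$\kappa$ transfers directly to \pAdj{} parameterized by~$k$. I do not foresee any significant technical obstacle here, as the whole reduction is a cosmetic variation of \Cref{cor:adj-np-hard-const-delta-d-t} in which the stronger source problem \textsc{Partial Vertex Cover} provides the stronger (parameterized) lower bound, with the edge-example construction automatically ensuring $\delta_{\max}=2$.
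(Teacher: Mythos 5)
Your overall approach is the same as the paper's (a reduction from \textsc{Partial Vertex Cover} built on the chain-of-cuts gadget of \Cref{thm:adj-hard-par-k-const-d-t}), but there is a genuine gap in your construction: you keep \emph{a single} all-ones \lpos{} example while raising the error budget to $t = |E(G)| - \tau$, which is positive in all nontrivial instances. With only one \lpos{} example and $t \ge 1$, the reduced instance is always a yes-instance regardless of the graph: one can simply relabel the unique \lpos{} leaf to \lneg{} (recall the problem allows arbitrary relabeling of leaves), or equivalently adjust the threshold of the first cut to a value $\ge 1$ so that every example falls into its left \lneg{} leaf. Either way, every \lneg{} edge example is correctly classified and only the single \lpos{} example is misclassified, for a total of $1 \le t$ errors, with no relation to whether $G$ has $\kappa$ vertices covering $\tau$ edges. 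Your sentence ``no threshold can be adjusted past the value taken by the \lpos{} example without misclassifying it'' is only a valid constraint when $t = 0$, as in \Cref{thm:adj-hard-par-k-const-d-t} and \Cref{cor:adj-np-hard-const-delta-d-t}; it does not carry over once the budget is positive.

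The paper closes exactly this hole by creating $t+1$ copies of the all-ones \lpos{} example, so that sacrificing the \lpos{} leaf costs $t+1 > t$ errors and the only way to stay within budget is to leave it \lpos{} and keep all thresholds at most $1$; the misclassifications are then precisely the uncovered edges. Incorporating these $t+1$ copies (and then arguing, as you do, that adjusted cuts correspond to selected vertices and a \lneg{} edge example is saved iff its edge is covered) repairs your argument; the parameter bounds $k=\kappa$, $\delta_{\max}=2$, and the claimed domain size are unaffected by the duplication.
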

\appendixproof{cor:adj-w-hard-k-const-d-t}{
\begin{proof}
  The reduction is similar to \Cref{thm:adj-hard-par-k-const-d-t} and \Cref{cor:adj-np-hard-const-delta-d-t} but we reduce from the W[1]-hard problem \textsc{Partial Vertex Cover}. Take an arbitrary \textsc{Partial Vertex Cover} instance $(G, \kappa, \tau)$ where we look for a subset of~$\kappa$ vertices from the graph~$G = (V, E)$ such that at most~$\tau$ edges are uncovered. We perform the reduction similarly to \Cref{thm:adj-hard-par-k-const-d-t}, treating $V$ as the universe and $E$ as the family of sets, each of which is of size two. The differences to that reduction are as follows: let $t = \tau$ and create $t + 1$ \lpos{} examples $e$ with $e[d_i] = 1$ for all $i \in U$. This prevents us from increasing any threshold above $1$ as we would get more than $t$ errors. On the other hand, we can now misclassify at most $\tau = t$ examples corresponding to the edges of $G$. The rest of the proof then goes analogously.
\end{proof}
}

\looseness=-1 To achieve our final two results, we again reduce from \textsc{Hitting Set/Vertex Cover}. 
%This time, the idea is that 
Each cut starts in an \emph{undecided} state (threshold $0$) and has to be \emph{included} (threshold~$1$) or \emph{not included} (threshold~$-1$) in the hitting set.
Including a cut yields 1~error and thus only a fixed number of cuts can be activated.

\begin{theorem}[\appref{thm:adj-hard-par-t-const-ell-d}]
\label{thm:adj-hard-par-t-const-ell-d}
  \pAdj\ is W[2]-hard for $t$ even if $\ell = 0$, $D = 3$. Under ETH, the problem cannot be solved in time $\OO(|\mathcal{I}|^{o(t)})$, and under SETH, the problem cannot be solved in time $\OO(s^{t - \epsilon})$ for any $\epsilon > 0$.
\end{theorem}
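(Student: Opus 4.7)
The plan is to reduce from \textsc{Hitting Set}, parameterized by the solution size~$\kappa$, along the same line as \Cref{thm:adj-hard-par-k-const-d-t} but shifting the hardness burden from $k$ to $t$. Given $(U, \mathcal{S}, \kappa)$ with $U = [n]$, I would build a chain decision tree with $n$ cuts $c_1, \ldots, c_n$, where $c_i$ tests feature $d_i$ at initial threshold~$0$, each left child is a \lneg{} leaf, and the right child of $c_n$ is a \lpos{} leaf. For each $S \in \mathcal{S}$ I would insert $\kappa+1$ copies of a \lneg{} example $e_S$ with $e_S[d_i] = 1$ if $i \in S$ and $e_S[d_i] = 2$ otherwise; for each $i \in [n]$ I would insert one \lpos{} \emph{forcing} example $f_i$ with $f_i[d_i] = 0$ and $f_i[d_j] = 2$ for $j \ne i$. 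Finally, I would set $k = s = n$ (so $\ell = 0$) and $t = \kappa$. Feature values lie in $\{0,1,2\}$, giving $D = 3$.

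The intended correspondence is that the set $H = \{i : \tau_i \ge 1\}$ of cuts with ``large'' new threshold forms a hitting set. To verify it, I would classify each cut's new threshold $\tau_i$ into four behaviours: \emph{not include} ($\tau_i < 0$), \emph{undecided} ($\tau_i \in [0, 1)$), \emph{include} ($\tau_i \in [1, 2)$), and \emph{all-left} ($\tau_i \ge 2$). Assuming no cut is all-left, $f_i$ reaches the \lpos{} leaf iff $c_i$ is not include (so that $f_i[d_i] = 0 > \tau_i$ sends it right at $c_i$, while $f_i[d_j] = 2 > \tau_j$ handles the remaining cuts), and $e_S$ reaches a \lneg{} leaf iff some $j \in S$ has $c_j$ include (so that $e_S[d_j] = 1 \le \tau_j$ sends it left). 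The error count is therefore $|\{i : \tau_i \ge 0\}| + (\kappa+1)\cdot|\{S \in \mathcal{S} : S \cap H = \emptyset\}|$. Bounding this by $\kappa$ forces no missed sets (each costs $\kappa+1$) and $|H| \le \kappa$, so $H$ is a hitting set of size at most $\kappa$. Conversely, given such~$H$, the assignment $\tau_i = 1$ for $i \in H$ and $\tau_i = -1$ for $i \notin H$ yields exactly $|H| \le \kappa$ errors using $n = k$ adjustments.

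The main obstacle I would expect is ruling out the ``all-left'' behaviour, which a priori could route every $e_S$ to a \lneg{} leaf in one stroke and thereby bypass the hitting-set structure. The key observation is that a single $\tau_j \ge 2$ at cut $c_j$ simultaneously routes every forcing example $f_i$ with $i \ge j$ to the left at $c_j$ (since $f_i[d_j] \in \{0, 2\} \le \tau_j$), causing at least $n - j + 1$ misclassifications; by padding the universe so that $n > \kappa$, any all-left choice exceeds the budget $t = \kappa$ and can be discarded. Once this is settled, the ETH and SETH lower bounds for \textsc{Hitting Set} transfer directly since the constructed instance has size polynomial in $|U| + \sum_{S \in \mathcal{S}} |S|$, with $t = \kappa$ and $s = n = |U|$.
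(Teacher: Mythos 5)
Your overall strategy matches the paper's: both reduce from \textsc{Hitting Set} on the same chain-shaped tree, encode ``include / undecided / exclude'' as threshold ranges, use $\kappa+1$ copies of each set example to make missing a set prohibitive, and charge one error per included element via a per-element \lpos{} example. However, there is a genuine gap: your construction ignores that \pAdj{} allows \emph{arbitrary relabeling of the leaves}, and under relabeling it collapses. Concretely, leave every threshold in $[0,1)$ (e.g., keep the initial value $0$, or nudge each to $0.5$ if exactly $k$ adjustments must be spent). Then every forcing example $f_i$ turns left at $c_i$ (since $f_i[d_i]=0\le\tau_i$) and is the \emph{only} example in the leaf $L_i$, while every set example has all feature values in $\{1,2\}>\tau_j$ and falls through to the rightmost leaf $R$. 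Relabeling each $L_i$ to \lpos{} and $R$ to \lneg{} yields $0\le t$ errors for \emph{every} \textsc{Hitting Set} instance, so the backward direction of your reduction fails. The paper's construction is immune to exactly this attack for two reasons your variant drops: (i) its \lneg{} set examples have value $0$ on coordinates of $S$, so they are caught by left turns and populate the left (\lneg) leaves rather than drifting to $R$; and (ii) it adds $t+1$ copies of an all-ones \lpos{} example that sit in $R$ and pin its label to \lpos, so the set examples cannot be rescued by recoloring $R$. With those anchors in place, every leaf that a universe/forcing example shares with $t+1$ copies of a set example is forced to stay \lneg, which is what makes ``one error per used element of the hitting set'' unavoidable.

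A secondary, more easily repaired slip: your bound of $n-j+1$ misclassifications for an all-left cut $c_j$ degenerates to $1$ when $j=n$, so ``pad so that $n>\kappa$'' does not by itself discard that case. The correct observation (absent relabeling) is stronger: if any $\tau_j\ge 2$ then \emph{no} forcing example can reach the unique \lpos{} leaf, since every $f_i$ with $i\ne j$ has $f_i[d_j]=2\le\tau_j$ and $f_j$ has $f_j[d_j]=0\le\tau_j$, giving $n>\kappa$ errors. But note that this argument, too, presupposes fixed leaf labels, so it must be revisited once the relabeling issue is addressed.
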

\appendixproof{thm:adj-hard-par-t-const-ell-d}{
\begin{proof}
  We again reduce from \textsc{Hitting Set}. This time, the idea is that each cut is initially in an \emph{undecided} state (threshold $0$) and has to be \emph{included} (threshold $1$) or \emph{not included} (threshold $-1$) in the hitting set. Our initial decision tree is constructed the same way as in \Cref{thm:adj-hard-par-k-const-d-t}. However, our examples are different.
  
  We add a single \lpos{} \emph{universe example} $e$ for each element~$u \in U$ such that $e[d_u] = -1$ and otherwise $e[d_i] = 1$ for all~$i \in U$ with $i \neq u$. These examples are initially misclassified. For each set~$S$ in~$\mathcal{S}$, we create $t + 1$ copies of a \lneg{} example $e$ such that $e[d_i] = 0$ if $i \in S$, and $e[d_i] = 1$ otherwise. These are also initially misclassified. Finally, we create $t + 1$ copies of a \lpos{} example $e'$ with $e'[d_i] = 1$ for all $i \in U$. These are initially correctly classified but get misclassified if any threshold is adjusted to be more than $1$. Finally, set $k = |U|$ and $t = \kappa$.

  % \begin{itemize}
  %   \item Universe element examples (all red): $- \epsilon$ in feature of element, otherwise $1 + \epsilon$
  %   \item Set examples (all blue): $0 + \epsilon$ in features of elements that set contain, otherwise $1 + \epsilon$
  %   \item (To make reasonable (each leaf more correct than incorrect), add dummy examples with large / small values for each leaf.)
  % \end{itemize}

  We again claim that this \pAdj{} instance has a solution if and only if the initial \textsc{Hitting Set} instance has a solution. The other direction goes almost identically to the proof of~\Cref{thm:adj-hard-par-k-const-d-t}; if a suitable hitting set $H$ exists, we increase the threshold of the cut $c_i$ to $1$ for all~$i \in H$ and decrease the threshold to $-1$ for cuts $c_i$ with $i \in U \setminus H$. Consequently, we get $|H| \le \kappa = t$ errors.  

  Assume now instead that the \textsc{Hitting Set} instance does not have a solution and suppose that \pAdj{} has a solution. We know that at most $t = \kappa$ cuts can have threshold above $-1$, since otherwise more than $t$ universe examples get misclassified. We can argue identically to \Cref{thm:adj-hard-par-k-const-d-t} that if $C$ is the set of cuts whose scores are above $-1$ and $H = \{i \colon c_i \in C\}$, then there exists some set $S \in \mathcal{S}$ that $H$ does not hit and the corresponding example $e$ is misclassified. Since there are $t + 1$ copies of that example, this would result in us exceeding the error budget. Thus, no solution to \pAdj{} exists.

  Thus the reduction is correct.
  The ETH and SETH-based lower bounds for \pAdj\ follow in an analogous way to \Cref{thm:adj-hard-par-k-const-d-t}.
\end{proof}
}

\begin{corollary}[\appref{cor:adj-np-hard-const-ell-d-delta}]
\label{cor:adj-np-hard-const-ell-d-delta}
  \pAdj\ is NP-hard even if $\ell = 0$, $D = 3$, $\delta_{\max} = 3$.
\end{corollary}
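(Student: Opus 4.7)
The plan is to combine the construction from \Cref{thm:adj-hard-par-t-const-ell-d} with the idea from \Cref{cor:adj-np-hard-const-delta-d-t}: namely, reduce from \textsc{Vertex Cover} rather than \textsc{Hitting Set}. Since \textsc{Vertex Cover} is already NP-hard, this immediately gives NP-hardness, and since every set (i.e.\ every edge) has size exactly two, the coordinate profile of the examples becomes much sparser, which is what bounds $\delta_{\max}$.

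Concretely, given an instance $(G,\kappa)$ of \textsc{Vertex Cover} with $G=(V,E)$, I would apply verbatim the construction of \Cref{thm:adj-hard-par-t-const-ell-d}, treating $V$ as the universe and $E$ as the family $\mathcal{S}$. The initial decision tree is the chain of $|V|$ cuts $c_1,\ldots,c_{|V|}$ with threshold $0$, and I keep the three types of examples: the \lpos{} universe examples $e_u$ (with $e_u[d_u]=-1$ and $e_u[d_i]=1$ otherwise), the $t+1$ copies of the \lneg{} set examples $e_S$ (with $e_S[d_i]=0$ for $i \in S$ and $e_S[d_i]=1$ otherwise), and the $t+1$ copies of an all-ones \lpos{} example. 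Set $k=|V|$ and $t=\kappa$. The correctness argument is the same as in \Cref{thm:adj-hard-par-t-const-ell-d}: the ``included'' cuts (threshold adjusted to $1$) must form a vertex cover of $G$, and at most $t=\kappa$ of them may be included without misclassifying more than $t$ universe examples.

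The only new thing to check is that $\delta_{\max}\le 3$. I would bound this by inspecting all pairs of examples of different classes. A universe example $e_u$ versus a set example $e_S$ differs in at most $|S|+1 \le 3$ coordinates (at most one coordinate at $u$, plus one differing coordinate for each element of $S \setminus \{u\}$). An all-ones \lpos{} example versus a set example $e_S$ differs in exactly $|S|=2$ coordinates. No other cross-class pairs exist. Hence $\delta_{\max}\le 3$. Combined with $D=3$ (thresholds $-1,0,1$) and $\ell=0$ (every cut is adjusted in any solution, by the same argument as in \Cref{thm:adj-hard-par-t-const-ell-d}), this yields the claim. Finally, to make the instance reasonable, I would add the padding examples used in the proof of \Cref{thm:adj-hard-par-k-const-d-t}; these examples only appear in the ``dead'' leaves and do not change $\delta_{\max}$ because they use values $\pm\infty$ in features already governing the class separation. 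The main (minor) obstacle is just this last bookkeeping step for reasonability without inflating $\delta_{\max}$, which is routine once the padding is placed carefully.
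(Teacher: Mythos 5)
Your proposal matches the paper's proof exactly: the paper also obtains this corollary as a direct adaptation of \Cref{thm:adj-hard-par-t-const-ell-d}, reducing from \textsc{Vertex Cover} (edges as size-two sets, as in \Cref{cor:adj-np-hard-const-delta-d-t}) so that $\delta_{\max}\le 3$ while $D=3$, $\ell=s-k=0$, and NP-hardness follows from \textsc{Vertex Cover}. Your $\delta_{\max}$ accounting and the reasonability padding remark coincide with what the paper does (it likewise defers reasonability to the padding of \Cref{thm:adj-hard-par-k-const-d-t}), so nothing further is needed.
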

\appendixproof{cor:adj-np-hard-const-ell-d-delta}{
\begin{proof}
  Direct adaptation of \Cref{thm:adj-hard-par-t-const-ell-d} by reducing from \textsc{Vertex Cover} analogously to \Cref{cor:adj-np-hard-const-delta-d-t}.
\end{proof}
}

\section{Experiments}
\label{sec:experiments}
\appendixsection{sec:experiments}

\looseness=-1
The goals of our small-scale empirical study were as follows:
(1) Assess whether our algorithms can in principle be used to study the local-search neighborhoods of decision trees that are computed from real-world benchmark data by established heuristics.
(2) Evaluate the potential performance of local search in optimizing existing heuristic decision trees.
(3) Assess whether performing pruning and local search simultaneously is superior to the heuristics in terms of classification error.
% (2) Find indication as to whether heuristically computed decision trees can be optimized by local search.
% (3) Find indication as to whether optimal local search could be incorporated into heuristics so as to improve the resulting trees.
The experiments are not designed to be conclusive; instead they have a proof-of-concept nature and aim to spur follow-up work.
% More specifics below.
% To this end, we took data from established benchmark data sets, computed decision trees using the well-known J48 implementation in WEKA of the C4.5 heuristic, implemented an algorithm based on the dynamic program from \Cref{thm:adj+exc-fpt-d-D}, and evaluated the results.

% Dataset statistics - auto-generated
\newcommand{\totalDatasets}{47}
\newcommand{\minInstances}{72}
\newcommand{\maxInstances}{5404}
\newcommand{\meanInstances}{655.09}
\newcommand{\medianInstances}{302.00}

% Tree statistics - generated by compute-tree-statistics.py
\newcommand{\totalTrees}{94}

\newcommand{\minTreeSize}{2}
\newcommand{\maxTreeSize}{607}
\newcommand{\meanTreeSize}{71.23}
\newcommand{\medianTreeSize}{29.00}

\newcommand{\minDimensions}{1}
\newcommand{\maxDimensions}{88}
\newcommand{\meanDimensions}{11.35}
\newcommand{\medianDimensions}{8.00}

\newcommand{\minDistinctThresholds}{1}
\newcommand{\maxDistinctThresholds}{321}
\newcommand{\meanDistinctThresholds}{17.26}
\newcommand{\medianDistinctThresholds}{6.00}

\newcommand{\minDimensionsPath}{1}
\newcommand{\maxDimensionsPath}{34}
\newcommand{\meanDimensionsPath}{7.61}
\newcommand{\medianDimensionsPath}{7.00}

\newcommand{\minThresholdsPath}{1}
\newcommand{\maxThresholdsPath}{42}
\newcommand{\meanThresholdsPath}{10.13}
\newcommand{\medianThresholdsPath}{8.00}

\newcommand{\minErrors}{0}
\newcommand{\maxErrors}{302}
\newcommand{\meanErrors}{20.66}
\newcommand{\medianErrors}{1.00}

% \paragraph{Data sets and initial decision trees.}
We used \totalDatasets\ datasets
from the Penn Machine Learning Benchmarks library~\cite{romano2021pmlb}, including 32 previously used for minimum-size tree computation~\cite{BessiereHO09,NarodytskaIPM18,SKSS24} and additional larger datasets. % \todo[inline]{Frank, can you update this?
% \newline
% \newline
% F: compared to pruning: 7 additional data sets with $\le 9$ features each
% \newline
% \newline
% \# of examples: 672, 250, 132, 500, 160, 601, 1479
% \newline
% \newline
% From decision tree computation to pruning: 8 additional data sets, but we got rid of 3 data sets (auto, cloud, spect) since there the heuristics outputted only a tree with a single node (we need to add this information in the pruning paper; I made a TODO)
% \newline
% \newline
% 8 additional data sets with \# of features: 100, 44, 34, 5, 8, 2, 8, 10
% \newline
% \newline
% \# of examples: 1212, 959, 351, 5404, 2207, 5300, 768, 1066}
The datasets range from \minInstances\ to \maxInstances\ examples (mean \meanInstances, median \medianInstances), see the appendix for more details.
We computed unpruned and pruned trees using WEKA 3.8.5's \cite{FrankHHKP05} C4.5 implementation~\cite{Quinlan93} J48.
For the pruned trees we used J48's subtree replacement heuristic; more details in the appendix.

% \paragraph{Implemented algorithm and computation environment.}
We implemented an algorithm based on \Cref{thm:adj+exc-fpt-d-D} that, given a dataset and a decision tree $T$, can compute for every tuple $(\kadj, \kexch, \krepl)$ the minimum number of errors attainable by a tree obtained from $T$ with at most $\kadj$ adjustment, $\kexch$ exchange, and $\krepl$ subtree replacement operations (where \krepl\ counts the number of cut nodes).
Since naive enumeration of all threshold sequences in the dynamic program is wasteful, we instead start, for each node $v$ in the tree, with the sequence of thresholds implied by the cuts above $v$ and then compute all modifications of this sequence obtainable by the remaining number of adjustment and replacement operations above $v$.
This maintains optimality.
Nevertheless, there is large potential for heuristic improvements.
We implemented the algorithm in Python 3.10.12 and all experiments were carried out on a personal computer running Ubuntu Linux 22.04 with an 8-core Intel Core i7-9700 processor and 16GiB RAM (running up to eight instances of the algorithm in parallel).

% Solver statistics - auto-generated

\newcommand{\totalInstances}{47}
\newcommand{\kadjzerokexchtwoSolved}{17}
\newcommand{\kadjzerokexchtwoPercentage}{36.17}
\newcommand{\kadjtwokexchzeroSolved}{33}
\newcommand{\kadjtwokexchzeroPercentage}{70.21}

% auto-generated by eval-pruned-tree-errors.py
\begin{table}[t]
\centering
\caption{Error rates for different operations. Dashes indicate timeouts.}
\label{tab:error_reduction}
\begin{tabular}{lccccc}
\toprule
Dataset & Initial & $\kadj = 1$ & $\kadj = 2$ & $\kexch = 1$ & $\kexch = 2$ \\ \midrule 
banana & 249 & 248 & -- & -- & -- \\ 
breast-cancer & 31 & 30 & 30 & 30 & 29 \\ 
cleve & 15 & 14 & 14 & 14 & -- \\ 
cleveland & 13 & 12 & 12 & 12 & -- \\ 
cleveland-nominal & 23 & 23 & 23 & 22 & 22 \\ 
colic & 15 & 15 & 15 & 14 & -- \\ 
ecoli & 10 & 8 & 8 & 8 & 8 \\ 
heart-c & 15 & 14 & 14 & 14 & -- \\ 
lymphography & 5 & 5 & 5 & 5 & 4 \\ 
profb & 42 & 41 & -- & 41 & -- \\ 
\bottomrule
\end{tabular}
\end{table}
% Error reduction statistics - generated by eval-pruned-tree-errors.py

% Exchange Operations
\newcommand{\impExchangeAnyInstances}{9}
\newcommand{\impExchangeAnyList}{breast-cancer, cleve, cleveland, cleveland-nominal, colic, ecoli, heart-c, lymphography, profb}

\newcommand{\impExchangeOneInstances}{8}
\newcommand{\impExchangeOneList}{breast-cancer, cleve, cleveland, cleveland-nominal, colic, ecoli, heart-c, profb}

\newcommand{\impExchangeTwoInstances}{4}
\newcommand{\impExchangeTwoList}{breast-cancer, cleveland-nominal, ecoli, lymphography}

% Adjustment Operations
\newcommand{\impAdjustmentAnyInstances}{7}
\newcommand{\impAdjustmentAnyList}{banana, breast-cancer, cleve, cleveland, ecoli, heart-c, profb}

\newcommand{\impAdjustmentOneInstances}{7}
\newcommand{\impAdjustmentOneList}{banana, breast-cancer, cleve, cleveland, ecoli, heart-c, profb}

\newcommand{\impAdjustmentTwoInstances}{5}
\newcommand{\impAdjustmentTwoList}{breast-cancer, cleve, cleveland, ecoli, heart-c}

% Combined Statistics
\newcommand{\totalImprovedInstances}{10}
\newcommand{\allImprovedList}{banana, breast-cancer, cleve, cleveland, cleveland-nominal, colic, ecoli, heart-c, lymphography, profb}

\looseness=-1
% \paragraph{Implementation feasibility and local optimality.}
We first checked whether the heuristically computed real-world pruned decision trees are locally optimal.
That is, for each of the \totalDatasets\ datasets we took the pruned tree and checked whether their misclassifications could be reduced by up to two adjustment or exchange operations.
For adjustment, \kadjtwokexchzeroSolved\ (\kadjtwokexchzeroPercentage\%) thus obtained problems could be solved within a 1h time limit, for exchange it was \kadjzerokexchtwoSolved\ (\kadjzerokexchtwoPercentage\%).
The heuristically computed trees were 2-optimal or close to 2-optimal, that is, they can hardly be improved with up to two local-search operations:
with at most two exchanges, \impExchangeAnyInstances{} trees could be improved, and with up to two adjustments, \impAdjustmentAnyInstances{} trees. %\todo{I guess you mean up to two operations?}
The reduction in misclassifications was at most 2.
\cref{tab:error_reduction} shows the trees which could be improved.
To summarize, for small number of local search operations the implementation is feasible and it seems that heuristically computed trees are close to being locally optimal.
% \todo[inline]{F: $d_R$ which is used in \cref{tab:tree-analysis-compact} is not yet defined (also I don;t know how useful it is since we can apply a cut exchange)} ms: removed.

\newcommand{\numdatasetskadjprune}{18}
\newcommand{\numdatasetsexchprune}{24}
\newcommand{\avgprunablenodeskadj}{1.19}
\newcommand{\avgprunablenodeskadjtwo}{1.40}
\newcommand{\avgprunablenodeskexch}{1.50}
\newcommand{\avgprunablenodeskexchtwo}{1.78}
\newcommand{\countkadj}{16}
\newcommand{\countkadjtwo}{10}
\newcommand{\countkexch}{20}
\newcommand{\countkexchtwo}{9}

% LaTeX commands for tree pruning statistics
% Generated automatically from improvements data
% Using best improvements for each instance regardless of parameter value
\newcommand{\kadjImprovableCount}{10}
\newcommand{\kadjErrorReductionMean}{5.16\%}
\newcommand{\kadjErrorReductionMedian}{2.93\%}
\newcommand{\kexchImprovableCount}{17}
\newcommand{\kexchErrorReductionMean}{8.92\%}
\newcommand{\kexchErrorReductionMedian}{4.55\%}
\newcommand{\totalImprovableInstances}{17}

\looseness=-1
% \paragraph{Combined pruning and local search.}
We also looked at combining pruning with local search.
We first investigated whether local search enables further pruning heuristically pruned trees:
For \numdatasetskadjprune\ datasets, this was possible after adjustments (avg.\ \avgprunablenodeskadjtwo\ nodes) and for \numdatasetsexchprune\ after exchanges (avg.\ \avgprunablenodeskexchtwo\ nodes)---with one tree size nearly halving after a single exchange.
Starting with unpruned trees and combining pruning with local search gave better results than sequential application: \kadjImprovableCount\ and \kexchImprovableCount\ trees showed error reductions of \kadjErrorReductionMean\ and \kexchErrorReductionMean\ on average, respectively.
We also computed full pareto fronts for the tradeoff between pruning and errors (\cref{fig:pareto}, appendix); here local search becomes more effective with aggressive pruning.
This indicates that heuristics typically select tradeoffs minimally amenable to local search; full details in the appendix. 
Overall, our algorithms provide new insights indicating local search becomes increasingly valuable with more extensive pruning than commonly applied.
Note that to obtain the above results we need algorithms that find the optimum, which we here designed for the first time.

\toappendix{

\begin{table*}[!ht]
\centering
\caption{Dataset statistics.}
\label{tab:dataset-statistics}
\begin{tabular}{lrrrrr}
\toprule
Dataset & \# Examples & \# Features & Class 0 & Class 1 & Class Ratio \\
\midrule
analcatdata\_boxing2 & 132 & 3 & 61 & 71 & 1.16 \\
appendicitis & 106 & 7 & 85 & 21 & 0.25 \\
australian & 690 & 18 & 383 & 307 & 0.8 \\
backache & 180 & 55 & 155 & 25 & 0.16 \\
banana & 5300 & 2 & 2924 & 2376 & 0.81 \\
biomed & 209 & 14 & 75 & 134 & 1.79 \\
breast-cancer & 266 & 31 & 188 & 78 & 0.41 \\
bupa & 341 & 5 & 168 & 173 & 1.03 \\
cars & 392 & 12 & 147 & 245 & 1.67 \\
cleve & 302 & 27 & 164 & 138 & 0.84 \\
cleveland & 303 & 27 & 139 & 164 & 1.18 \\
cleveland-nominal & 130 & 17 & 61 & 69 & 1.13 \\
colic & 357 & 75 & 134 & 223 & 1.66 \\
contraceptive & 1358 & 21 & 764 & 594 & 0.78 \\
corral & 160 & 6 & 90 & 70 & 0.78 \\
dermatology & 366 & 129 & 254 & 112 & 0.44 \\
diabetes & 768 & 8 & 268 & 500 & 1.87 \\
ecoli & 327 & 7 & 184 & 143 & 0.78 \\
flare & 1066 & 10 & 884 & 182 & 0.21 \\
glass & 204 & 9 & 128 & 76 & 0.59 \\
glass2 & 162 & 9 & 86 & 76 & 0.88 \\
haberman & 283 & 3 & 73 & 210 & 2.88 \\
hayes-roth & 84 & 15 & 59 & 25 & 0.42 \\
heart-c & 302 & 27 & 138 & 164 & 1.19 \\
heart-h & 293 & 29 & 106 & 187 & 1.76 \\
heart-statlog & 270 & 25 & 150 & 120 & 0.8 \\
hepatitis & 155 & 39 & 123 & 32 & 0.26 \\
Hill\_Valley\_without\_noise & 1212 & 100 & 600 & 612 & 1.02 \\
hungarian & 293 & 29 & 187 & 106 & 0.57 \\
ionosphere & 351 & 34 & 126 & 225 & 1.79 \\
irish & 500 & 5 & 278 & 222 & 0.8 \\
lupus & 86 & 3 & 52 & 34 & 0.65 \\
lymphography & 148 & 50 & 67 & 81 & 1.21 \\
molecular\_biology\_promoters & 106 & 228 & 53 & 53 & 1.0 \\
monk2 & 601 & 6 & 395 & 206 & 0.52 \\
new-thyroid & 215 & 5 & 65 & 150 & 2.31 \\
phoneme & 5404 & 5 & 3818 & 1586 & 0.42 \\
pima & 768 & 8 & 500 & 268 & 0.54 \\
postoperative-patient-data & 72 & 22 & 50 & 22 & 0.44 \\
prnn\_synth & 250 & 2 & 125 & 125 & 1.0 \\
profb & 672 & 9 & 448 & 224 & 0.5 \\
schizo & 340 & 14 & 140 & 200 & 1.43 \\
soybean & 622 & 133 & 545 & 77 & 0.14 \\
tae & 106 & 5 & 71 & 35 & 0.49 \\
titanic & 2099 & 8 & 1418 & 681 & 0.48 \\
tokyo1 & 959 & 44 & 346 & 613 & 1.77 \\
yeast & 1479 & 8 & 1050 & 429 & 0.41 \\
\bottomrule
\end{tabular}
\end{table*}

\paragraph{Data sets and initial decision trees.} We used \totalDatasets\ datasets from the Penn Machine Learning Benchmarks library~\cite{romano2021pmlb}.
32 of the datasets were used before for computing minimum-size trees~\cite{BessiereHO09,NarodytskaIPM18,SKSS24} and since the number of examples was usually small we added further larger datasets.
The datasets range from \minInstances\ to \maxInstances\ examples (mean \meanInstances, median \medianInstances); for the full details, see \cref{tab:dataset-statistics}.
We transformed the data sets as follows (similarly to Janota and Morgado~\cite{JanotaM20}):
First, we replaced each categorical feature by a set of new binary features indicating whether an example is in the category.
Second, we converted each instance into a binary classification problem by making two classes, one of which contains all examples of the largest original class and one which contains all remaining examples.
Finally, if two examples of different classes had the same value in all features, we removed one of them arbitrarily.

%\paragraph{Implemented algorithm and computation environment.}
%We implemented an algorithm based on \Cref{thm:adj+exc-fpt-d-D} that, given a dataset and a decision tree $T$, can compute for every tuple $(\kadj, \kexch, \krepl)$ the minimum number of errors attainable by a tree obtained from $T$ with at most $\kadj$ adjustment, $\kexch$ exchange, and $\krepl$ subtree replacement operations (where \krepl\ counts the number of cut nodes).
%Since naive enumeration of all threshold sequences in the dynamic program is wasteful, we instead start, for each node $v$ in the tree, with the sequence of thresholds implied by the cuts above $v$ and then compute all modifications of this sequence obtainable by the remaining number of adjustment and replacement operations above $v$.
%This maintains optimality.
%Nevertheless, there is large potential for heuristic improvements.
%We implemented the algorithm in Python 3.10.12 and all experiments were carried out on a personal computer running Ubuntu Linux 22.04 with an 8-core Intel Core i7-9700 processor and 16GiB RAM (running up to eight instances of the algorithm in parallel).\todo{F: the exact same paragraph appears in the main part; so we just remove it here?}

% generated by convert-tree-stats-to-latex.py
\begin{table*}[!ht]
\centering
\caption{Decision trees used in our experiments: The first entry is for the unpruned tree $T$, the second for the tree $T$ computed by the replacement heuristic.}
\label{tab:tree-analysis-compact}
\small
\begin{tabular}{lrrrrr}
\toprule
Dataset & Size $s$ & \# Features $d$ used in $T$ & Domain $D$ used in $T$ & Errors \\
\midrule
analcatdata\_boxing2 & 48 / 9 & 3 / 3 & 11 / 5 & 0 / 17 \\
appendicitis & 15 / 10 & 6 / 6 & 5 / 3 & 0 / 2 \\
australian & 90 / 46 & 13 / 11 & 29 / 11 & 0 / 22 \\
backache & 26 / 13 & 13 / 9 & 7 / 2 & 0 / 7 \\
banana & 607 / 186 & 2 / 2 & 321 / 107 & 1 / 249 \\
biomed & 21 / 3 & 6 / 3 & 6 / 1 & 0 / 15 \\
breast-cancer & 95 / 31 & 25 / 21 & 8 / 6 & 2 / 31 \\
bupa & 111 / 72 & 5 / 5 & 21 / 18 & 0 / 25 \\
cars & 22 / 15 & 7 / 7 & 7 / 5 & 0 / 3 \\
cleve & 57 / 29 & 15 / 13 & 12 / 5 & 0 / 15 \\
cleveland & 55 / 31 & 16 / 13 & 10 / 6 & 0 / 13 \\
cleveland-nominal & 46 / 8 & 15 / 8 & 1 / 1 & 6 / 23 \\
colic & 51 / 28 & 27 / 18 & 6 / 4 & 0 / 15 \\
contraceptive & 486 / 120 & 21 / 21 & 33 / 21 & 10 / 217 \\
corral & 13 / 13 & 5 / 5 & 1 / 1 & 0 / 0 \\
dermatology & 5 / 3 & 4 / 3 & 1 / 1 & 0 / 2 \\
diabetes & 137 / 96 & 8 / 8 & 24 / 16 & 0 / 24 \\
ecoli & 25 / 5 & 5 / 3 & 11 / 2 & 0 / 10 \\
flare & 93 / 15 & 8 / 7 & 5 / 4 & 125 / 159 \\
glass & 28 / 26 & 7 / 7 & 7 / 7 & 0 / 1 \\
glass2 & 22 / 16 & 6 / 5 & 6 / 5 & 0 / 4 \\
haberman & 92 / 21 & 3 / 3 & 30 / 9 & 2 / 38 \\
hayes-roth & 14 / 12 & 11 / 10 & 1 / 1 & 0 / 1 \\
heart-c & 57 / 29 & 15 / 13 & 12 / 5 & 0 / 15 \\
heart-h & 57 / 32 & 20 / 18 & 13 / 6 & 0 / 14 \\
heart-statlog & 54 / 27 & 17 / 13 & 13 / 7 & 0 / 15 \\
hepatitis & 18 / 12 & 10 / 9 & 3 / 2 & 0 / 3 \\
Hill\_Valley\_without\_noise & 250 / 228 & 88 / 85 & 63 / 47 & 0 / 11 \\
hungarian & 57 / 32 & 19 / 19 & 13 / 6 & 0 / 14 \\
ionosphere & 21 / 19 & 12 / 11 & 4 / 4 & 0 / 1 \\
irish & 2 / 2 & 1 / 1 & 2 / 2 & 0 / 0 \\
lupus & 25 / 4 & 2 / 2 & 20 / 2 & 0 / 13 \\
lymphography & 23 / 14 & 18 / 11 & 1 / 1 & 0 / 5 \\
molecular\_biology\_promoters & 12 / 10 & 11 / 9 & 1 / 1 & 0 / 1 \\
monk2 & 40 / 40 & 6 / 6 & 1 / 1 & 0 / 0 \\
new-thyroid & 13 / 9 & 5 / 5 & 4 / 4 & 0 / 2 \\
phoneme & 504 / 341 & 5 / 5 & 159 / 99 & 0 / 98 \\
pima & 137 / 96 & 8 / 8 & 24 / 16 & 0 / 24 \\
postoperative-patient-data & 23 / 21 & 13 / 13 & 1 / 1 & 0 / 1 \\
prnn\_synth & 39 / 9 & 2 / 2 & 24 / 5 & 0 / 23 \\
profb & 185 / 115 & 9 / 9 & 24 / 17 & 0 / 42 \\
schizo & 83 / 69 & 12 / 12 & 15 / 10 & 0 / 8 \\
soybean & 28 / 13 & 22 / 12 & 1 / 1 & 0 / 8 \\
tae & 41 / 21 & 5 / 5 & 13 / 7 & 0 / 11 \\
titanic & 336 / 61 & 8 / 8 & 61 / 20 & 157 / 302 \\
tokyo1 & 46 / 34 & 24 / 22 & 10 / 5 & 0 / 6 \\
yeast & 315 / 125 & 8 / 7 & 43 / 24 & 0 / 129 \\
\bottomrule
\end{tabular}
\end{table*}

We computed unpruned and pruned trees using the C4.5 heuristic for decision-tree computation \cite{Quinlan93} implemented WEKA 3.8.5 \cite{FrankHHKP05}.
The unpruned trees were obtained by running WEKA's J48 classifier with the flags \texttt{-no-cv -B -M 0 -J -U}.
The pruned trees were computed by the \emph{replacement heuristic} implemented in J48 when run with the flags \texttt{-no-cv -B -M 0 -J -S}.
Overall, the tree size $s$ ranges from \minTreeSize\ to \maxTreeSize\ (mean \meanTreeSize, median \medianTreeSize); the number of features $d$ ranges from \minDimensions\ to \maxDimensions\ (mean \meanDimensions, median \medianDimensions); the domain size $D$ ranges from \minDistinctThresholds\ to \maxDistinctThresholds\  (mean \meanDistinctThresholds, median \medianDistinctThresholds)%%%%%; the maximum number of features $d_R$ on a root-to-leaf path ranges from 2 to 34 (mean 7.88, median 8)% ; the maximum number $D_T$ of thresholds on a path ranges from 1 to 42 (mean 9.95, median 8)
; the number of classification errors ranges from \minErrors\ to \maxErrors\ (mean \meanErrors, median \medianErrors).
%%%%%%%%%% These ranges show that, indeed, parameters $d, D, d_T$ are suitable for designing fixed-parameter algorithms.
For the full details, see \cref{tab:tree-analysis-compact}.

% Generated by eval-prune-pruned-trees-convert-to-latex.py
\begin{table}
\centering
\caption{Number of nodes for decision trees that can be pruned without increasing errors after local search. Dashes indicate timeouts.}
\label{tab:prunable_datasets}
\begin{tabular}{lcccccc}
\toprule
Dataset & Tree Size & Initial Errors & $\kadj = 1$ & $\kadj = 2$ & $\kexch = 1$ & $\kexch = 2$ \\
\midrule
banana & 607 & 249 & 1 & -- & -- & -- \\
breast-cancer & 95 & 31 & 1 & 1 & 1 & 2 \\
bupa & 111 & 25 & 1 & -- & 1 & -- \\
cleve & 57 & 15 & 2 & 2 & 2 & -- \\
cleveland & 55 & 13 & 2 & 2 & 2 & -- \\
cleveland-nominal & 46 & 23 & 0 & 0 & 1 & 1 \\
colic & 51 & 15 & 0 & 0 & 1 & -- \\
contraceptive & 486 & 217 & 1 & 1 & 1 & -- \\
corral & 13 & 0 & 0 & 0 & 6 & 6 \\
diabetes & 137 & 24 & 1 & -- & 1 & -- \\
ecoli & 25 & 10 & 1 & 2 & 1 & 2 \\
flare & 93 & 159 & 0 & 0 & 0 & 1 \\
glass & 28 & 1 & 1 & -- & 1 & -- \\
glass2 & 22 & 4 & 0 & 1 & 0 & -- \\
haberman & 92 & 38 & 1 & 1 & 1 & 1 \\
heart-c & 57 & 15 & 2 & 2 & 2 & -- \\
heart-h & 57 & 14 & 0 & 0 & 1 & -- \\
heart-statlog & 54 & 15 & 1 & 1 & 1 & -- \\
hungarian & 57 & 14 & 0 & 0 & 1 & -- \\
ionosphere & 21 & 1 & 1 & -- & -- & -- \\
lymphography & 23 & 5 & 0 & 0 & 0 & 1 \\
pima & 137 & 24 & 1 & -- & 1 & -- \\
postoperative-patient-data & 23 & 1 & 0 & 0 & 0 & 1 \\
profb & 185 & 42 & 1 & -- & 1 & -- \\
soybean & 28 & 8 & 0 & 0 & 2 & -- \\
tae & 41 & 11 & 0 & 1 & 0 & 1 \\
yeast & 315 & 129 & 1 & -- & 2 & -- \\
\bottomrule
\end{tabular}
\end{table}

\paragraph{Further pruning pruned trees.}
The above indicates that misclassifications may not be strongly reducable by local search.
We next wanted to know whether it is possible to further prune the trees after local search, keeping the number of misclassifications at most the same.
Indeed, in the solved problems, for \numdatasetskadjprune\ it was possible to prune further after at most two adjustments and for \numdatasetsexchprune\ this was possible after two exchanges.
The average nodes pruned were \avgprunablenodeskadjtwo\ and \avgprunablenodeskexchtwo, respectively.
In one case, the tree size could be almost halved after one exchange.
The full details for the improvable instances are shown in \cref{tab:prunable_datasets}.

% generated by eval-prune-unpruned-trees-convert-to-latex.py
\begin{table}[t]
\caption{Error numbers obtainable when combining pruning with local search. Dashes indicate timeouts.}
\label{tab:prune_unpruned_error_rates_table}
\centering
\begin{tabular}{lcccccc}
\toprule
Dataset & Pruned Nodes & Heur.\ Errors & $\kadj = 1$ & $\kadj = 2$ & $\kexch = 1$ & $\kexch = 2$ \\
\midrule
australian & 44 & 22 & 22 & -- & 21 & -- \\
breast-cancer & 64 & 31 & 30 & 30 & 30 & -- \\
cleve & 28 & 15 & 14 & 14 & 14 & -- \\
cleveland & 24 & 13 & 12 & 12 & 12 & -- \\
cleveland-nominal & 38 & 23 & 23 & 23 & 22 & 22 \\
colic & 23 & 15 & 15 & 15 & 14 & -- \\
contraceptive & 366 & 217 & 217 & -- & 216 & -- \\
dermatology & 2 & 2 & 2 & 2 & 2 & 1 \\
ecoli & 20 & 10 & 8 & 8 & 8 & -- \\
flare & 78 & 159 & 157 & 157 & 157 & 156 \\
haberman & 71 & 38 & 37 & 37 & 37 & 37 \\
heart-c & 28 & 15 & 14 & 14 & 14 & -- \\
lymphography & 9 & 5 & 5 & 5 & 5 & 4 \\
profb & 70 & 42 & 41 & -- & 41 & -- \\
soybean & 15 & 8 & 8 & 8 & 7 & -- \\
titanic & 275 & 302 & 301 & -- & 301 & -- \\
yeast & 190 & 129 & 127 & -- & 126 & -- \\
\bottomrule
\end{tabular}
\end{table}

\begin{figure}
  \centering
  \includegraphics{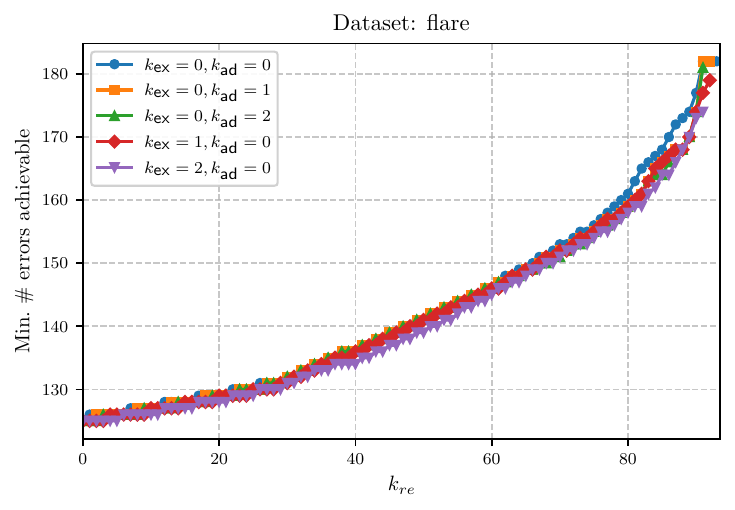}
  \includegraphics{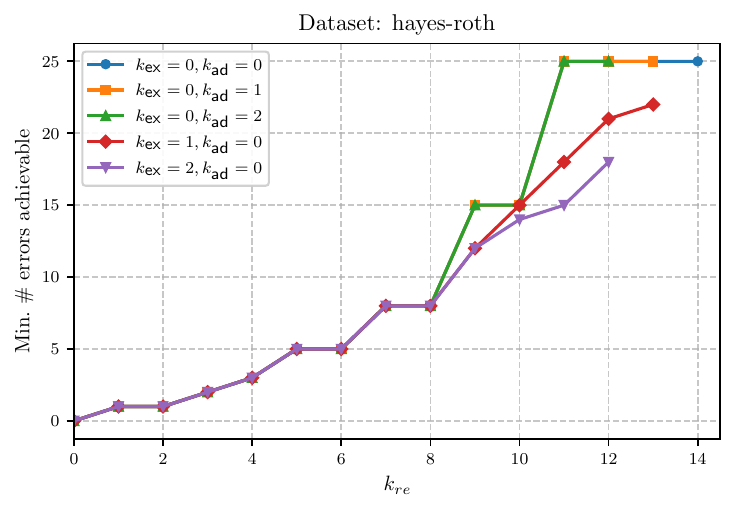}
  \caption{Pareto fronts for number of achievable errors vs.\ number of pruning operations \krepl\ for various local search parameters.}
  \label[figure]{fig:pareto}
\end{figure}

\paragraph{Combining pruning with local search.}
Next, we wanted to know whether we improve pruning perfomance over the heuristics.
Thus, we started with the unpruned heuristically computed trees, pruned them to the same size as the heuristically computed pruned trees while also allowing up to two local search operations in the process.
In the solved problems, this was the case for \kadjImprovableCount\ trees with adjustments and \kexchImprovableCount\ with exchanges, where the errors were improved by \kadjErrorReductionMean\ and \kexchErrorReductionMean\ on average, respectively.
The full details are shown in \Cref{tab:prune_unpruned_error_rates_table}.
Since the heuristic chooses only one possible tradeoff between errors and number of pruned nodes, we also looked at the full pareto fronts for these goals, defined by zero local search operations, one or two.
A plot of the pareto fronts for two representative instances is shown in \cref{fig:pareto}.
The behaviour of error decreases for local search is relatively consistent over different tradeoffs but local search becomes much more effective for large numbers of pruned nodes, especially for cut exchange.

\paragraph{Summary.}
Overall, we have indication that our algorithms can feasibly be used to gain new insights into the amenability of small real-world decision trees.
It seems that the heuristics choose tradeoffs between pruning and errors that are only very weakly amenable to local search.
(Note that such observations are only possible to make with algorithms that find the optimum values, as we have designed here for the first time.)
When pruning more nodes than the heuristics commonly choose, local search may be effective and important.
}

\section{Outlook}\label{sec:conclusions}

We gave a comprehensive analysis of the parameterized complexity of improving decision trees with the local search operations threshold adjustment and cut exchange.
We presented both algorithmic results and complexity-theoretic lower bounds for both  problems corresponding to the two operations.
Based on these algorithms we provided a proof-of-concept implementation and demonstrated that there is some potential---albeit limited---for improvement in the trees constructed by heuristics. The experiments were hampered by the computational hardness of the problems, which might be mitigable by designing new approaches to them, for example, by formulating them as a mixed integer program.

%\todo{say something about the results of the experiments here} 

While we settled the complexity for many parameter combinations for both problems, some combinations are still open.
For example is \pAdj{} FPT with respect to~$d+\ell+t$, or is \pExc{} FPT with respect to~$d+k+t$?
Investigating so-called ensembles of decision trees under the local search operations threshold adjustment and cut exchange is an interesting direction for future research.
Clearly, our hardness result transfer to ensembles, but for the algorithms this is not clear.
Moreover, on the practical side it is interesting to investigate whether the effect of local search for ensembles is stronger than for a single decision tree.
Also, it is interesting to study the complexity of other local search operations for decision trees which change the structure of the tree, for example subtree switching~\cite{Rivera22} or subtree substitution~\cite{SchidlerSzeider24}.
Moreover, since our new approach yields optimal solutions for local search of decision trees, our algorithms could be used to develop new genetic algorithms with more powerful crossover operations. 
Finally, currently minimal-size decision trees can only be computed for size up to~$s=20$~\cite{SKSS24} and a major hurdle are good lower bounds to abort parts of the branch-and-bound approach early. 
Our local search algorithm could help in strengthening existing lower bounds and thus make the learning of minimal-size decision trees more applicable.

%\section*{Open Questions}
%
%\begin{itemize}
%  \item Threshold Adjustment for $\ell + t$
%  \item Threshold Adjustment for $d + k + t$
%  \item Threshold Adjustment for $d + \ell + t$
%  \item Threshold Adjustment for $k + \delta_{\max} + t$
%  \item Threshold Adjustment for $\ell + \delta_{\max} + t$
%  \item Threshold Adjustment for $\ell + D + t$
%\end{itemize}

\begin{ack}
Juha Harviainen was supported by the Research Council of
Finland, Grant 351156. We also gratefully acknowledge support by the TU Wien International Office for hosting Juha Harvianen in Vienna.

 Frank Sommer was supported by the Alexander von Humboldt Foundation and partially by the Carl Zeiss Foundation, Germany, within the project ``Interactive Inference''.
\end{ack}

%\cleardoublepage

\clearpage
\newpage

\section*{Appendix}
\appendixProofText

\end{document}
%%% Local Variables:
%%% mode: LaTeX
%%% TeX-master: t
%%% End: